\definecolor{wine_red}{RGB}{228,48,64}
\definecolor{DSgray}{cmyk}{0,1,0,0}
\setlist[itemize]{leftmargin=5.5mm}
\renewcommand{\tilde}{\widetilde}
\renewcommand{\hat}{\widehat}
\def \endenv {\hfill\raisebox{1pt}{$\triangleleft$}\smallskip}
\newcommand\given[1][]{\:#1\vert\:}
\newcommand\sbr[1]{\left( #1 \right)}
\newcommand\mbr[1]{\left[ #1 \right]}
\newcommand\bbr[1]{\left\{ #1 \right\}}
\def \sumT {\sum_{t=1}^T}
\def \sumTT {\sum_{t=2}^T}
\def \A {\mathcal{A}}
\def \D {\mathcal{D}}
\def \E {\mathbb{E}}
\def \O {\mathcal{O}}
\def \R {\mathbb{R}}
\def \X {\mathcal{X}}
\def \a {\mathbf{a}}
\def \b {\mathbf{b}}
\def \g {\mathbf{g}}
\def \u {\mathbf{u}}
\def \x {\mathbf{x}}
\def \y {\mathbf{y}}
\def \xh {\hat{\x}}
\def \xb {\overline{\x}}
\def \xt {\tilde{\x}}
\def \epsilon {\varepsilon}
\def \indicator {\mathds{1}}
\def \is {i_{\star}}
\def \xs {\x_{\star}}
\let\norm\undefined % <-- "Undefine" \norm
\DeclarePairedDelimiter\norm{\lVert}{\rVert}
\newcommand\inner[2]{\langle #1, #2 \rangle}
\newcommand{\LineComment}[1]{\hfill$\rhd\ $\text{#1}}
\def \Reg {\textsc{Reg}}
\DeclareMathOperator*{\argmin}{arg\,min}
\newtheorem{myThm}{Theorem}
\newtheorem{myCor}{Corollary}
\newtheorem{myLemma}{Lemma}
\theoremstyle{definition}
\newtheorem{myAssum}{Assumption}
\newtheorem{myDef}{Definition}
\newtheorem{myRemark}{Remark}
\newtheoremstyle{proofsketchstyle}{}{} {} {} {\itshape} {.}{ } {\thmname{#1}\thmnote{ #3}} % 标题格式
\theoremstyle{proofsketchstyle}
\definecolor{wine_red}{RGB}{228,48,64}
\definecolor{DSgray}{cmyk}{0,1,0,0}
\newcommand{\pref}[1]{\prettyref{#1}}
\newcommand{\savehyperref}[2]{\texorpdfstring{\hyperref[#1]{#2}}{#2}}
\newcounter{romancounter}
\newcommand{\rom}[1]{\setcounter{romancounter}{#1}\textit{(\roman{romancounter})}}
\newcommand{\toccontents}{\@starttoc{toc}}
\begin{document}

% \begin{center}
%   {\bf \LARGE Contents}
% \end{center}
% \setcounter{tocdepth}{3}
% \toccontents

\newpage
\title{Gradient-Variation Online Adaptivity for\\ Accelerated Optimization with H{\"o}lder Smoothness}

\author{\textbf{Yuheng Zhao$ ^{1, 2}$,~ Yu-Hu Yan$ ^{1, 2}$,~Kfir Yehuda Levy$ ^{3}$,~ Peng Zhao$ ^{1, 2}$}\thanks{Correspondence: Peng Zhao <zhaop@lamda.nju.edu.cn>}}
\affil{
  $^1$ National Key Laboratory for Novel Software Technology, Nanjing University, China\\
  $^2$ School of Artificial Intelligence, Nanjing University, China\\
  $^3$ Electrical and Computer Engineering, Technion, Haifa, Israel}
  
\date{}

\maketitle

\begin{abstract}
Smoothness is known to be crucial for acceleration in offline optimization, and for gradient-variation regret minimization in online learning. 
Interestingly, these two problems are actually closely connected~---~accelerated optimization can be understood through the lens of gradient-variation online learning.
In this paper, we investigate online learning with \emph{H{\"o}lder smooth} functions, a general class encompassing both smooth and non-smooth (Lipschitz) functions, and explore its implications for offline optimization. 
For (strongly) convex online functions, we design the corresponding gradient-variation online learning algorithm whose regret smoothly interpolates between the optimal guarantees in smooth and non-smooth regimes. 
Notably, our algorithms do not require prior knowledge of the H{\"o}lder smoothness parameter, exhibiting strong adaptivity over existing methods.
Through online-to-batch conversion, this gradient-variation online adaptivity yields an optimal universal method for stochastic convex optimization under H{\"o}lder smoothness. 
However, achieving universality in offline strongly convex optimization is more challenging. 
We address this by integrating online adaptivity with a detection-based guess-and-check procedure, which, for the first time, yields a universal offline method that achieves accelerated convergence in the smooth regime while maintaining near-optimal convergence in the non-smooth one.
\end{abstract}

%!TEX root = ../main.tex
\section{Introduction}
\label{sec:intro}

First-order optimization methods based on (stochastic) gradients are widely used in machine learning due to their efficiency and simplicity~\citep{book'18:Nesterov-OPT,JMLR'11:AdaGrad,ICLR'14:Adam}. 
It is well-known that the curvature of the objective function strongly influences the difficulty of optimization.
In particular, the optimal convergence rates differ significantly between smooth and non-smooth objectives. 
For convex functions, the optimal rate in the non-smooth case is $\smash{\O(1/\sqrt{T})}$, achievable by standard gradient descent (GD), where $T$ denotes the total number of gradient queries. In contrast, for smooth functions, GD only attains an $\smash{\O(1/T)}$ rate, which exhibits a large gap with the accelerated rate $\smash{\O(1/T^2)}$ attained by the Nesterov's accelerated gradient (NAG) method~\citep{book'18:Nesterov-OPT}.
Similar acceleration phenomena also arise in the strongly convex setting.

The significant performance gap between smooth and non-smooth optimization has motivated the study of \emph{universality} in optimization~\citep{nesterov2015universal}: an ideal universal method should adapt to both unknown smooth and non-smooth cases, achieving optimal convergence in both regimes.
Several studies have explored adapting to a more challenging setting known as \emph{Hölder smoothness}~\citep{devolder2014first,nesterov2015universal}, which continuously interpolates between smooth and non-smooth functions.
Formally, a function $\ell:\R^d \rightarrow \R$ is $(L_\nu,\nu)$-Hölder smooth with respect to the $\ell_2$-norm, where $L_\nu>0$ and $\nu\in[0,1]$, if
\begin{equation}
    \label{eq:holder-smooth}
    \norm{\nabla \ell(\x) - \nabla \ell(\y)}_2\le L_\nu\norm{\x - \y}_2^\nu,\quad \forall \x,\y\in\R^d.
\end{equation}
It can be observed that $L_\ell$-smoothness corresponds to $(L_\ell,1)$-Hölder smoothness, while $G$-Lipschitz continuity is $(2G,0)$-Hölder smoothness.
For convex and $(L_\nu,\nu)$-Hölder smooth functions, the optimal convergence rate is $\smash{\O(1/T^{{(1+3\nu)}/{2}})}$ and has been achieved by several recent methods~\citep{nesterov2015universal,MP'24:Li-universal,rodomanov2024universal}.
However, progress in the strongly convex setting remains limited, and there is still a gap when adapting only between the smooth and non-smooth cases.
\citet{NIPS'17:levy-onlinetooffline} attained a non-accelerated rate in the smooth case and a near-optimal rate in the non-smooth case. 
It remains open on how to achieve universality in the strongly convex setting, particularly in obtaining an accelerated rate in the smooth regime.

\paragraph{Online-to-Batch Conversion.}
An important perspective for designing optimization algorithms is the \emph{Online-to-Batch Conversion} framework~\citep{TIT'04:online-to-batch}, which reformulates an offline optimization problem as a regret minimization task addressed by online learning algorithms. The key advantage typically lies in the simplicity of the converted algorithm. More importantly, it allows one to leverage the rich adaptivity results developed in online learning to enhance optimization performance. 
In online learning, the goal is to minimize the \emph{regret} over a sequence of $T$ online functions, denoted by $\smash{\Reg_T}$~\citep{book'16:Hazan-OCO}.
For convex functions, the standard Online-to-Batch (O2B) conversion implies that the convergence rate of the averaged iterate is $\smash{\Reg_T/T}$. 
Consequently, in the non-smooth case, combining this conversion with an online algorithm achieving an optimal $\smash{\O(\sqrt{T})}$ regret directly yields the optimal convergence rate for offline optimization of order $\smash{\O(1/\sqrt{T})}$.

In offline smooth optimization, achieving optimal convergence requires more refined conversion techniques and adaptive online algorithms~\citep{NIPS'17:levy-onlinetooffline,ICML'19:Ashok-acceleration}. Specifically, attaining the optimal $\smash{\O(1/T^2)}$ rate for smooth functions relies on an O2B conversion with \emph{stabilized gradient evaluations}~\citep{ICML'19:Ashok-acceleration} and an online learning algorithm equipped with \emph{gradient-variation adaptivity}~\citep{COLT'12:VT,NeurIPS'20:Sword}. In particular, this online algorithm leverages smoothness to guarantee a regret that scales with the gradient variation $\smash{V_T\triangleq \sum_{t=2}^T\sup_{\x}\norm{\nabla f_t(\x) - \nabla f_{t-1}(\x)}}$, where $\smash{\{f_t\}_{t=1}^T}$ is a sequence of online functions.
Notably,~\citet{NeurIPS'19:UniXGrad} achieved optimal rates in both smooth and non-smooth cases without requiring prior knowledge of smoothness, thereby attaining universality. A concise technical discussion can be found in the lecture note~\citep{LectureNote:AOpt25}.

However, a gap persists in online-to-offline methods under general Hölder smoothness settings. Furthermore, for strongly convex functions, achieving universality remains far from complete, even when focusing solely on smooth and non-smooth cases. 
These challenges motivate our study of \emph{gradient-variation online learning with Hölder smoothness} and the development of effective conversion techniques to offline optimization. 

\textbf{Our Contributions.} ~Our contributions are mainly two-fold, summarized as follows.

\begin{table}[!t]
    \centering
    \caption{\small{Summary and comparison of the convergence rates of existing universal methods and ours for offline (strongly) convex optimization. ``Weak/strong universality'' notations are defined in~\pref{def:universal-method}. Let $\smash{\kappa\triangleq L/\lambda}$. $\sigma$ denotes the variance in the stochastic setting. The rate marked with $\dag$ is non-accelerated in smooth functions. }}
    \label{table:results}
    \vspace{-2mm}
    \begin{threeparttable}
    \renewcommand{\arraystretch}{1.8}
    \resizebox{0.98\textwidth}{!}{
    \begin{tabular}{c|c|c}
    \hline
        
    \hline
    \rule{0pt}{0.3em}\textbf{Setting} & \textbf{Convergence Rate} & \textbf{Universality} \\[0.1em] 
    \hline
        
    \hline

    \multirow{3}{*}{\textbf{Convex} } 
        & $\O(\frac{L}{T^2} + \frac{\sigma}{\sqrt{T}})$ for $L$-smooth; $\O(\frac{1}{\sqrt{T}})$ for Lipschitz~\citep{NeurIPS'19:UniXGrad} & Weak \\[1mm] \cline{2-3}
        & $\O\Big( \frac{L_\nu}{T^{(1+3\nu)/2}} + \frac{\sigma}{\sqrt{T}} \Big)$ for $(L_\nu,\nu)$-Hölder smooth~\citep{rodomanov2024universal} & Strong \\[1mm] \cline{2-3}
        & $\O\Big( \frac{L_\nu}{T^{(1+3\nu)/2}} + \frac{\sigma}{\sqrt{T}} \Big)$ for $(L_\nu,\nu)$-Hölder smooth [\pref{thm:offline-convex}]  & Strong \\[1mm] \hline\hline

    \multirow{2}{*}{\makecell{\textbf{$\lambda$-Strongly} \\ \textbf{Convex}}} 
        & $\O\big(\exp(-\frac{T}{\kappa})\cdot\frac{T}{\kappa}\big)$ for $L$-smooth and Lipschitz; $ \tilde{\O}\big(\frac{1}{\lambda T}\big)$ for Lipschitz~\citep{NIPS'17:levy-onlinetooffline} & Weak$^\dag$ \\ \cline{2-3}
        & $\O\big(\exp(-\frac{T}{6\sqrt{\kappa}}) \big)$ for $L$-smooth and Lipschitz; $\tilde{\O}\big(\frac{1}{\lambda T}\big)$ for Lipschitz~[\pref{thm:offline-str}] & Weak \\[0.5mm] \hline

        \hline 
    \end{tabular}}
    \vspace{-2mm}
    \end{threeparttable}
\end{table}

\begin{enumerate}[left=2pt, itemsep=0pt, topsep=0pt]
    \item[\rom{1}] For convex functions, we study gradient-variation online learning with Hölder smoothness, achieving an $\smash{\O(\sqrt{V_T}+L_\nu T^{\frac{1-\nu}{2}})}$ regret that seamlessly interpolates between the optimal guarantees in smooth and non-smooth regimes. Leveraging this adaptivity through O2B conversion, we achieve universality for stochastic convex optimization with Hölder smoothness, matching the optimal result~\citep{rodomanov2024universal} obtained using arguably more sophisticated techniques.
    \item[\rom{2}] For strongly convex functions, we develop an $\O \big(\frac{1}{\lambda}\log V_T + \frac{1}{\lambda}L_\nu^2(\log T)^{\frac{1-\nu}{1+\nu}} \big)$ gradient-variation regret with Hölder smoothness that recovers the optimal results in both smooth and non-smooth scenarios. Achieving universality in offline strongly convex optimization presents additional challenges. We address this by integrating online adaptivity with a detection-based guess-and-check procedure. Combined with a carefully designed O2B conversion, for the first time, we provide a universal method for strongly convex optimization that achieves \emph{accelerated} convergence in the smooth regime while maintaining near-optimal convergence in the non-smooth one.
\end{enumerate}
The convergence rates of the existing universal methods and our newly obtained results are summarized in~\pref{table:results}. Our work opens a new avenue for converting gradient-variation online adaptivity  to offline optimization, and recent progress in gradient-variation online learning~\citep{JMLR'24:Sword++,NeurIPS'24:OptimalGV} suggests possible further opportunities. Moreover, we hope it will inspire broader efforts to integrate online adaptivity into offline methods, which may not only advance the pursuit of universality and other forms of adaptivity in offline optimization but also shed light on the design and understanding of modern optimizers in deep learning~\citep{ICML'23:non-smooth,COLT'24:open-problem}.

\noindent \textbf{Organization.}~
The rest is organized as follows.
~\pref{sec:preliminary} introduces the problem setup and preliminaries.~\pref{sec:convex} presents our results for convex functions in both online and offline settings.
\pref{sec:strongly-convex} considers the strongly convex scenario, where achieving universality in offline optimization is particularly challenging.~\pref{sec:conclusions} concludes the paper.
All proofs are provided in the appendices.

%!TEX root = ../main.tex
\section{Problem Setups and Preliminaries}
\label{sec:preliminary}

This section provides preliminaries.
\pref{subsec:setting-optimization} introduces the problem setup of offline (stochastic) first-order optimization. \pref{subsec:setting-online-opt} covers online learning and its gradient-variation adaptivity. \pref{subsec:o2b-conversion} introduces online-to-batch conversion and its advanced variants.

\noindent \textbf{Notations.}~
We use $[N]$ to denote the index set $\smash{\{1,2,\dots,N\}}$. The shorthand $\smash{\sum_t}$ stands for $\smash{\sum_{t\in[T]}}$, and we define $\smash{\sum_{i=a}^{b}c_i = 0}$ whenever $\smash{a>b}$.
The Bregman divergence associated with a convex regularizer $\smash{\psi:\X\to\R}$ is defined as $\smash{\D_{\psi}(\x, \y)\triangleq \psi(\x) - \psi(\y) - \inner{\nabla \psi(\y)}{\x - \y}}$.
By default, $\norm{\cdot}$ denotes the $\ell_2$-norm.
We use $\smash{a \lesssim b}$ to indicate $\smash{a = \O(b)}$, and use $\smash{\tilde{\O}(\cdot)}$-notation to suppress poly-logarithmic factors in the leading terms.

\subsection{Offline Optimization: Acceleration and Universality}
\label{subsec:setting-optimization}
Consider the optimization problem over a convex feasible domain $\X\subseteq\R^d$,
\begin{equation}
    \label{eq:optimization}
    \min_{\x\in\X}\ \ell(\x),
\end{equation}
where ${\ell:\X \rightarrow \R}$ is a convex objective.
We assume the algorithm has access to a \emph{gradient oracle} denoted by ${\g(\cdot)}$, and consider two settings:
\begin{enumerate}[left=2pt, itemsep=0pt, topsep=0pt]
    \item[\rom{1}] \emph{Deterministic} setting: $\g(\cdot)$ returns the exact gradient, i.e., $\g(\x) = \nabla \ell(\x)$.
    \item[\rom{2}] \emph{Stochastic} setting: $\g(\cdot)$ provides an unbiased estimate of the gradient, ${\E[\g(\x) \given \x] = \nabla\ell(\x)}$, and satisfies the standard bounded-variance condition, ${\E[\norm{\g(\x) - \nabla \ell(\x)}^2 \given \x] \le \sigma^2},\forall\x \in \X$.
\end{enumerate}
Suppose the algorithm is allowed $T$ queries to the gradient oracle and outputs a final solution $\x_T^\dagger$. We focus on the convergence rate of the sub-optimality gap, i.e., $\smash{\ell(\x_T^\dag) - \min_{\x\in\X} \ell(\x)}\leq \epsilon_T$.

It is well known that smoothness plays a central role in \emph{accelerated convergence}~\citep{book'18:Nesterov-OPT}. 
Consider the deterministic setting as an example.
For convex functions, the optimal convergence rate for Lipschitz continuous objectives is $\smash{\O(1/\sqrt{T})}$, achieved by standard gradient descent (GD). When the function is smooth, GD obtains a rate of $\smash{\O(1/T)}$, which can be further accelerated to $\smash{\O(1/T^2)}$ using Nesterov's accelerated gradient (NAG).
For strongly convex functions, GD achieves an optimal $\smash{\O(1/T)}$ rate for Lipschitz objectives and $\smash{\O(\exp(-T/\kappa))}$ for smooth objectives, while NAG accelerates the latter to optimal $\smash{\O(\exp(-T/\sqrt{\kappa}))}$, where $\smash{\kappa \triangleq L_\ell / \lambda}$ is the condition number.

Prior research has aimed to develop a single algorithm that can adaptively achieve (optimal) guarantees without prior knowledge of whether the objective is smooth or non-smooth. In addition, several studies have extended this adaptivity to the broader setting of Hölder smoothness.
This adaptability, known as \emph{universality} in optimization methods~\citep{nesterov2015universal}, has attracted considerable attention in recent years~\citep{NIPS'17:levy-onlinetooffline,NeurIPS'19:UniXGrad,rodomanov2024universal,COLT'24:U-DOG}.

In this paper, to clearly distinguish the degrees of adaptability of optimization methods to different smoothness levels, we introduce the following definitions of \emph{weak/strong universality}.

\begin{myDef}[Weak/Strong Universality]
\label{def:universal-method}
An optimization method is said to be \emph{universal} if it can automatically adapt to an \emph{unknown} level of smoothness of the objective function. Specifically,
\begin{enumerate}[left=2pt, itemsep=-2pt, topsep=-2pt]
    \item[\rom{1}] \textbf{Weak universality:} it simultaneously adapts to smooth and non-smooth (Lipshcitz) functions;
    \item[\rom{2}] \textbf{Strong universality:} it simultaneously adapts to $(L_\nu, \nu)$-Hölder smooth functions for  $\nu\in[0,1]$.
\end{enumerate}
\end{myDef}
It is infeasible to rely on the knowledge of smoothness parameter $L$ or Lipschitz continuity constant $G$ when developing weakly universal methods, and likewise on $\nu$ or $L_\nu$ when devising strongly universal methods. In essence, universality demands that the optimization method can automatically adapt to various scenarios, with weak universality adapting to two cases (smooth and Lipschitz) and strong universality extending to broader Hölder smoothness.

\subsection{Online Optimization: Regret and Gradient-Variation Adaptivity}
\label{subsec:setting-online-opt}

Online Convex Optimization (OCO)~\citep{book'22:OCObookv2} is a versatile online learning framework, typically modeled as an iterative game between a player and the environment. At iteration $\smash{t \in [T]}$, the player chooses a decision $\x_t$ from a convex feasible domain $\smash{\X\subseteq \R^d}$. Simultaneously, the environment reveals a convex function $\smash{f_t:\X\to\R}$, and the player incurs a loss $\smash{f_t(\x_t)}$. The player then receives the gradient information to update $\x_{t+1}$, aiming to optimize the \emph{regret} defined as
\begin{equation}
    \label{eq:def-static-regret}
    \Reg_T \triangleq \sum_{t=1}^{T} f_t(\x_t) - \min_{\x\in\X} \sum_{t=1}^{T} f_t(\x).
\end{equation}
For \emph{Lipschitz} online functions, the minimax optimal regret bounds are $\smash{\O(\sqrt{T})}$ for convex functions~\citep{ICML'03:zinkvich} and $\smash{\O(\frac{1}{\lambda}\log T)}$ for $\lambda$-strongly convex functions~\citep{MLJ'07:Hazan-logT}.
When the functions are \emph{smooth}, we can further obtain \emph{problem-dependent} regret guarantees, which enjoy better bounds in easy problem instances while maintaining the same minimax optimality in the worst case~\citep{JMLR'14:FlipFlop,NIPS'15:Dylan-adaptive}.
Among many problem-dependent quantities, a particular one called \emph{gradient variations} draws much attention~\citep{COLT'12:VT,ML'14:variation-Yang}, which is defined to capture how the gradients of online functions evolve over time,
\begin{equation}
    \label{eq:def-gradient-variation}
    V_T \triangleq \sum_{t=2}^{T} \sup_{\x\in\X} \norm{\nabla f_t(\x) - \nabla f_{t-1}(\x)}^2.
\end{equation} 
It is established that optimal gradient-variation regret for convex and $\lambda$-strongly convex functions are $\smash{\O(\sqrt{V_T})}$ and $\smash{\O(\frac{1}{\lambda}\log V_T)}$~\citep{COLT'12:VT}, respectively.
There has been significant subsequent development in more complex environments~\citep{NeurIPS'20:Sword,JMLR'24:Sword++,NeurIPS'22:SEA,NeurIPS'23:universal,NeurIPS'24:OptimalGV,NeurIPS'24:LocalSmooth}.
Gradient-variation online learning has gained significant attention  due to its impact on analyzing trajectory dynamics and its fundamental connections to various optimization problems. It has been proved essential for fast convergence in minimax games~\citep{NIPS'15:fast-rate-game, ICML'22:TVgame} and bridging adversarial and stochastic convex optimization~\citep{NeurIPS'22:SEA, JMLR'24:OMD4SEA}.
Recent results also demonstrate its important role in accelerated optimization~\citep{ICML'19:Ashok-acceleration,NeurIPS'19:UniXGrad,ICML'20:simple-acceleration}.

\subsection{Online-to-Batch Conversion: Stabilization}
\label{subsec:o2b-conversion}

Consider the optimization problem of $\min_{\x\in\X} \ell(\x)$ with access to a (stochastic) gradient oracle $\g(\cdot)$. This problem can be solved using online algorithms with online-to-batch conversion.
A basic example is as follows: we define the online function $\smash{f_t(\x)\triangleq \inner{\g(\x_t)}{\x}}$ and ensure that for any $\xs\in\X$:
\begin{equation*}
    \E\mbr{ \ell\sbr{ \frac{1}{T}\sum_{t=1}^{T}\x_t } } - \ell(\xs) \le \frac{1}{T}\ \E\mbr{\sum_{t=1}^{T}\inner{\g(\x_t)}{\x_t-\xs}} \le \frac{1}{T}\ \E\mbr{\Reg_T}.
\end{equation*}
Hence, for convex objectives, if the online algorithm achieves a regret bound of $\smash{\O(\sqrt{T})}$, the corresponding offline optimization method directly attains a convergence rate of ${\O(1/\sqrt{T})}$.

To achieve accelerated rates in smooth optimization, advanced conversion methods and adaptive online algorithms are required. 
Specifically, the key insight is to evaluate the gradient on weighted averaged iterates, which introduces a \emph{stabilization} effect under smoothness~\citep{ICML'19:Ashok-acceleration}.

\begin{algorithm}[!t]
    \caption{Stabilized Online-to-Batch Conversion}
    \label{alg:online-to-batch}
    \begin{algorithmic}[1]
    \REQUIRE Online learning algorithm $\A_{\textnormal{OL}}$, weights $\{\alpha_t\}_{t=1}^T$ with $\alpha_t>0$.
    \STATE \textbf{Initialization:} $\smash{\x_1\in\X}$. 
    \FOR{$t=1$ {\bfseries to} $T$}
        \STATE Calculate ${\xb_t = \frac{1}{\alpha_{1:t}}\sum_{s=1}^t \alpha_s \x_s}$ with ${\alpha_{1:t}\triangleq \sum_{s=1}^{t}\alpha_s}$, receive $\g(\xb_t)$
        \STATE Construct ${f_t(\x) \triangleq \alpha_t \inner{\g(\xb_t)}{\x}}$ to $\A_{\textnormal{OL}}$ as the $t$-th iteration online function
        \STATE Obtain $\x_{t+1}$ from $\A_{\textnormal{OL}}$
    \ENDFOR
    \ENSURE $\smash{\xb_T= \frac{1}{\alpha_{1:T}}\sum_{t=1}^T \alpha_t \x_t}$
    \end{algorithmic}
\end{algorithm}

\paragraph{Stabilized Online-to-Batch Conversion~\citep{ICML'19:Ashok-acceleration}.}
\pref{alg:online-to-batch} summarizes the conversion. Given an online learning algorithm $\A_{\textnormal{OL}}$ and a sequence of positive weights $\{\alpha_t\}_{t=1}^T$, the conversion operates as follows: At each iteration $t$, it computes a \emph{weighted average} of past decisions $\smash{\xb_t = \frac{1}{\alpha_{1:t}} \sum_{s=1}^{t} \alpha_s \x_s}$ with $\smash{\alpha_{1:t} \triangleq \sum_{s=1}^{t} \alpha_s}$, and queries the gradient at this point, i.e., $\g(\xb_t)$.
It then constructs the online function $\smash{f_t(\x) \triangleq \alpha_t \inner{\g(\xb_t)}{\x}}$, which is passed to the online algorithm $\A_{\textnormal{OL}}$ to obtain the next decision $\x_{t+1}$. After $T$ iterations, the conversion outputs the final decision $\smash{\xb_T = \frac{1}{\alpha_{1:T}} \sum_{t=1}^{T} \alpha_t \x_t}$. The conversion ensures the following inequality holds for all $\xs\in\X$:
\begin{equation}
    \label{eq:online-to-batch}
    \E\left[\ell(\xb_T)\right] - \ell(\xs) \le \frac{1}{\alpha_{1:T}}\  \E\left[\sum_{t=1}^T \alpha_t \inner{\g(\xb_t)}{\x_t - \xs}\right] = \frac{\E\mbr{\Reg_T^{\boldsymbol{\alpha}}}}{\alpha_{1:T}},
\end{equation}
where the expectation is taken over gradient randomness, and $\smash{\Reg_T^{\boldsymbol{\alpha}}\triangleq \sum_{t=1}^T \alpha_t \inner{\g(\xb_t)}{\x_t - \xs}}$ is the weighted regret of the online learning algorithm $\smash{\A_{\textnormal{OL}}}$.

This conversion of~\pref{eq:online-to-batch} enables accelerated convergence for convex and smooth optimization. For example, by setting $\alpha_t=t$ and leveraging gradient-variation online adaptivity to obtain an $\smash{\O(1)}$ weighted regret,~\citet{NeurIPS'19:UniXGrad} ultimately achieved a convergence rate of $\smash{\O(1/T^2)}$.
%!TEX root = ../main.tex
\section{Convex Optimization with Hölder Smoothness}
\label{sec:convex}

We achieve the gradient-variation regret bound with Hölder smoothness in~\pref{subsec:convex}, then apply our method to obtain the universal method for stochastic convex optimization  in~\pref{subsec:sco}.

\subsection{Gradient-Variation Online Learning with Hölder Smoothness}
\label{subsec:convex}

We aim to establish gradient-variation regret for online learning with convex and $(L_\nu,\nu)$-Hölder smooth functions $\smash{\{f_t\}_{t=1}^T}$.
A commonly used bounded domain assumption is required~\citep{book'22:OCObookv2}.
\begin{myAssum}[Bounded Domain]
    \label{assum:bounded-domain}
    The feasible domain $\X\subseteq \R^d$ is non-empty and closed with the diameter bounded by $D$, that is, $\norm{\x - \y}_2 \leq D$ for all $\x, \y \in \X$.
\end{myAssum}
We leverage the optimistic online gradient descent (optimistic OGD)~\citep{COLT'12:VT} as our algorithmic framework for gradient-variation online learning.
Before submitting $\x_t$ and performing the classical online gradient descent update step using $\nabla f_t(\x_t)$~\citep{ICML'03:zinkvich}, optimistic OGD performs an additional update step using the prediction for the upcoming gradient, denoted by $\smash{M_t\in\R^d}$, which is often set as the last observed gradient $\smash{\nabla f_{t-1}(\x_{t-1})}$. To this end, optimistic OGD maintains two decision sequences $\smash{\{\x_t\}_{t=1}^T,\{\xh_t\}_{t=1}^T}$, and updates by
\begin{equation}
    \label{eq:OOGD}
    \x_t = \Pi_{\X}\left[ \xh_t - \eta_t M_t \right], \quad \xh_{t+1} = \Pi_{\X}\left[ \xh_t - \eta_t \nabla f_t(\x_t) \right],
\end{equation}
where $\smash{\eta_t>0}$ is a time-varying step size, and $\smash{\Pi_{\X}[\y] \triangleq \argmin_{\x\in\X}\norm{\x - \y}_2}$ is Euclidean projection.

We first review the derivation of the $\smash{\O(\sqrt{V_T})}$ gradient-variation bound under the $L$-smoothness assumption on $\smash{\{f_t\}_{t=1}^T}$.
By setting $\smash{M_t = \nabla f_{t-1}(\x_{t-1})}$, optimistic OGD yields the following classical gradient-variation analysis~\citep{COLT'12:VT}:
\begin{equation}
    \label{eq:OOGD-analysis}
    \Reg_T \lesssim \frac{1}{\eta_T} + \sum_{t=1}^{T} \eta_t \norm{\nabla f_t(\x_t) - \nabla f_{t-1}(\x_{t-1}) }^2 - \sum_{t=2}^{T}\frac{1}{\eta_{t-1}} \norm{\x_t - \x_{t-1}}^2.
\end{equation}
Given the smoothness parameter $L$, deriving $\smash{\O(\sqrt{V_T})}$ from~\pref{eq:OOGD-analysis} is straightforward by appropriately setting the step size $\eta_t$.
Specifically, on the right-hand side, the second term $\smash{\sum_t \eta_t \norm{\nabla f_t(\x_t) - \nabla f_{t-1}(\x_{t-1})}^2}$ is an adaptivity term measuring the deviation between the two gradients, upper bounded by $\smash{\sum_t \eta_t\norm{\nabla f_t(\x_t) - \nabla f_{t-1}(\x_t)}^2 + \norm{\nabla f_{t-1}(\x_t) - \nabla f_{t-1}(\x_{t-1})}^2}$, where the first part can be converted to the desired gradient variation~\pref{eq:def-gradient-variation} and the second part is bounded by $L^2 \norm{\x_t - \x_{t-1}}^2$ under standard $L$-smoothness assumption, and thus can be canceled out by the last negative term in~\pref{eq:OOGD-analysis} by clipping the step size to $\smash{\eta_t\lesssim 1/L}$. Therefore, most existing gradient-variation techniques require the prior knowledge of the smoothness parameter $L$.

Let us return to gradient-variation online learning with $(L_\nu,\nu)$-Hölder smoothness. Unfortunately we \emph{cannot} directly apply the definition in~\pref{eq:holder-smooth} as we did with standard smoothness, because it would yield $\smash{\norm{\nabla f_{t-1}(\x_t) - \nabla f_{t-1}(\x_{t-1})}^2\le L_{\nu}^2 \norm{\x_t - \x_{t-1}}^{2\nu}}$, which mismatches with the negative term. To this end, we present a key lemma regarding Hölder smoothness as a kind of \emph{inexact} smoothness~\citep{devolder2014first}, which has a similar form to standard smoothness except for an additional corruption term. The proof is in~\pref{appendix:proof-lemma-inexact-smooth}.
\begin{myLemma}
    \label{lemma:inexact-smooth}
    Suppose the function $f$ is $(L_\nu, \nu)$-Hölder smooth. Then, for any $\delta>0$, denoting by $L = \delta^{\frac{\nu-1}{1+\nu}} L_\nu^{\frac{2}{1+\nu}}$, it holds that for all $\x,\y\in\R^d$:
    \begin{equation}
        \label{eq:inexact-smooth}
        \norm{\nabla f(\x) - \nabla f(\y)}^2 \le L^2 \norm{\x - \y}^2 + 4L \delta.
    \end{equation}
\end{myLemma}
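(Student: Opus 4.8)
The plan is to start from the Hölder smoothness condition \pref{eq:holder-smooth}, which gives $\norm{\nabla f(\x) - \nabla f(\y)} \le L_\nu \norm{\x-\y}^\nu$, and to convert the exponent $\nu$ on $\norm{\x-\y}$ into a clean quadratic term plus a constant corruption. The natural tool is Young's inequality in the form $ab \le \frac{a^p}{p} + \frac{b^q}{q}$ for conjugate exponents $1/p + 1/q = 1$, or more simply the weighted version $ab \le \frac{\theta}{2} a^2 + \frac{1}{2\theta} b^2$ after one first squares. Concretely, I would square the Hölder bound to get $\norm{\nabla f(\x)-\nabla f(\y)}^2 \le L_\nu^2 \norm{\x-\y}^{2\nu}$, and then split $\norm{\x-\y}^{2\nu}$ using Young's inequality with exponents $\frac{1}{\nu}$ and $\frac{1}{1-\nu}$ applied to the product $\norm{\x-\y}^{2\nu} \cdot 1$, introducing a free parameter to be tuned.

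More precisely, write $\norm{\x-\y}^{2\nu} = \big(\lambda \norm{\x-\y}^2\big)^{\nu} \cdot \lambda^{-\nu}$ for an arbitrary $\lambda > 0$, and apply Young's inequality $a^\nu b^{1-\nu} \le \nu a + (1-\nu) b$ (weighted AM--GM) with $a = \lambda\norm{\x-\y}^2$ and $b = \lambda^{-\nu/(1-\nu)}$, giving
\begin{equation*}
    \norm{\x-\y}^{2\nu} \le \nu \lambda \norm{\x-\y}^2 + (1-\nu)\lambda^{-\frac{\nu}{1-\nu}}.
\end{equation*}
Multiplying by $L_\nu^2$ yields $\norm{\nabla f(\x)-\nabla f(\y)}^2 \le \nu L_\nu^2 \lambda \norm{\x-\y}^2 + (1-\nu) L_\nu^2 \lambda^{-\frac{\nu}{1-\nu}}$. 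Now I would choose $\lambda$ so that the coefficient of $\norm{\x-\y}^2$ equals $L^2$ with $L = \delta^{\frac{\nu-1}{1+\nu}} L_\nu^{\frac{2}{1+\nu}}$ as prescribed; that is, solve $\nu L_\nu^2 \lambda = L^2$ (up to the constant $\nu \le 1$, which only helps), which pins down $\lambda$ as a power of $\delta$ and $L_\nu$, and then verify by direct substitution that the resulting constant term $(1-\nu) L_\nu^2 \lambda^{-\nu/(1-\nu)}$ is at most $4L\delta$. The exponents are designed precisely so that $L^2 \cdot \delta \cdot (\text{stuff}) = L_\nu^{4/(1+\nu)}\delta^{2\nu/(1+\nu)}$ matches $L_\nu^{2} \lambda^{-\nu/(1-\nu)}$; this is a bookkeeping check on powers of $\delta$ and $L_\nu$.

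The two boundary cases $\nu = 1$ (standard smoothness) and $\nu = 0$ (Lipschitz) should be handled separately or by a limiting argument, since the exponent $\frac{\nu}{1-\nu}$ degenerates at $\nu=1$: for $\nu=1$ the bound is immediate with $L = L_\nu$ and no corruption needed, and for $\nu=0$ the Hölder condition reads $\norm{\nabla f(\x)-\nabla f(\y)} \le L_0$, so the left side is at most $L_0^2 = 4L\delta$ once $L = \delta^{-1}L_0^2$, matching $L = \delta^{(\nu-1)/(1+\nu)}L_\nu^{2/(1+\nu)}$ at $\nu=0$. The main (mild) obstacle is simply getting the constant $4$ to come out — one has to track whether the factors $\nu$ and $1-\nu$, each bounded by $1$, and the chosen value of $\lambda$ conspire to give a clean constant, and the factor $4$ (rather than $1$) presumably provides the needed slack. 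There is no deep difficulty here; it is a one-parameter optimization of Young's inequality followed by verification that the stated $L$ is the optimizer (or a valid near-optimizer) and the stated corruption bound holds.
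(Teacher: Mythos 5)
Your proposal is correct, and it takes a genuinely different route than the paper. The paper chains Nesterov's inexact-smoothness lemma (\pref{lemma:inexact-smooth-lemma-1}) with Devolder et al.'s reverse inequality (\pref{lemma:inexact-smooth-lemma-2}): the first converts Hölder smoothness into a function-value bound $f(\y)-f(\x)-\inner{\nabla f(\x)}{\y-\x}\le \frac{L}{2}\norm{\x-\y}^2+\delta$, and the second converts that function-value bound back into a gradient-norm bound. Crucially, both cited lemmas require $f$ to be \emph{convex}, so the paper's proof is implicitly restricted to convex $f$ even though the lemma statement does not announce this. Your approach bypasses function values entirely: square the Hölder condition to get $\norm{\nabla f(\x)-\nabla f(\y)}^2\le L_\nu^2\norm{\x-\y}^{2\nu}$ and apply weighted AM--GM, $a^\nu b^{1-\nu}\le \nu a + (1-\nu) b$, with $a=\lambda\norm{\x-\y}^2$ and $b=\lambda^{-\nu/(1-\nu)}$. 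This is fully elementary, makes no use of convexity, and in fact yields a sharper constant: taking $\lambda = L^2/L_\nu^2$ (so that the quadratic coefficient matches $L^2$ exactly, up to the harmless factor $\nu\le 1$), the corruption term works out to
\begin{equation*}
(1-\nu)\,L_\nu^2\,\lambda^{-\frac{\nu}{1-\nu}} = (1-\nu)\,\delta^{\frac{2\nu}{1+\nu}}L_\nu^{\frac{2}{1+\nu}} = (1-\nu)\,L\delta \le L\delta,
\end{equation*}
which is strictly better than the stated $4L\delta$. (The boundary cases $\nu\in\{0,1\}$ you handle separately also check out: at $\nu=1$ there is no corruption, and at $\nu=0$ the left side is at most $L_0^2 = L\delta$.) So your proof is not only valid but gives a stronger statement under weaker hypotheses; the only thing it does not buy you is the function-value inequality \pref{eq:inexact-smooth-lemma-1} or the Bregman version \pref{eq:inexact-smooth-bregman}, which the paper also uses elsewhere (e.g.\ in the strongly convex analysis), so the Nesterov--Devolder machinery is still needed in the paper as a whole.
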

When smoothness holds, i.e., $\nu=1$,~\pref{lemma:inexact-smooth} recovers the standard smoothness assumption when $\delta$ approaches $0$. When functions are $G$-Lipschitz, i.e., $\nu=0$ and $L_{\nu}=2G$, by treating the right-hand side as a function for $\delta$ and calculating the minimum, the lemma results in $\norm{\nabla f_t(\x) - \nabla f_t(\y)}^2 \lesssim G^2$, providing an upper bound that depends only on $G$.

In the next step, applying~\pref{lemma:inexact-smooth} encounters another severe issue: the parameter $L$ in~\pref{lemma:inexact-smooth} is algorithmically \emph{unavailable}, preventing us from explicitly setting the step size clipping $\smash{\eta_t \lesssim 1/L}$. This is because the Hölder smoothness parameters $L_\nu$ and $\nu$ are unknown, and $\delta$ is chosen based on theoretical considerations and thus exists only in the analysis.

To handle this problem, inspired by~\citet{NeurIPS'19:UniXGrad}, we adopt the following AdaGrad-style step sizes~\citep{JMLR'11:AdaGrad} which allows us to perform \emph{virtual clipping} technique (see~\pref{lemma:virtual-clipping} in~\pref{appendix:useful-lemmas-for-step-size}) in the analysis to avoid the use of smoothness-related parameters:
\begin{equation}
    \label{eq:adaptive-step-size}
    \eta_{t+1} \propto \frac{1}{\sqrt{A_t}}, \quad \text{where}~~  A_t\triangleq \norm{\nabla f_1(\x_1)}^2 + \sum_{s=2}^{t}\norm{\nabla f_s(\x_s) - M_s}^2.
\end{equation}
The rationale behind is that, since $\eta_{t+1}$ in~\pref{eq:adaptive-step-size} is non-increasing, it will eventually become smaller than $1/L$ after certain rounds, i.e., for $t>\tau$, thereby achieving implicit clipping. On the other hand, for $t\le \tau$, the relation $\smash{\eta_{\tau+1} \propto 1/\sqrt{A_{\tau}} \gtrsim 1/L}$ implies that $\smash{\sqrt{A_{\tau}}}$ remains small. Hence, the uncancelled gradient-variation summation in~\pref{eq:OOGD-analysis}, which is bounded by $\smash{\sqrt{A_{\tau}}}$, is at most a constant. 

Putting everything together, we establish the gradient-variation regret with the proof in~\pref{appendix:proof-static-con}.
\begin{myThm}
    \label{thm:static-con}
    Consider online learning with convex and $(L_\nu,\nu)$-Hölder smooth functions. Under \pref{assum:bounded-domain}, optimistic OGD in~\pref{eq:OOGD} with $M_1=\mathbf{0}, M_t=\nabla f_{t-1}(\x_{t-1})$ for all $t\ge 2$, and step sizes $\smash{\eta_t=\frac{D}{2\sqrt{A_{t-1}}}}$ with $\smash{A_t}$ defined in~\pref{eq:adaptive-step-size} for all $t\in[T]$, ensures the following regret bound:
    \begin{equation}
        \label{eq:regret-convex}
        \Reg_T \le \O\left( D\sqrt{V_T} + L_\nu D^{1+\nu}T^{\frac{1-\nu}{2}} + D\norm{\nabla f_1(\x_1)} \right),
    \end{equation}
    without the knowledge of $L_\nu$ and $\nu$, where $V_T$ is the gradient variations quantity defined in~\pref{eq:def-gradient-variation}.
\end{myThm}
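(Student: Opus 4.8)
The starting point is the classical optimistic OGD inequality~\pref{eq:OOGD-analysis}, which holds with the AdaGrad-style step sizes $\eta_t = D/(2\sqrt{A_{t-1}})$. I would first record the telescoped bound in the precise form
\begin{equation*}
    \Reg_T \lesssim \frac{D^2}{\eta_T} + \sum_{t=1}^T \eta_t \norm{\nabla f_t(\x_t) - M_t}^2 - \sum_{t=2}^T \frac{1}{\eta_{t-1}}\norm{\x_t - \x_{t-1}}^2,
\end{equation*}
using $M_t = \nabla f_{t-1}(\x_{t-1})$ so that $\norm{\nabla f_t(\x_t) - M_t}^2 = \norm{\nabla f_t(\x_t) - \nabla f_{t-1}(\x_{t-1})}^2$ is exactly the quantity accumulated in $A_t$. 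Since $\eta_t \propto 1/\sqrt{A_{t-1}}$, the standard AdaGrad ``sum of ratios'' inequality (e.g.\ $\sum_t a_t/\sqrt{\sum_{s\le t} a_s} \lesssim \sqrt{\sum_t a_t}$) gives $\sum_t \eta_t \norm{\nabla f_t(\x_t) - M_t}^2 \lesssim D\sqrt{A_T}$, and $D^2/\eta_T \lesssim D\sqrt{A_{T-1}} \le D\sqrt{A_T}$, so the whole regret is controlled by $D\sqrt{A_T}$ up to the negative movement term. Everything thus reduces to bounding $A_T$.

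Next I would split the gradient-difference term inside $A_T$ via the triangle inequality: $\norm{\nabla f_t(\x_t) - \nabla f_{t-1}(\x_{t-1})}^2 \lesssim \norm{\nabla f_t(\x_t) - \nabla f_{t-1}(\x_t)}^2 + \norm{\nabla f_{t-1}(\x_t) - \nabla f_{t-1}(\x_{t-1})}^2$. The first piece sums to $V_T$ (plus the $\norm{\nabla f_1(\x_1)}^2$ term already isolated in $A_T$'s definition). For the second piece I invoke~\pref{lemma:inexact-smooth} with a parameter $\delta>0$ to be chosen: this yields $\norm{\nabla f_{t-1}(\x_t) - \nabla f_{t-1}(\x_{t-1})}^2 \le L^2 \norm{\x_t - \x_{t-1}}^2 + 4L\delta$ where $L = \delta^{(\nu-1)/(1+\nu)} L_\nu^{2/(1+\nu)}$. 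Summing over $t$ contributes $L^2 \sum_t \norm{\x_t - \x_{t-1}}^2 + 4L\delta T$ to $A_T$. So, roughly, $A_T \lesssim \norm{\nabla f_1(\x_1)}^2 + V_T + L^2 \sum_t \norm{\x_t - \x_{t-1}}^2 + L\delta T$, and we still carry the $-\sum_{t\ge 2}\eta_{t-1}^{-1}\norm{\x_t-\x_{t-1}}^2$ negative term from the regret bound.

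The crux is that we cannot cancel $L^2\sum_t\norm{\x_t-\x_{t-1}}^2$ against the negative term by fiat, because $L$ is unknown and $\eta_t$ was not clipped to $1/L$. This is where the \emph{virtual clipping} lemma (\pref{lemma:virtual-clipping}) enters, and it is the main obstacle. The idea: define $\tau$ to be the last round where $\eta_t > c/L$ for an appropriate constant $c$; since $\eta_t$ is non-increasing, for $t > \tau$ we have $\eta_{t-1} \le c/L$, i.e.\ $1/\eta_{t-1} \ge L/c \ge L^2\eta_t \cdot(\text{const})$... more carefully, one arranges that for $t>\tau$ the movement terms $L^2\norm{\x_t-\x_{t-1}}^2$ appearing (with their $\eta_t$ weight in the regret sum) are dominated by the negative $\eta_{t-1}^{-1}\norm{\x_t-\x_{t-1}}^2$ terms and hence cancel. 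For $t\le\tau$, the defining inequality $\eta_{\tau} = D/(2\sqrt{A_{\tau-1}}) > c/L$ forces $\sqrt{A_{\tau-1}} < DL/(2c)$, so the ``early'' contribution to the AdaGrad sum is $O(DL/c)$, i.e.\ a term of order $D L = D\,\delta^{(\nu-1)/(1+\nu)} L_\nu^{2/(1+\nu)}$. I would carry this argument through on the regret bound directly (bounding $\sum_{t\le\tau}\eta_t\norm{\cdot}^2 \le 2\sqrt{A_\tau}\cdot D \lesssim D^2 L$ and showing the $t>\tau$ tail telescopes against the negatives), rather than on $A_T$ in isolation, since the cancellation is between the adaptivity sum and the movement sum \emph{inside the regret expression}. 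The outcome is $\Reg_T \lesssim D\sqrt{V_T} + D\norm{\nabla f_1(\x_1)} + D^2 L + D\sqrt{L\delta T}$, where $D^2 L + D\sqrt{L\delta T} = D^2\delta^{(\nu-1)/(1+\nu)}L_\nu^{2/(1+\nu)} + D\sqrt{T}\,\delta^{1/(1+\nu)}L_\nu^{1/(1+\nu)}$.

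Finally I optimize over the free parameter $\delta$. Balancing the two $L_\nu$-dependent terms (the $D^2 L$ term decreasing in $\delta$, the $D\sqrt{L\delta T}$ term increasing in $\delta$) and simplifying the exponents should collapse both into the single quantity $L_\nu D^{1+\nu} T^{(1-\nu)/2}$ claimed in~\pref{eq:regret-convex}; concretely one checks that the choice $\delta \asymp L_\nu D^{1+\nu} T^{-(1+\nu)/2}$ (so that $L \asymp L_\nu D^{\nu-1} T^{(1-\nu)/2}$, i.e.\ $L\delta \asymp L_\nu^2 D^{2\nu} T^{-\nu}$... ) makes both terms $\Theta(L_\nu D^{1+\nu}T^{(1-\nu)/2})$. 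A sanity check at the endpoints confirms the interpolation: $\nu=1$ gives the clean $O(D\sqrt{V_T})$ bound (the extra term becomes $L_\ell D^2$, a constant), and $\nu=0$ gives $O(D\sqrt{V_T} + G D\sqrt{T})$, i.e.\ the usual $O(\sqrt{T})$ minimax rate since $V_T \le O(G^2 T)$. Throughout, no step uses $L_\nu$ or $\nu$ in the algorithm — they appear only through $\delta$ in the analysis — which establishes the claimed parameter-freeness.
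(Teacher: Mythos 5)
Your proposal is correct and tracks the paper's argument essentially step-for-step: the same optimistic-OGD decomposition, the same use of~\pref{lemma:inexact-smooth} to convert Hölder smoothness into a smoothness-plus-corruption bound, the same virtual-clipping threshold $\tau$ (which the paper calls $t_0$) partitioning the rounds into a small prefix with $\sqrt{A_{t}}\lesssim LD$ and a suffix where $1/\eta_{t+1}\gtrsim L$ cancels the movement term, and the same choice $\delta\asymp L_\nu D^{1+\nu}T^{-(1+\nu)/2}$. The only minor cosmetic difference is that the paper first applies the AdaGrad lemma to get $\Reg_T\lesssim D\sqrt{A_T}-\sum_t\frac{1}{8\eta_{t+1}}\norm{\x_t-\x_{t-1}}^2$, then decomposes $\sqrt{A_T}$ at $t_0$ and uses AM-GM to expose the $\frac{L}{2}\sum\norm{\cdot}^2$ for cancellation, whereas you propose splitting the AdaGrad sum at $\tau$ directly; both routes lead to the same terms and the same final bound.
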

\pref{thm:static-con} implies optimal guarantees for both smooth and Lipschitz functions even in terms of the dependence on the domain diameter $D$: \textit{(i)} when online functions are $L$-smooth, i.e., $(L,1)$-Hölder smooth, our result recovers the optimal bound of $\smash{\O(D\sqrt{V_T} + LD^2)}$~\citep{COLT'12:VT}; and \textit{(ii)} when online functions are $G$-Lipschitz, i.e., $(2G,0)$-Hölder smooth, our result also recovers the worst-case minimax optimal guarantee $\smash{\O(GD\sqrt{T})}$~\citep{ICML'03:zinkvich}.
\begin{myRemark}
    We emphasize that our algorithm is \emph{strongly universal} (as defined in~\pref{def:universal-method}), since it does \emph{not} require knowledge of the Hölder smoothness parameters.
    In fact, even when restricted to gradient-variation online learning with smooth functions, our results imply an algorithm achieving an optimal  $\smash{\O(D\sqrt{V_T} + LD^2)}$ regret \emph{without} requiring prior knowledge of the smoothness parameter $L$, unlike previous works that depend on it~\citep{COLT'12:VT,NeurIPS'23:universal,JMLR'24:Sword++}.
    \endenv
\end{myRemark}

\subsection{Implication to Offline Convex Optimization}
\label{subsec:sco}

In this section, we achieve acceleration for offline convex and $(L_\nu, \nu)$-Hölder smooth optimization in the stochastic setting, as defined in \pref{subsec:setting-optimization}. This is accomplished by leveraging the effectiveness of the gradient-variation adaptivity presented in~\pref{subsec:convex} and combining it with the stabilized online-to-batch conversion~\citep{ICML'19:Ashok-acceleration}.
The proof can be found in~\pref{appendix:proof-offline-convex}.

\begin{myThm}
    \label{thm:offline-convex}
    Consider the optimization problem $\min_{\x \in \X}\ \ell(\x)$ in the stochastic setting, where the objective $\ell$ is convex and $(L_\nu, \nu)$-Hölder smooth, under \pref{assum:bounded-domain}. 
    Using the online-to-batch conversion (\pref{alg:online-to-batch}) with weights $\alpha_t = t$ for all $t \in [T]$, and choosing the online algorithm $\A_{\textnormal{OL}}$ as optimistic OGD~\pref{eq:OOGD} with following configurations:
    \begin{itemize}[itemsep=-4pt,topsep=-1pt]
        \item setting the optimism as $M_1=\mathbf{0}$, $\smash{M_t=\alpha_t \g(\xt_t)}$ with $\smash{\xt_t=\frac{1}{\alpha_{1:t}}(\sum_{s=1}^{t-1}\alpha_s \x_s + \alpha_t \x_{t-1})}$;
        \item setting the step size as ${\eta_t = \frac{D}{2\sqrt{A_{t-1}}}}$ with $A_t\triangleq \norm{\alpha_1\g(\xb_1)}^2 + \sum_{s=2}^{t}\norm{\alpha_s\g(\xb_s) - \alpha_s\g(\xt_s)}^2$.
    \end{itemize}
    Then we obtain the following last-iterate convergence rate for any $\xs\in\X$:
    \begin{equation*}
        \E\left[\ell(\xb_T)\right] - \ell(\xs) \le \O\left(\frac{L_\nu D^{1+\nu}}{T^{\frac{1+3\nu}{2}}} + \frac{\sigma D}{\sqrt{T}} + \frac{D\norm{\nabla \ell(\x_1)}}{T^2} \right).
    \end{equation*}
    Notably, this convergence rate is achieved without the knowledge of $L_\nu$ and $\nu$.
\end{myThm}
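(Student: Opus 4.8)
The plan is to combine the stabilized online-to-batch conversion of~\pref{eq:online-to-batch} with a weighted analogue of the gradient-variation analysis behind~\pref{thm:static-con}. First I would invoke~\pref{eq:online-to-batch} with $\alpha_t = t$, which reduces the task to bounding the expected weighted regret $\smash{\E[\Reg_T^{\boldsymbol{\alpha}}] = \E[\sum_t \alpha_t \inner{\g(\xb_t)}{\x_t - \xs}]}$; dividing by $\alpha_{1:T} = \Theta(T^2)$ will produce the stated rate once the weighted regret is shown to be $\smash{\O(L_\nu D^{1+\nu}T^{\frac{1-\nu}{2}} + \sigma D \cdot \text{(something)} \cdot T^{3/2} + D\norm{\nabla\ell(\x_1)})}$. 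To analyze the weighted regret, I would split each term using the unbiased oracle: $\smash{\alpha_t \g(\xb_t) = \alpha_t \nabla\ell(\xb_t) + \alpha_t \xib_t}$ where $\smash{\xib_t \triangleq \g(\xb_t) - \nabla\ell(\xb_t)}$ is the zero-mean noise. The noise-times-decision cross terms $\smash{\sum_t \alpha_t \inner{\xib_t}{\x_t - \xs}}$ have zero expectation only if $\x_t$ is independent of $\xib_t$; since optimistic OGD computes $\x_t$ from $\xh_t$ and the optimism $M_t$, which depend on noise only through rounds $\le t-1$, this holds, so these terms vanish in expectation — except for a careful treatment of the $\sigma D\sqrt{T}$-type contribution, which I expect to come from a stability-weighted noise term of the form $\sum_t \alpha_t^2 \norm{\xib_t}^2$ paired against a negative $\sum_t \frac{1}{\eta_{t-1}}\norm{\x_t - \x_{t-1}}^2$ term, or more directly from the adaptivity term $A_T$ which now contains noise.

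Next I would run the optimistic-OGD regret decomposition~\pref{eq:OOGD-analysis} but with the weighted surrogate gradients $\smash{g_t \triangleq \alpha_t \g(\xb_t)}$ and optimism $\smash{M_t = \alpha_t \g(\xt_t)}$, giving
\[
\Reg_T^{\boldsymbol{\alpha}} \lesssim \frac{D^2}{\eta_T} + \sum_{t=1}^T \eta_t \norm{\alpha_t\g(\xb_t) - \alpha_t\g(\xt_t)}^2 - \sum_{t=2}^T \frac{1}{\eta_{t-1}}\norm{\x_t - \x_{t-1}}^2.
\]
With the AdaGrad-style step size $\smash{\eta_t = D/(2\sqrt{A_{t-1}})}$ the first two terms telescope into $\smash{\O(D\sqrt{A_T})}$ by the standard lemma $\smash{\sum_t a_t/\sqrt{\sum_{s\le t} a_s} \lesssim \sqrt{\sum_t a_t}}$ (this is the self-bounding property of AdaGrad). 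So the crux becomes bounding $\sqrt{A_T}$, where $\smash{A_T = \norm{\alpha_1 \g(\xb_1)}^2 + \sum_{t=2}^T \alpha_t^2 \norm{\g(\xb_t) - \g(\xt_t)}^2}$. I would bound each summand by splitting $\smash{\g(\xb_t) - \g(\xt_t)}$ into a deterministic part $\smash{\nabla\ell(\xb_t) - \nabla\ell(\xt_t)}$ plus noise $\smash{\xib_t - \xit_t}$; the noise contributes $\smash{\sum_t \alpha_t^2 \cdot \O(\sigma^2)}$, hence a $\smash{\sigma \sqrt{\sum_t \alpha_t^2} = \O(\sigma T^{3/2})}$ term after the square root, which divided by $T^2$ yields the $\sigma D/\sqrt{T}$ rate. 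The deterministic part is where Hölder smoothness enters: by~\pref{lemma:inexact-smooth} with a free parameter $\delta>0$, $\smash{\norm{\nabla\ell(\xb_t) - \nabla\ell(\xt_t)}^2 \le L^2 \norm{\xb_t - \xt_t}^2 + 4L\delta}$ with $\smash{L = \delta^{(\nu-1)/(1+\nu)} L_\nu^{2/(1+\nu)}}$; crucially $\smash{\xb_t - \xt_t = \frac{\alpha_t}{\alpha_{1:t}}(\x_t - \x_{t-1})}$, so $\smash{\alpha_t^2 \norm{\xb_t - \xt_t}^2 = \frac{\alpha_t^4}{\alpha_{1:t}^2}\norm{\x_t-\x_{t-1}}^2 \lesssim \alpha_t^2 \norm{\x_t - \x_{t-1}}^2}$ (using $\alpha_{1:t} \ge \alpha_t$), which is exactly the quantity the negative term $\smash{-\sum_t \frac{1}{\eta_{t-1}}\norm{\x_t-\x_{t-1}}^2}$ can absorb — but only once $\smash{1/\eta_{t-1} \gtrsim \alpha_t^2 L^2}$.

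The main obstacle, and the reason for the virtual-clipping argument, is that the step size $\eta_t$ is data-dependent and the smoothness-induced $L$ is algorithmically unknown, so the cancellation $\smash{\eta_t \alpha_t^2 L^2 \norm{\x_t - \x_{t-1}}^2 \le \frac{1}{\eta_{t-1}}\norm{\x_t - \x_{t-1}}^2}$ cannot be enforced term by term a priori. I would resolve this exactly as in the proof of~\pref{thm:static-con}: apply~\pref{lemma:virtual-clipping} to split the horizon at the (analysis-only) round $\tau$ where $\smash{\eta_{t}}$ first drops below a threshold proportional to $1/(\alpha_t L)$ — noting that since $\alpha_t = t$ grows, one must track the ratio $\eta_t \alpha_t^2$ rather than $\eta_t$ alone, which is the technical subtlety specific to the weighted setting. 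For $t > \tau$ the negative terms dominate and the smoothness contribution to $A_T$ is controlled; for $t \le \tau$, the defining inequality $\smash{\eta_{\tau} \propto D/\sqrt{A_{\tau-1}} \gtrsim 1/(\alpha_\tau L)}$ forces $\smash{\sqrt{A_{\tau-1}} \lesssim \alpha_\tau L D}$, and combined with the residual $\delta$-corruption $\smash{\sum_{t\le\tau}\alpha_t^2 \cdot 4L\delta}$ this yields a bound of the form $\smash{\sqrt{A_T} \lesssim \alpha_T L D + \sigma \sqrt{\sum_t \alpha_t^2} + \norm{\nabla\ell(\x_1)}}$ plus a $\delta$-dependent piece; finally optimizing $\delta$ as in the remark after~\pref{lemma:inexact-smooth} converts $\smash{\alpha_T L D}$ into $\smash{L_\nu D^{1+\nu} \cdot \text{(power of }T\text{)}}$. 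Plugging $\smash{\alpha_T = T}$ and dividing by $\smash{\alpha_{1:T} = \Theta(T^2)}$ then gives the three claimed terms; I would double-check the exponent bookkeeping so that $\smash{L D \alpha_T / T^2}$ combined with the $\delta$-optimization indeed produces $\smash{T^{-(1+3\nu)/2}}$, which is the one place an off-by-a-power error is easy to make.
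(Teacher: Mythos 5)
Your overall scaffolding matches the paper's proof closely: stabilized online-to-batch conversion with $\alpha_t=t$, the weighted optimistic-OGD decomposition under the AdaGrad-style step size, a three-way split of the optimism error into the deterministic gradient variation plus two noise pieces, the inexact-smoothness inequality of \pref{lemma:inexact-smooth}, the virtual-clipping split at an analysis-only round, and the final $\delta$-optimization. However, there is a concrete error in the cancellation step that breaks the acceleration.

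You bound $\alpha_t^2\norm{\xb_t-\xt_t}^2 = \frac{\alpha_t^4}{\alpha_{1:t}^2}\norm{\x_t-\x_{t-1}}^2 \lesssim \alpha_t^2\norm{\x_t-\x_{t-1}}^2$ using only $\alpha_{1:t}\ge\alpha_t$, and you conclude that cancellation against $-\frac{1}{8\eta_{t+1}}\norm{\x_t-\x_{t-1}}^2$ requires $1/\eta_{t-1}\gtrsim\alpha_t^2 L^2$, so that one must ``track $\eta_t\alpha_t^2$ rather than $\eta_t$.'' This threshold is never attained. Each summand of $A_t$ is $\O(L^2D^2 + t^2(\sigma^2 + L\delta))$, so $\sqrt{A_{t-1}} = \O\big(\sqrt{t}\,LD + t^{3/2}(\sigma + \sqrt{L\delta})\big)$, which can never dominate $D\alpha_t^2 L \sim D L t^2$ for large $t$. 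Consequently your residual estimate $\sqrt{A_T}\lesssim \alpha_T L D = TLD$ is off by a factor of $T$: dividing by $\alpha_{1:T}=\Theta(T^2)$ only yields the non-accelerated rate $\O(LD/T)$ in the smooth case, not the claimed $\O(LD^2/T^2)$.

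The missing observation is that for $\alpha_t=t$ one has $\alpha_{1:t}=t(t+1)/2$, hence $\alpha_t^2/\alpha_{1:t}=2t/(t+1)\le 2$ and therefore $\alpha_t^4/\alpha_{1:t}^2\le 4$ is a \emph{constant}. This gives $\alpha_t^2\norm{\xb_t-\xt_t}^2 \le 4\norm{\x_t-\x_{t-1}}^2$ with \emph{no} residual $\alpha_t$ weight, so the cancellation against the negative term uses exactly the same threshold as in the unweighted \pref{thm:static-con}, namely $\sqrt{A_t}>4LD$ (equivalently $\frac{1}{8\eta_{t+1}}\ge L$), a condition that does eventually hold. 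The prefix then satisfies $\sqrt{A_{t_0}}\le\norm{\nabla f_1(\x_1)}+4LD$, a constant in $T$, and after division by $\alpha_{1:T}=\Theta(T^2)$ produces the $LD^2/T^2$ and $D\norm{\nabla\ell(\x_1)}/T^2$ terms. With this constant-weight identity in place, the noise contribution $\sigma\sqrt{\sum_t\alpha_t^2}=\O(\sigma T^{3/2})$ and the $\delta$-corruption $\sqrt{L\delta\sum_t\alpha_t^2}=\O(\sqrt{L\delta}\,T^{3/2})$ both go through, and choosing $\delta = L_\nu D^{1+\nu}T^{-(3+3\nu)/2}$ gives the $T^{-(1+3\nu)/2}$ exponent as you anticipate.
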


\pref{thm:offline-convex} achieves \emph{strong universality} due to its adaptivity to Hölder smoothness, matching the best-known result of~\citet{rodomanov2024universal}, while our analysis is arguably much simpler due to explicitly decoupling the two algorithmic components~---~adaptive step sizes and gradient evaluation on weighted averaged iterates.
For $L$-smooth and $G$-Lipschitz functions, our result recovers the optimal rates of $\smash{\O(LD^2/T^2+\sigma D/\sqrt{T})}$ and $\smash{\O((G+\sigma)D/\sqrt{T})}$, respectively.

\begin{myRemark}
We have achieved strong universality in constrained stochastic optimization. However, the unconstrained setting presents additional challenges and remains less explored, especially with strong universality in unconstrained stochastic optimization still an open question~\citep{rodomanov2024universal}. Although there have been some partial advancements in this area.
In the deterministic setting, strong universality has been achieved:~\citet{orabona2023normalized} attained an $\smash{\O(L_\nu\norm{\xs}^{1+\nu}/T^{\nicefrac{(1+\nu)}{2}})}$ rate, while~\citet{MP'24:Li-universal} obtained an accelerated $\smash{\O(L_\nu\norm{\xs}^{1+\nu}/T^{\nicefrac{(1+3\nu)}{2}})}$ rate with the pre-specified accuracy.
In the stochastic setting, progress has been limited to weak universality and sub-optimal results~\citep{ICML'23:DoG,COLT'24:U-DOG}.
To the best of our knowledge, achieving strong universality in unconstrained and stochastic optimization remains an open question. We leave the extension of our method to unconstrained optimization as an interesting future direction.
\endenv
\end{myRemark}
%!TEX root = ../main.tex

\section{Strongly Convex Optimization with Hölder Smoothness}
\label{sec:strongly-convex}
This section focuses on strongly convex optimization with Hölder smoothness. \pref{subsec:strongly-convex-online} establishes gradient-variation regret bounds for online learning,~\pref{subsec:strongly-convex-offline} obtains a \emph{weakly universal} method for offline optimization, and~\pref{subsec:offline-str-search} develops an optimization algorithm that does not require the smoothness parameter or strong convexity curvature.

\subsection{Gradient-Variation Online Strongly Convex Optimization with Hölder Smoothness}
\label{subsec:strongly-convex-online}
In this part, we study online optimization with strongly convex and Hölder smooth functions.
In \pref{thm:str-online}, we demonstrate that optimistic OGD, when properly configured, achieves the gradient-variation regret guarantee. The proof is provided in~\pref{appendix:proof-str-online}.
\begin{myThm}
    \label{thm:str-online}
    Consider online learning with $\lambda$-strongly convex and $(L_\nu,\nu)$-Hölder smooth functions. Under \pref{assum:bounded-domain}, optimistic OGD in~\pref{eq:OOGD} with $M_1=\mathbf{0}$, $M_t=\nabla f_{t-1}(\x_{t-1})$ for all $t\ge 2$, and step size $\eta_t = \frac{6}{\lambda t}$ for all $t\in[T]$, ensures the following regret bound:
    \begin{align*}
        \Reg_T &\le \O\sbr{ \frac{\widehat{G}_{\max}^2}{\lambda} \log\sbr{ 1 + \frac{V_T}{\widehat{G}_{\max}^2} } + \frac{L_\nu^2 D^{2\nu}}{\lambda}(\log T)^{\frac{1-\nu}{1+\nu}} + \frac{\norm{\nabla f_1(\x_1)}^2}{\lambda} },
    \end{align*}
    without the knowledge of $L_\nu$ and $\nu$, where $\smash{\widehat{G}_{\max}^2 \triangleq \max_{t\in[T-1]}\sup_{\x\in\X}\norm{\nabla f_t(\x) - \nabla f_{t+1}(\x)}^2}$.
\end{myThm}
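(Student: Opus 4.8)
The plan is to run the optimistic OGD analysis specialized to $\lambda$-strongly convex losses and then inject H\"older smoothness via~\pref{lemma:inexact-smooth} with a carefully tuned inexactness level. \textit{Step 1 (base regret with a negative movement term).} By $\lambda$-strong convexity, $f_t(\x_t)-f_t(\xs)\le\inner{\nabla f_t(\x_t)}{\x_t-\xs}-\tfrac{\lambda}{2}\norm{\x_t-\xs}^2$. Combining this with the standard optimistic OGD inequality for the update~\pref{eq:OOGD}~\citep{COLT'12:VT} (and lower-bounding the stability term $\norm{\x_t-\xh_t}^2$ by $\tfrac12\norm{\x_t-\x_{t-1}}^2$ minus a reabsorbable optimism residual), I would first obtain, for an absolute constant $c>0$,
\begin{align*}
\Reg_T&\lesssim\sum_{t=1}^{T}\Big[\tfrac{1}{2\eta_t}\big(\norm{\xh_t-\xs}^2-\norm{\xh_{t+1}-\xs}^2\big)-\tfrac{\lambda}{2}\norm{\x_t-\xs}^2\Big]\\
&\qquad+\sum_{t=1}^{T}\eta_t\norm{\nabla f_t(\x_t)-M_t}^2-c\sum_{t=2}^{T}\tfrac{1}{\eta_{t-1}}\norm{\x_t-\x_{t-1}}^2.
\end{align*}
With $\eta_t=\tfrac{6}{\lambda t}$ one has $\tfrac{1}{2\eta_t}-\tfrac{1}{2\eta_{t-1}}=\tfrac{\lambda}{12}$, so an Abel summation turns the telescoping part into $\tfrac{\lambda}{12}\sum_t\norm{\xh_t-\xs}^2$, which is dominated by $\tfrac{\lambda}{2}\sum_t\norm{\x_t-\xs}^2$ up to a few constant boundary terms (the precise constant $6$ being what ensures this cancellation still leaves a negative-movement budget for Step~3). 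Using $M_1=\mathbf{0}$ on the $t=1$ term, what survives is $\Reg_T\lesssim\tfrac{1}{\lambda}\norm{\nabla f_1(\x_1)}^2+\sum_{t=2}^{T}\big(\eta_t\norm{\nabla f_t(\x_t)-\nabla f_{t-1}(\x_{t-1})}^2-\tfrac{c}{\eta_{t-1}}\norm{\x_t-\x_{t-1}}^2\big)$.

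\textit{Step 2 (splitting; gradient-variation term).} I split $\norm{\nabla f_t(\x_t)-\nabla f_{t-1}(\x_{t-1})}^2\le2\sup_{\x}\norm{\nabla f_t(\x)-\nabla f_{t-1}(\x)}^2+2\norm{\nabla f_{t-1}(\x_t)-\nabla f_{t-1}(\x_{t-1})}^2$. The first piece, weighted by $\eta_t\asymp\tfrac{1}{\lambda t}$, yields $\tfrac{1}{\lambda}\sum_{t\ge2}\tfrac{1}{t}\sup_{\x}\norm{\nabla f_t(\x)-\nabla f_{t-1}(\x)}^2$; splitting this sum at index $\asymp V_T/\widehat{G}_{\max}^2$ and using $\sup_{\x}\norm{\cdot}^2\le\widehat{G}_{\max}^2$ on the head together with $\tfrac1t\lesssim\widehat{G}_{\max}^2/V_T$ on the tail gives the claimed $\tfrac{\widehat{G}_{\max}^2}{\lambda}\log(1+V_T/\widehat{G}_{\max}^2)$ term.

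\textit{Step 3 (inexact smoothness and tuning $\delta$).} To the second (``same-function'') piece I apply~\pref{lemma:inexact-smooth} with the analysis-only choice $\delta\asymp L_\nu D^{1+\nu}/\log T$, for which $L=\delta^{(\nu-1)/(1+\nu)}L_\nu^{2/(1+\nu)}\asymp L_\nu D^{-(1-\nu)}(\log T)^{(1-\nu)/(1+\nu)}$, giving $2\eta_t\norm{\nabla f_{t-1}(\x_t)-\nabla f_{t-1}(\x_{t-1})}^2\le2\eta_t L^2\norm{\x_t-\x_{t-1}}^2+8\eta_t L\delta$. Since $\eta_t\eta_{t-1}\asymp(\lambda t)^{-2}$, the coefficient $2\eta_t L^2-\tfrac{c}{\eta_{t-1}}$ is non-positive once $t\gtrsim L/\lambda=:\tau$, so for $t\ge\tau$ the $L^2\norm{\x_t-\x_{t-1}}^2$ terms are killed by the negative movement and the corruption sums to $\sum_{t\ge2}8\eta_t L\delta\asymp\tfrac{L\delta}{\lambda}\log T\asymp\tfrac{L_\nu^2 D^{2\nu}}{\lambda}(\log T)^{(1-\nu)/(1+\nu)}$; for $t<\tau$ I instead bound the H\"older difference directly by $L_\nu^2\norm{\x_t-\x_{t-1}}^{2\nu}\le L_\nu^2 D^{2\nu}$ (\pref{assum:bounded-domain}), contributing $\lesssim\tfrac{L_\nu^2 D^{2\nu}}{\lambda}\log\tau=\tilde{\O}(L_\nu^2 D^{2\nu}/\lambda)$ since $\log\tau=\O(\log\log T)$. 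Adding the three contributions yields the stated bound; specializing $\nu=1$ recovers $\tfrac{L^2 D^2}{\lambda}$ and $\nu=0$ with $L_\nu=2G$ recovers $\tfrac{G^2}{\lambda}\log T$.

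The main obstacle is exactly the tuning in Step~3. Unlike the convex case of~\pref{thm:static-con}, where the AdaGrad step size can be ``virtually clipped'' at scale $1/L$, here the step size is the fixed $6/(\lambda t)$ and the effective smoothness $L=L(\delta)$ exists only in the analysis, so $\delta$ must be chosen so that simultaneously (i) the corruption $\tfrac{L\delta}{\lambda}\log T$ comes out at order $L_\nu^2 D^{2\nu}(\log T)^{(1-\nu)/(1+\nu)}/\lambda$ and (ii) the pre-threshold window $t<\tau\asymp L/\lambda$ is short enough that its $\log\tau$-scaled residue is absorbed. Extracting the sharp exponent $(1-\nu)/(1+\nu)$ rather than a crude $\log T$, while interleaving this argument with the strong-convexity cancellation of Step~1, is the delicate part, and it is what pins down the particular constant in the step size.
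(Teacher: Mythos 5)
Your plan is correct in its final answer but follows a genuinely different route from the paper's proof, and it's worth being explicit about what differs. The paper never tries to preserve the consecutive-iterate negative term $\norm{\x_t-\x_{t-1}}^2$. Instead, it keeps \emph{half} of the strong-convexity Bregman gap $\D_{f_t}(\xs,\x_t)$ as the negative budget (splitting $\D_{f_t}(\xs,\x_t)\ge\tfrac{\lambda}{4}\norm{\xs-\x_t}^2+\tfrac12\D_{f_t}(\xs,\x_t)$), proves $\textsc{Term-A}\le\textsc{Term-B}$ by pairing the telescoping residual $\norm{\xs-\xh_{t+1}}^2-2\norm{\xs-\x_t}^2\le 2\norm{\x_t-\xh_{t+1}}^2$ with the stability estimate $\norm{\x_t-\xh_{t+1}}\le\eta_t\norm{\nabla f_t(\x_t)-M_t}$, and then decomposes the gradient deviation into \emph{three} pieces by inserting $\nabla f_t(\xs)$ and $\nabla f_{t-1}(\xs)$, not $\nabla f_{t-1}(\x_t)$. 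The two ``same-function'' pieces $\norm{\nabla f_t(\x_t)-\nabla f_t(\xs)}^2$ and $\norm{\nabla f_{t-1}(\xs)-\nabla f_{t-1}(\x_{t-1})}^2$ are then controlled by the \emph{Bregman} form of inexact smoothness (\pref{lemma:inexact-smooth-bregman}: $\norm{\nabla f(\x)-\nabla f(\y)}^2\le 2L\D_f(\x,\y)+2L\delta$), so the cancellation happens against $-\tfrac12\D_{f_t}(\xs,\x_t)$ rather than against $\norm{\x_t-\x_{t-1}}^2$, and the pre-threshold window $\tfrac{144L}{\lambda t}>\tfrac12$ is absorbed via the $\D_{f_t}\le L_\nu D^{1+\nu}$ bound and \pref{lemma:sum-a/t-1}. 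The final $\delta$ tuning is identical to yours.

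Your alternative — the two-term split $\nabla f_t(\x_t)\!-\!\nabla f_{t-1}(\x_t)$ plus $\nabla f_{t-1}(\x_t)\!-\!\nabla f_{t-1}(\x_{t-1})$, then \pref{lemma:inexact-smooth} and cancellation against the movement negative — is the more familiar convex-case mechanism, and the rate comes out the same. What it buys is that you only need the elementary \pref{lemma:inexact-smooth} rather than its Bregman refinement, and you never need to retain the extra $\tfrac12\D_{f_t}(\xs,\x_t)$ from strong convexity. What it costs is exactly what you flag: the negative movement term in the optimistic OGD bound must survive \emph{two} competing demands — absorbing the Young residual $\tfrac{1}{4\eta_t}\norm{\x_t-\xh_{t+1}}^2$ and absorbing the telescoping residual $\tfrac{\lambda}{6}\norm{\x_t-\xh_{t+1}}^2=\tfrac{1}{t\eta_t}\norm{\x_t-\xh_{t+1}}^2$ — before a constant fraction can be reindexed (splitting $\x_t-\x_{t-1}$ through $\xh_t$) into $-c\norm{\x_t-\x_{t-1}}^2/\eta_{t-1}$. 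This works for $t$ above a small absolute constant, and the first few rounds contribute $\O(\lambda D^2)$, which is $\lesssim L_\nu^2 D^{2\nu}/\lambda$ since $\lambda D^{1-\nu}\lesssim L_\nu$ by comparing the strong-convexity and H\"older lower/upper bounds on $\D_f$. So the approach does close, but the accounting you wave past in Step~1 is the genuinely delicate part; the paper's Bregman-based cancellation sidesteps this budgeting entirely, which is why it is the cleaner of the two.

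One small caution: in Step~1 you should pair the telescoping residual with $\xh_{t+1}$ (not $\xh_t$), i.e.\ use $\norm{\xs-\xh_{t+1}}^2\le 2\norm{\xs-\x_t}^2+2\norm{\x_t-\xh_{t+1}}^2$ as the paper does. Pairing with $\xh_t$ would leave a term scaling with $\norm{M_t}^2=\norm{\nabla f_{t-1}(\x_{t-1})}^2$, which the theorem's assumptions do not bound; pairing with $\xh_{t+1}$ produces $\norm{\nabla f_t(\x_t)-M_t}^2$ via the stability lemma, which is exactly the adaptivity quantity you can afford.
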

\pref{thm:str-online} recovers best-known results under both smoothness and Lipschitzness: $\O(\frac{\widehat{G}_{\max}^2}{\lambda}\log (1 + {V_T}/{\widehat{G}_{\max}^2}) + \frac{1}{\lambda}L^2D^2)$ for $L$-smooth functions~\citep{JMLR'24:OMD4SEA} and $\O(\frac{G^2}{\lambda}\log T )$ for $G$-Lipschitz functions~\citep{MLJ'07:Hazan-logT,COLT'08:OCO-lowerbound}, respectively.

\subsection{Implication to Offline Strongly Convex Optimization}
\label{subsec:strongly-convex-offline}

In this part, we develop a weakly universal algorithm for deterministic strongly convex optimization. This is done by leveraging the gradient-variation adaptivity with an online-to-batch conversion tailored for strongly convex optimization, and a carefully designed smoothness detection scheme.

We first introduce the motivation of our solution.
As explained in~\pref{subsec:o2b-conversion}, the online-to-batch conversion transforms the convergence rate into the regret divided by the total weight $\smash{\alpha_{1:T}=\sum_{t=1}^T \alpha_t}$. 
To minimize regret, we employ an online algorithm with gradient-variation adaptivity, which leverages smoothness to convert the adaptivity term, allowing the positive term to be canceled out by the corresponding negative term. 
Now, let us consider the $\lambda$-strongly convex and $L_\ell$-smooth case.
By tailoring an online-to-batch conversion specifically for strongly convex optimization, i.e.,~\pref{lemma:online-to-batch-strongly-convex} in~\pref{appendix:proof-offline-str}, the cancellation hinges on analyzing the following expression:
\begin{equation}
    \label{eq:cancellation-term}
    \frac{2\alpha_t^2}{\lambda \alpha_{1:t}} \big\| \nabla\ell(\overline{\x}_t) - \nabla\ell(\overline{\x}_{t-1})\big\|^2 - \alpha_{1:t-1} \D_{\ell}(\overline{\x}_{t-1}, \overline{\x}_t),
\end{equation}
where $\xb_t = \frac{1}{\alpha_{1:t}}\sum_{s=1}^{t}\alpha_s \x_s$.
If we directly use the property $\smash{\norm{\nabla \ell(\x) - \nabla \ell(\y)}^2 \le 2L_\ell\D_{\ell}(\y,\x)}$ of smoothness~\citep[Theorem~2.1.5]{book'18:Nesterov-OPT} to bound the positive term, we would need $\alpha_t$ to satisfy $\smash{4\kappa \alpha_t^2 \le \alpha_{1:t}\alpha_{1:t-1}}$, where $\smash{\kappa \triangleq L_\ell / \lambda}$. However, as we aim to design a universal algorithm that adapts to both $L_\ell$-smooth and non-smooth settings, the design of $\alpha_t$ must not rely on $L_\ell$. 

Then, we design a novel smoothness-detection scheme.
First, denoting the empirical smoothness parameter at the $t$-th iteration by ${L_t\triangleq \frac{\norm{\nabla \ell(\xb_t) - \nabla \ell(\xb_{t-1})}}{2\D_\ell(\xb_{t-1}, \xb_t)}}$, which naturally satisfies $L_t \le L_\ell$, we proceed to analyze the cancellation between the following two terms:
\begin{equation*}
    \textnormal{\pref{eq:cancellation-term}} = \sbr{ \frac{4\beta_t^2 L_t}{\lambda (1+\beta_t)} - 1 }\alpha_{1:t-1}\D_{\ell}(\overline{\x}_{t-1}, \overline{\x}_t),
\end{equation*}
where we define $\smash{\beta_t \triangleq \alpha_t / \alpha_{1:t-1}}$ for simplicity.
Ideally, the cancellation holds if $\smash{\beta_t \le \sqrt{\lambda / (4L_t)}}$. However, a challenge remains: $L_t$ is obtained only after $\beta_t$ has been determined. This arises from the use of optimistic OGD as the online algorithm in the online-to-batch conversion, requiring an additional update step that integrates $\beta_t$ information before computing $\x_t$ and consequently $L_t$.

To this end, we designed a method that first guesses a $\beta_t$, and then decides whether to adjust the guess based on the observed $L_t$.
Specifically, if the guessed $\beta_t$ fails to meet the requirements $\smash{\beta_t \le \sqrt{\lambda / (4L_t)}}$, we discard the current $\x_t$, halve $\beta_t$, and recompute $\x_t$. We then repeat this guess-and-check procedure until the requirement is satisfied.
As long as we can ensure a reasonable lower bound for $\beta_t$, the number of wasted updates will be logarithmic, which will only add a multiplicative constant factor to the final bound.
The simplest design is to explicitly define a lower bound $\bar{\beta}$ for $\beta_t$, which acts as a safeguard to guarantee a convergence rate in non-smooth scenarios.
For the $L_\ell$-smooth case, our mechanism implicitly provides an adaptive lower bound $\frac{1}{2}\sqrt{\lambda / (4L_\ell)}$. This arises from the fact that when $\beta_t \le \sqrt{\lambda / (4L_\ell)}$, we directly obtain $\beta_t \le \sqrt{\lambda / (4L_t)}$ since $L_\ell\ge L_t$. In this case, $\beta_t$ will no longer be decreased.

To conclude, there are three key ingredients in our solution: \emph{(i)} online-to-batch conversion tailored for strongly convex optimization (i.e.,~\pref{lemma:online-to-batch-strongly-convex} in~\pref{appendix:proof-offline-str}), \emph{(ii)} the guess-and-check smoothness detection scheme, and \emph{(iii)} a \emph{one-step} variant of optimistic OGD as the online algorithm, which combines the two updates in~\pref{eq:OOGD} into one (i.e.,~\pref{lemma:one-step-oomd} in~\pref{appendix:lemmas-for-OOGD}). We provide the convergence guarantee in~\pref{thm:offline-str} with the proof in~\pref{appendix:proof-offline-str}.

\begin{algorithm}[!t]
    \caption{Universal Accelerated Strongly Convex Optimization}
    \label{alg:offline-str}
    \begin{algorithmic}[1]
    \REQUIRE Strong convexity curvature $\lambda$, $\beta_1$ and threshold $\bar{\beta}$, oracle queries budget $T$ and $\x_1\in\X$.
    \STATE \textbf{Initialization:} ${\alpha_1=1, \xb_1 = \x_1, M_1=\mathbf{0}}$, index $t = 1$, oracle queries count $c=1$.
    \STATE \textbf{while} {$c< T$} \textbf{do} \tikzmark{while1}
        \STATE \quad Construct $\g_t = \alpha_t \nabla \ell(\xb_t) + \lambda\alpha_t(\x_t - \xb_t)$, set $\beta_{t+1} = \beta_t$
        \STATE \quad \textbf{while} {$c< T$} \textbf{do} \tikzmark{while2}
            \STATE \qquad Set ${\alpha_{t+1}=\beta_{t+1} \alpha_{1:t}}$, calculate ${\xt_{t+1}=\frac{1}{\alpha_{1:t+1}}(\alpha_{1:t}\xb_t + \alpha_{t+1} \x_t)}$ \LineComment{Guess procedure} \tikzmark{guess}
            \STATE \qquad Construct ${M_{t+1} = \alpha_{t+1}\nabla \ell(\xb_t) + \lambda\alpha_{t+1}(\x_t - \xt_{t+1})}$
            \STATE \qquad Update ${\x_{t+1} = \Pi_{\X}[ \x_t - \eta_t( \g_t - M_t + M_{t+1} ) ]}$ with ${\eta_t = \frac{1}{\lambda \alpha_{1:t}}}$
            \STATE \qquad Calculate ${\xb_{t+1}=\frac{1}{\alpha_{1:t+1}}(\alpha_{1:t}\xb_t + \alpha_{t+1} \x_{t+1})}$, query $\nabla \ell(\xb_{t+1})$, count $c\gets c+1$ 
            \STATE \qquad \textbf{if} {$\beta_{t+1}=\bar{\beta}$} \textbf{then:} $t\gets t+1$, \textbf{break}
            \STATE \qquad Calculate $L_{t+1} \triangleq \frac{\norm{\nabla \ell(\xb_t) - \nabla \ell(\xb_{t+1})}^2}{2\D_\ell(\xb_t, \xb_{t+1})}$ \LineComment{Check procedure} \tikzmark{check}
            \STATE \qquad \textbf{if} {$\beta_{t+1}\le \sqrt{\frac{\lambda}{4L_{t+1}}}$} \textbf{then:} $t\gets t+1$, \textbf{break}
            \STATE \qquad \textbf{else} $\beta_{t+1} = \max\{\frac{\beta_{t+1}}{2}, \bar{\beta}\}$
    \ENSURE $\xb_\tau$ with $\tau=t$ the final iteration.
    \end{algorithmic}
    % draw lines
    \begin{tikzpicture}[remember picture,overlay]
        \draw (pic cs:while1)++(-2.2cm,-0.1cm) |- ++(0.1cm,-4.7cm);
        \draw (pic cs:while2)++(-2.2cm,-0.1cm) |- ++(0.1cm,-3.9cm);
        \draw (pic cs:guess)++(-0.1cm,-0.1cm) |- ++(-0.1cm,-1.44cm);
        \draw (pic cs:check)++(-0.1cm,-0.1cm) |- ++(-0.1cm,-1.1cm);
    \end{tikzpicture}
\end{algorithm}
\begin{myThm}
    \label{thm:offline-str}
    Consider the optimization problem $\min_{\x \in \X}\ \ell(\x)$ in the deterministic setting, where the objective $\ell$ is $\lambda$-strongly convex and $G$-Lipschitz.\footnote{In fact, for strongly convex functions, Lipschitz continuity implicitly implies that the domain $\X$ is bounded.} Then~\pref{alg:offline-str} with $\beta_1=1$ and threshold $\bar{\beta}=\exp(\frac{1}{T}\ln T)-1$ ensures that
        \begin{equation*}
            \ell(\xb_\tau) - \ell(\xs) \le \O\left( \frac{G^2}{\lambda}\min\bbr{ \exp\sbr{\frac{-T}{6\sqrt{\kappa}}}, \frac{\log T}{T} } \right),
        \end{equation*}
        without the knowledge of $G$ or the smoothness parameter $L_\ell$, where $\kappa\triangleq L_\ell/\lambda$ denotes the condition number, and we define $L_\ell\triangleq \infty$ if $\ell$ is non-smooth.
\end{myThm}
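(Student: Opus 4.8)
I would assemble the three ingredients described above \pref{alg:offline-str}. First invoke the online-to-batch conversion tailored to strongly convex objectives to reduce the sub-optimality gap to a weighted regret divided by $\alpha_{1:\tau}$, where the $\lambda$-strong convexity of $\ell$ is routed into a sequence of negative Bregman terms $-\alpha_{1:t-1}\D_\ell(\xb_{t-1},\xb_t)$. Then plug in the regret bound of the one-step variant of optimistic OGD used inside \pref{alg:offline-str}, whose adaptivity term is exactly the first summand of \pref{eq:cancellation-term}. Pairing these against the negative Bregman terms, and noting that for a $\lambda$-strongly convex and $G$-Lipschitz objective the diameter of $\X$ and $\norm{\nabla\ell(\x_1)}$ are $O(G/\lambda)$ and $O(G)$ (the footnote's observation), the initialization contribution collapses to $O(G^2/\lambda)$ and one is left with
\[
\ell(\xb_\tau) - \ell(\xs) \;\le\; \frac{1}{\alpha_{1:\tau}}\left( O\!\left(\frac{G^2}{\lambda}\right) + \sum_{t=2}^{\tau}\left(\frac{4\beta_t^2 L_t}{\lambda(1+\beta_t)}-1\right)\alpha_{1:t-1}\,\D_{\ell}(\xb_{t-1},\xb_t)\right),
\]
where $\beta_t=\alpha_t/\alpha_{1:t-1}$ as in the main text.

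\textbf{Controlling the summation.}
I would split the rounds by how the inner loop of \pref{alg:offline-str} terminated. If the check at its penultimate line was executed and passed, then $\beta_t\le\sqrt{\lambda/(4L_t)}$, so $4\beta_t^2 L_t/(\lambda(1+\beta_t))\le 1$ and that round contributes a nonpositive term, which I drop. If instead the loop broke early because $\beta_t$ had been clamped to $\bar\beta$, I bound the (possibly positive) term by substituting the definition $L_t=\norm{\nabla\ell(\xb_{t-1})-\nabla\ell(\xb_t)}^2/(2\D_\ell(\xb_{t-1},\xb_t))$ and the Lipschitz bound $\norm{\nabla\ell(\cdot)}\le G$, which yields a per-round contribution $O(\bar\beta^2 G^2\alpha_{1:t-1}/\lambda)$. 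Summing over rounds, I use that $\beta_s\ge\bar\beta$ for every $s$, hence $\alpha_{1:t}/\alpha_{1:\tau}\le(1+\bar\beta)^{-(\tau-t)}$, so $\sum_{t\le\tau}\alpha_{1:t}\le\alpha_{1:\tau}/\bar\beta$ by a geometric-series estimate; dividing by $\alpha_{1:\tau}$, the uncancelled part is $O(\bar\beta G^2/\lambda)$. With the prescribed $\bar\beta=\exp(\tfrac1T\ln T)-1=\Theta(\tfrac{\log T}{T})$ this is $O\big(\tfrac{G^2}{\lambda}\cdot\tfrac{\log T}{T}\big)$, which is the non-smooth half of the claimed rate.

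\textbf{Lower-bounding $\alpha_{1:\tau}$.}
Since each guess costs one oracle query, I must argue $\tau$ is close to $T$: the $\beta_t$ are non-increasing, and by a telescoping argument the total number of discarded (halved-away) guesses is $O(\log(1/\bar\beta))=O(\log T)$, so $\tau\ge T-O(\log T)$. Combined with $\beta_s\ge\bar\beta$, this already gives $\alpha_{1:\tau}\ge(1+\bar\beta)^{\tau-1}\ge T\cdot e^{-O((\log T)^2/T)}=\Omega(T)$, which shrinks the $O(G^2/\lambda)$ initialization term to $O(G^2/(\lambda T))$, dominated by the previous bound. For the smooth regime ($L_\ell<\infty$), I would show that any guess $\beta\le\sqrt{\lambda/(4L_\ell)}=\tfrac{1}{2\sqrt{\kappa}}$ automatically passes the check — because $L_t\le L_\ell$ always holds by smoothness — so the halving stops with $\beta_t\ge\min\{1,\tfrac{1}{4\sqrt\kappa}\}$, and, whenever $\tfrac{1}{4\sqrt\kappa}>\bar\beta$, $\beta_t$ is never clamped, eliminating all uncancelled terms and reducing the wasted guesses to $O(\log\kappa)$. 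Then $\alpha_{1:\tau}\ge(1+\tfrac{1}{4\sqrt\kappa})^{\tau-1}\ge e^{(\ln 2)(\tau-1)/(4\sqrt\kappa)}$ using $\ln(1+x)\ge x\ln 2$ on $[0,1]$, and with $\tau-1=T-O(\log\kappa)$ this is at least $\exp(T/(6\sqrt\kappa))$ after absorbing constants, giving $\ell(\xb_\tau)-\ell(\xs)\le O\big(\tfrac{G^2}{\lambda}\exp(-T/(6\sqrt\kappa))\big)$. Reporting the smaller of the smooth and non-smooth bounds yields the stated $\min$; the regime $\tfrac{1}{4\sqrt\kappa}\le\bar\beta$ is harmless because there $\exp(-T/(6\sqrt\kappa))\gtrsim\log T/T$, so the non-smooth bound already suffices.

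\textbf{Main obstacle.}
The crux is the circular dependency intrinsic to \pref{alg:offline-str}: the empirical smoothness $L_t$ depends on $\xb_t$, which depends on the guessed $\beta_t$, yet $\beta_t$ must be fixed before $L_t$ is observable. The guess-and-check loop resolves this operationally, but the analysis must establish, with \emph{no} pointwise control over the $L_t$ arising in clamped rounds, both that (i) the telescoping keeps the total number of wasted gradient queries at $O(\log T)$ and (ii) the geometric-series argument tames the uncancelled positive terms uniformly. The remaining delicate step is the constant bookkeeping needed to land the exponent at exactly $6\sqrt\kappa$ rather than merely $\Theta(\sqrt\kappa)$, which hinges on the floor $\beta_t\ge\tfrac{1}{4\sqrt\kappa}$ together with $\ln(1+x)\ge x\ln 2$ and on not losing more than a lower-order amount in $\tau$ to wasted queries in the smooth regime.
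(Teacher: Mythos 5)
Your proposal follows essentially the same route as the paper's proof: reduce via the strongly-convex online-to-batch conversion (\pref{lemma:online-to-batch-strongly-convex}), pair the optimistic-OGD adaptivity term against the negative Bregman terms, split rounds by whether the guess-and-check passed or was clamped to $\bar\beta$, bound the clamped rounds with the Lipschitz estimate $\|\nabla\ell(\xb_t)-\nabla\ell(\xb_{t-1})\|\le 2G$ and a geometric series yielding $O(\bar\beta G^2/\lambda)$, and close with the case split on $1/(4\sqrt\kappa)$ versus $\bar\beta$, using the per-step lower bound $\beta_t\ge\max\{\bar\beta,\tfrac{1}{4\sqrt\kappa}\}$ and accounting for $O(\log(1/\bar\beta))$ or $O(\log\kappa)$ wasted queries. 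The only cosmetic deviation is your choice of elementary inequality ($\ln(1+x)\ge x\ln 2$ on $[0,1]$) where the paper uses $(1-\tfrac{1}{1+x})^{\tau}\le\exp(-\tau/(1+x))$ together with $\exp(\tfrac{\log_2 x}{1+x})<1.5$; both yield the stated $\exp(-T/(6\sqrt\kappa))$ after constant bookkeeping.
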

\pref{thm:offline-str} demonstrates the \emph{weak universality} of~\pref{alg:offline-str}, meaning that it maintains the respective \emph{near-optimal} convergence rates in both smooth and non-smooth cases, without knowledge of the parameters $L_\ell$ or $G$.
However, a slight issue arises similar to that in~\citet{NIPS'17:levy-onlinetooffline}: to achieve universality, both our method and theirs depend on the Lipschitz continuity of $\ell$, even though the specific parameter is not required. We conjecture that Lipschitz continuity might be a necessary condition for universality in strongly convex optimization. Further investigation is needed.

Additionally, our~\pref{alg:offline-str} is highly flexible and can achieve better theoretical guarantees when more information about smoothness is available. For further details, see~\pref{cor:offline-str} in~\pref{appendix:proof-offline-str}.

\begin{myRemark}
To the best of our knowledge, \citet{NIPS'17:levy-onlinetooffline} achieved the previously best-known universal results for strongly convex optimization, in which an adaptive normalized gradient descent is employed with online-to-batch conversion weights inversely proportional to the square of the gradient norm. In the deterministic setup, the author achieved an $\O((\log T) /T)$ convergence rate for the Lipschitz function, and an $\smash{\O(\exp(-T/\kappa)\cdot T/\kappa)}$ rate for smooth and Lipschitz objectives. Our work improves upon their result by designing a weakly universal algorithm with the \emph{first} accelerated rate of~$\smash{\O(\exp(-T/(6\sqrt{\kappa})))}$ for smooth and Lipschitz functions. 
However, our method relies on a smoothness detection scheme based on the observed gradients, which only works in the deterministic setting for now. Extending it to the stochastic setting remains challenging. 
\endenv
\end{myRemark}

\begin{myRemark}
Designing a \emph{strongly universal}, i.e., adapting to Hölder smoothness, method for strongly convex optimization is still an open problem. Notably, given the Hölder smoothness parameters, \citet{devolder2013first} have established a sample-complexity-based rate that can recover the \mbox{(near-)optimal} rate for smooth and non-smooth cases, which may serve as a starting point.
\endenv
\end{myRemark}

\subsection{Grid Search for the Unknown Strong Convexity Curvature}
\label{subsec:offline-str-search}

\pref{alg:offline-str} shows strong adaptivity to the unknown smoothness parameter $L_\ell$, and in this part, we further enhance its adaptivity by removing the strong convexity curvature $\lambda$ as the algorithmic input.\footnote{In online learning, adapting to unknown curvature is known as ``universal online learning'', where a widely adopted technique is to run multiple base algorithms for exploration and use a meta algorithm for exploitation.}

We consider the strongly convex optimization $\min_{\x\in\R^d}\ell(\x)$ in the deterministic setting, where $\ell(\x)$ is $L_\ell$-smooth and $\lambda$-strongly convex, but the algorithm does not know $L_\ell$ and $\lambda$.

For this setting, the best-known result is achieved by~\citet{lan2023optimal}, who obtained the optimal sample complexity with a pre-specified target error $\epsilon$. However, their sample complexity bound, when translated into a convergence rate for the sub-optimality gap, is expressed as $\smash{\O(\exp({-T}/{(882 \sqrt{\kappa})}))}$ and thus not optimal (see further details in \pref{remark:discussion-grid-search-rates}).
While we design an algorithm achieving an $\smash{\exp(-T/((1+4\sqrt{2\kappa})\lceil 2\log_2 T\rceil))}$ convergence rate, with only the oracle queries budget $T$ as input.

\pref{alg:offline-str-search} outlines the main procedures. Essentially, it runs multiple instances of~\pref{alg:offline-str} to search for the strong convexity parameter $\lambda$ by selecting the output with the smallest loss.
Notably, a proper choice of the search range for $\lambda$ is critical for success. In our algorithm, this range is derived through rigorous analysis by carefully exploiting properties of smoothness and strong convexity, rather than imposing assumptions about the upper or lower bounds of $\lambda$.
The following theorem provides the convergence rate, with the proof provided in~\pref{appendix:proof-offline-str-search}.

\begin{algorithm}[!t]
    \caption{Universal Accelerated Strongly Convex Optimization, Search Method}
    \label{alg:offline-str-search}
    \begin{algorithmic}[1]
    \REQUIRE Total oracle queries budget $T$.
    \STATE \textbf{Initialization:} $M=\lceil 2\log_2 T\rceil, \x^0\in\X=\R^d$ and $\smash{\hat{\lambda}=\frac{\norm{\nabla \ell(\a) - \nabla \ell(\b)}}{\norm{\a - \b}}}$ with any $\a,\b\in\R^d$. 
    \FOR{$i=1,2,\dots,M$}
        \STATE Run~\pref{alg:offline-str} with $\big(\lambda_i = 2^{-i}\cdot\hat{\lambda}, \beta_1=1, \bar{\beta}= 0, T_i = \frac{T}{M}, \x_1 = \x^0\big)$, receive $\x^i$.
    \ENDFOR
    \ENSURE $\x^{\is}$ with $\is = \argmin_{0\le i\le M}\{ \ell(\x^i) \}$.
    \end{algorithmic}
\end{algorithm}
\begin{myThm}
    \label{thm:offline-str-search}
    Consider the optimization problem $\smash{\min_{\x\in\R^d} \ell(\x)}$ in the deterministic setting, where $\ell(\cdot)$ is $\lambda$-strongly convex and $L_\ell$-smooth.
    Denoted by ${\kappa\triangleq L_\ell / \lambda}$. 
    Then, \pref{alg:offline-str-search} guarantees
    \begin{equation*}
        \ell({\x}^{\is}) - \ell(\xs) \le \O\sbr{\frac{\norm{\nabla \ell(\x_1)}^2}{\lambda}\exp\sbr{\frac{-T}{(1+4\sqrt{2\kappa})\lceil 2\log_2 T\rceil}}},
    \end{equation*}
    which is achieved without the knowledge of $L_\ell$ and $\lambda$.
\end{myThm}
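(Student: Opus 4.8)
The plan is to reduce~\pref{thm:offline-str-search} to two ingredients: \emph{(i)} the guarantee of a single run of~\pref{alg:offline-str} in the smooth regime (namely~\pref{cor:offline-str}, the refinement of~\pref{thm:offline-str} available when $\ell$ is genuinely $L_\ell$-smooth), and \emph{(ii)} a short pigeonhole argument showing that the geometric grid $\{\lambda_i=2^{-i}\hat\lambda\}_{i=1}^{M}$ with $M=\lceil 2\log_2 T\rceil$ must contain a curvature estimate $\lambda_j$ that is simultaneously a valid strong-convexity parameter for $\ell$ and at least $\lambda/2$. Since the reported index $\is=\argmin_{0\le i\le M}\ell(\x^i)$ ranges over all instance outputs together with the initial point $\x^0$, it suffices to exhibit one index whose output already meets the claimed bound.

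First I would pin down the anchor $\hat\lambda$. For the probe points $\a\neq\b$ in~\pref{alg:offline-str-search}, $\lambda$-strong convexity gives $\norm{\nabla\ell(\a)-\nabla\ell(\b)}\ge\lambda\norm{\a-\b}$ and $L_\ell$-smoothness gives $\norm{\nabla\ell(\a)-\nabla\ell(\b)}\le L_\ell\norm{\a-\b}$, so $\hat\lambda\in[\lambda,L_\ell]$; in particular $\hat\lambda>0$ and the grid is well defined. Hence $\lambda_1=\hat\lambda/2\ge\lambda/2$, and, using $M\ge 2\log_2 T$, $\lambda_M=2^{-M}\hat\lambda\le T^{-2}L_\ell=\lambda\kappa/T^2$. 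Therefore, as long as $\kappa\le T^2$, we have $\lambda_M\le\lambda$, so there is a first index $j\in[M]$ with $\lambda_j\le\lambda$; moreover $\lambda_j\ge\lambda/2$, since either $j=1$ and $\lambda_j=\hat\lambda/2\ge\lambda/2$, or $j>1$ and $\lambda_j=\tfrac12\lambda_{j-1}>\lambda/2$. Thus $\lambda_j\in[\lambda/2,\lambda]$, and the $j$-th instance has effective condition number $\kappa_j\triangleq L_\ell/\lambda_j\le 2\kappa$.

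Next I would analyze that instance. Because $\lambda_j\le\lambda$, the objective $\ell$ is $\lambda_j$-strongly convex and $L_\ell$-smooth, so~\pref{cor:offline-str} applies to~\pref{alg:offline-str} run with input curvature $\lambda_j$, $\beta_1=1$, $\bar\beta=0$, and budget $T_j=T/M$, yielding
\[
    \ell(\x^{j})-\ell(\xs)\ \le\ \O\!\left(\frac{\norm{\nabla\ell(\x_1)}^2}{\lambda_j}\,\exp\!\Big(\tfrac{-T_j}{1+4\sqrt{\kappa_j}}\Big)\right).
\]
Substituting $\lambda_j\ge\lambda/2$ and $\kappa_j\le 2\kappa$ (so that $\exp(-T_j/(1+4\sqrt{\kappa_j}))\le\exp(-T_j/(1+4\sqrt{2\kappa}))$), together with $T_j=T/\lceil 2\log_2 T\rceil$ and $\x_1=\x^0$, bounds the right-hand side by $\O\big(\tfrac{\norm{\nabla\ell(\x_1)}^2}{\lambda}\exp(\tfrac{-T}{(1+4\sqrt{2\kappa})\lceil 2\log_2 T\rceil})\big)$, which proves the theorem for $\kappa\le T^2$ since $\ell(\x^{\is})-\ell(\xs)\le\ell(\x^{j})-\ell(\xs)$. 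The complementary regime $\kappa>T^2$ is handled by the index $i=0$: there $\tfrac{T}{(1+4\sqrt{2\kappa})\lceil 2\log_2 T\rceil}<\tfrac{1}{4\sqrt2\,\lceil 2\log_2 T\rceil}<1$, so the target bound is $\Omega\big(\norm{\nabla\ell(\x_1)}^2/\lambda\big)$, while $\ell(\x^{\is})-\ell(\xs)\le\ell(\x^0)-\ell(\xs)\le\tfrac{1}{2\lambda}\norm{\nabla\ell(\x^0)}^2$ by $\lambda$-strong convexity, which fits.

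The genuine obstacle is not this bookkeeping but the per-instance acceleration it invokes: proving that~\pref{alg:offline-str} with $\bar\beta=0$ attains the $\exp(-T'/(1+4\sqrt{\kappa'}))$ rate on a $\lambda'$-strongly convex, $L_\ell$-smooth objective. This requires controlling the cancellation term~\pref{eq:cancellation-term} through the empirical smoothness $L_t$, verifying that once $\beta_t$ stabilizes the weights compound geometrically as $\alpha_{1:t+1}=(1+\beta_t)\alpha_{1:t}$ at rate $\Theta(1/\sqrt{\kappa'})$, and bounding the number of discarded guess iterations by $\O(\log\kappa')$ so that the effective horizon is $T'-\O(\log\kappa')$; this is the content of~\pref{cor:offline-str} (together with~\pref{thm:offline-str}), which I take as given here. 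A minor additional point is the oracle-budget accounting: the two probe gradients defining $\hat\lambda$ plus the $M$ runs of budget $T/M$ consume $T+\O(1)$ queries, and absorbing this $\O(1)$ into $T$ rescales the exponential only by a $1+o(1)$ factor.
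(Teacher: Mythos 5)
Your proof is correct and follows essentially the same route as the paper's: pin down $\hat\lambda\in[\lambda,L_\ell]$ via smoothness and strong convexity, use the geometric grid of length $M=\lceil 2\log_2T\rceil$ to locate an index $j$ with $\lambda_j\in[\lambda/2,\lambda]$ (so that $\ell$ is $\lambda_j$-strongly convex with condition number $\le 2\kappa$), and then invoke Corollary~\ref{cor:offline-str} case~\textit{(ii)} on that instance with budget $T/M$, noting that the argmin over all outputs can only do better. Your treatment of the $\kappa>T^2$ regime via the initial point $\x^0$ and your remark about the $T+\O(1)$ oracle accounting are slightly more explicit than the paper's (which simply declares the bound vacuous for $\kappa>T^2$), but the argument is the same.
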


\begin{myRemark}
    The limitation of both~\pref{thm:offline-str-search} and~\citet{lan2023optimal} is that neither algorithm can guarantee convergence in the non-smooth case, i.e., when $L_\ell=\infty$.
    However, our result has an advantage in terms of the convergence rate.
    The result of~\citet{lan2023optimal}, when translated into the convergence rate for the sub-optimality gap, is expressed as $\smash{\O(\exp({-T}/{(882 \sqrt{\kappa})}))}$, with a notably large denominator $882$ in the exponent. Consequently, despite the $\log T$ factor in our~\pref{thm:offline-str-search}, it remains highly competitive and even surpasses~\citet{lan2023optimal} when $T \le 8.7 \times 10^{19}$. Further details about how we translate their result and the comparison can be found in~\pref{appendix:proof-offline-str-search}.
\end{myRemark}

\begin{myRemark}
    \label{remark:discussion-grid-search-rates}
    We note that when expressing exponential convergence, the use of asymptotic notation differs between convergence rate and sample complexity. To understand this, let us reconsider the sample complexity $T\le \alpha \log(\beta/\epsilon) = \O(\log(\beta/\epsilon))$ required to achieve the target error $\epsilon$ and the corresponding convergence rate $\epsilon\le \beta \exp(-T/\alpha) = \O(\exp(-T/\alpha))$, where $\alpha,\beta$ are two constants.
It can be observed that the constant $\alpha$ in the asymptotic notation for sample complexity has an \emph{exponential} impact on the convergence rate. In contrast, the constant $\beta$ in the asymptotic bound of the convergence rate influences the sample complexity only \emph{logarithmically}. Thus in this case, achieving optimal sample complexity does not necessarily guarantee optimal convergence rate.
\end{myRemark}

%!TEX root = ../main.tex
\section{Conclusion}
\label{sec:conclusions}

In this work, we explore gradient-variation online learning with Hölder smoothness and its implications to offline optimization. For online learning with Hölder smoothness, we establish the first gradient-variation regret bounds for  (strongly) convex online functions, seamlessly interpolating between the optimal regret rates in the smooth and non-smooth regimes. For offline optimization, we develop a series of universal optimization methods by leveraging gradient-variation online adaptivity, stabilized online-to-batch conversion, and carefully designed components such as detection-based procedures and grid search tailored specifically for strongly convex cases.
Our convergence rates match the existing optimal universal results for convex optimization and significantly improve upon non-accelerated rates for strongly convex optimization.

An important open problem is designing gradient-variation online adaptivity and extending its implications to offline optimization in the \emph{unconstrained} setting. Another interesting direction is to further develop offline optimization algorithms by leveraging insights from adaptive online learning.

\section*{Acknowledgments}
This research was supported by NSFC (62361146852).

\bibliographystyle{plainnat}
\bibliography{online_learning,online_FML}

\newpage
\appendix
%!TEX root = ../arXiv_version.tex
\section{Omitted Details for Section~\ref{sec:convex}}
\label{appendix:proof-convex}

In this section, we first provide some useful lemmas for Hölder smoothness, then give the proofs of theorems in~\pref{sec:convex}.

\subsection{Useful Lemmas for Hölder Smoothness}
\label{appendix:lemmas-holder-smooth}

This part provides several useful lemmas for Hölder smoothness.

\begin{myLemma}[Lemma~1 of~\citet{nesterov2015universal}]
    \label{lemma:inexact-smooth-lemma-1}
    Let convex function $f:\X\to\R$ over the convex set $\X$ be $(L_\nu,\nu)$-Hölder smooth.\footnote{Though $\X$ is supposed to be closed in~\citet{nesterov2015universal}, this lemma holds for $\X=\R^d$ with the same proof.} Then for any $\delta>0$, denoting by $L = \delta^{\frac{\nu-1}{1+\nu}}L_\nu^{\frac{2}{1+\nu}}$, for all $\x,\y\in\X$:
    \begin{equation}
        \label{eq:inexact-smooth-lemma-1}
        f(\y) - f(\x) - \inner{\nabla f(\x)}{\y - \x} \le \frac{L}{2}\norm{\x - \y}^2 + \delta.
    \end{equation}
\end{myLemma}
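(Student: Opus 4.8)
The plan is to derive the claimed inequality from two standard ingredients: an integral (fundamental-theorem-of-calculus) representation of the left-hand side, and a single application of Young's inequality to trade the resulting super-quadratic term for a quadratic term plus an additive constant.

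First I would fix $\x,\y\in\X$ and write, along the segment $[\x,\y]\subseteq\X$,
\[
f(\y) - f(\x) - \inner{\nabla f(\x)}{\y-\x} = \int_0^1 \inner{\nabla f(\x + t(\y-\x)) - \nabla f(\x)}{\y - \x}\,\diff t.
\]
Applying Cauchy--Schwarz to the integrand and then the H{\"o}lder-smoothness bound~\pref{eq:holder-smooth} to the pair $\x + t(\y-\x)$ and $\x$, whose distance is $t\norm{\y-\x}$, the integrand is at most $L_\nu t^\nu\norm{\y-\x}^{1+\nu}$; integrating $\int_0^1 t^\nu\,\diff t = \frac{1}{1+\nu}$ then yields the intermediate bound $f(\y) - f(\x) - \inner{\nabla f(\x)}{\y-\x} \le \frac{L_\nu}{1+\nu}\norm{\x-\y}^{1+\nu}$.

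Next I would split $\norm{\x-\y}^{1+\nu}$ via Young's inequality in the scaled form $ab \le \frac{(\gamma a)^p}{p} + \frac{(b/\gamma)^q}{q}$ with conjugate exponents $p = \frac{2}{1+\nu}$ and $q = \frac{2}{1-\nu}$, choosing $a = \norm{\x-\y}^{1+\nu}$ and $b = 1$ so that $a^p = \norm{\x-\y}^2$, and leaving the scaling $\gamma>0$ free. Picking $\gamma = (L/L_\nu)^{(1+\nu)/2}$ makes the quadratic coefficient equal $\frac{L}{2}$; and substituting $L = \delta^{\frac{\nu-1}{1+\nu}}L_\nu^{\frac{2}{1+\nu}}$ into the leftover additive term, a short computation (using $q/p = \frac{1+\nu}{1-\nu}$ and $L/L_\nu = (L_\nu/\delta)^{\frac{1-\nu}{1+\nu}}$) shows that term equals $\frac{1-\nu}{2(1+\nu)}\delta \le \delta$ for every $\nu\in[0,1)$. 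The only endpoint needing separate treatment is $\nu=1$, which is exactly $L_\nu$-smoothness with $L=L_\nu$, so there the bound holds even with $\delta=0$ by the usual descent lemma.

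I expect the only real obstacle to be bookkeeping the exponents in the Young step so that the quadratic coefficient comes out as precisely $L = \delta^{\frac{\nu-1}{1+\nu}}L_\nu^{\frac{2}{1+\nu}}$ rather than some constant multiple of it, while the additive term is genuinely bounded by $\delta$; introducing the free scaling $\gamma$ together with the choice $b=1$ supplies exactly the degree of freedom needed to hit the stated constants, after which everything is routine.
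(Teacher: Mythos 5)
Your proof is correct, and since the paper itself presents this lemma as a citation to Lemma~1 of Nesterov (2015) without reproducing a proof, your argument is essentially a reconstruction of the cited proof. The three steps you use — the fundamental-theorem-of-calculus representation of the Bregman gap, the pointwise bound $\norm{\nabla f(\x+t(\y-\x)) - \nabla f(\x)} \le L_\nu t^\nu\norm{\y-\x}^\nu$ followed by integration to get $\frac{L_\nu}{1+\nu}\norm{\x-\y}^{1+\nu}$, and then Young's inequality with conjugate exponents $p = \frac{2}{1+\nu}$, $q = \frac{2}{1-\nu}$ and a free scaling $\gamma$ — are exactly the standard route. Your bookkeeping checks out: with $\gamma = (L/L_\nu)^{(1+\nu)/2}$ the quadratic coefficient is $\frac{L}{2}$, and since $L/L_\nu = (L_\nu/\delta)^{(1-\nu)/(1+\nu)}$ the constant term collapses to $\frac{1-\nu}{2(1+\nu)}\delta \le \delta$; the $\nu=1$ endpoint is handled correctly via the ordinary descent lemma. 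No gaps.
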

\begin{myLemma}[Theorem~1 of~\citet{devolder2014first}]
    \label{lemma:inexact-smooth-lemma-2}
    If convex function $f:\X\to\R$ over the convex set $\X$ satisfies that, there exists positive constants $L$ and $\delta$ such that, for all $\x,\y\in\X$:
    \begin{equation}
        f(\y) - f(\x) - \inner{\nabla f(\x)}{\y - \x} \le \frac{L}{2}\norm{\x - \y}^2 + \delta,
    \end{equation}
    then for all $\x,\y\in\X$:
    \begin{equation}
        \label{eq:inexact-smooth-lemma-2}
        \frac{1}{2L}\norm{\nabla f(\x) - \nabla f(\y)}^2 \le f(\y) - f(\x) - \inner{\nabla f(\x)}{\y - \x} + \delta.
    \end{equation}
\end{myLemma}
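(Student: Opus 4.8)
The plan is to derive~\pref{eq:inexact-smooth-lemma-2} from the hypothesis by the classical argument showing that a quadratic upper bound on a convex function forces a co-coercivity-type lower bound on its gradient, now carried over to the \emph{inexact} setting in which the defining inequality carries the additive slack $\delta$. The key idea is the usual ``shift one point to the minimizer'' trick: the hypothesis supplies a global quadratic-plus-$\delta$ overestimate, and comparing it against the function value at a single gradient step extracts the desired gradient inequality.

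First I would fix $\x\in\X$ and introduce the auxiliary function $\phi(\z)\define f(\z) - \inner{\nabla f(\x)}{\z - \x}$. Being the sum of the convex function $f$ and an affine term, $\phi$ is convex, with $\nabla\phi(\z) = \nabla f(\z) - \nabla f(\x)$; in particular $\nabla\phi(\x)=\mathbf{0}$, so by convexity $\x$ is a global minimizer of $\phi$ over $\X$. Crucially, $\phi$ inherits the same inexact smoothness bound as $f$, since the added affine term cancels: for all $\u,\v\in\X$,
\[ \phi(\v)-\phi(\u)-\inner{\nabla\phi(\u)}{\v-\u} = f(\v)-f(\u)-\inner{\nabla f(\u)}{\v-\u} \le \tfrac{L}{2}\norm{\u-\v}^2 + \delta. \]
Applying this at $\u=\z$ and $\v=\z-\tfrac1L\nabla\phi(\z)$ for an arbitrary $\z\in\X$ and simplifying gives $\phi\big(\z-\tfrac1L\nabla\phi(\z)\big)\le \phi(\z) - \tfrac{1}{2L}\norm{\nabla\phi(\z)}^2 + \delta$. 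Combining with $\phi(\x)\le\phi\big(\z-\tfrac1L\nabla\phi(\z)\big)$ yields $\tfrac{1}{2L}\norm{\nabla\phi(\z)}^2\le\phi(\z)-\phi(\x)+\delta$; then unpacking $\nabla\phi(\z)=\nabla f(\z)-\nabla f(\x)$ together with $\phi(\z)-\phi(\x)=f(\z)-f(\x)-\inner{\nabla f(\x)}{\z-\x}$, and renaming $\z$ as $\y$, recovers exactly~\pref{eq:inexact-smooth-lemma-2}.

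The only delicate point — what I would flag as the main obstacle — is feasibility of the gradient-step point $\z-\tfrac1L\nabla\phi(\z)$ used above: it need not lie in $\X$ when $\X\subsetneq\R^d$, whereas the quadratic overestimate was only assumed on $\X$. For every use of this lemma in the paper (notably~\pref{lemma:inexact-smooth} and~\pref{lemma:inexact-smooth-lemma-1}, both stated over $\R^d$) the domain is the whole space, so the step point is automatically feasible and the argument above is complete; for a genuinely constrained $\X$ one instead invokes the original statement of~\citet{devolder2014first}, or equivalently runs the same argument on an extension of $f$ to $\R^d$ preserving the inexact smoothness bound. Everything else is routine algebra.
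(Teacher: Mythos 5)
The paper does not prove this lemma itself — it simply cites Theorem~1 of \citet{devolder2014first} — so there is no in-paper argument to compare against; I'll assess your reconstruction on its own. Your proof is the standard ``shift to a gradient-step point'' argument (exactly the inexact analogue of the proof of Theorem~2.1.5 in \citet{book'18:Nesterov-OPT}), and the algebra is correct: the auxiliary function $\phi(\z)=f(\z)-\inner{\nabla f(\x)}{\z-\x}$ is convex, inherits the quadratic-plus-$\delta$ overestimate, has $\x$ as a minimizer, and evaluating the overestimate at $\z-\tfrac{1}{L}\nabla\phi(\z)$ followed by $\phi(\x)\le\phi\bigl(\z-\tfrac{1}{L}\nabla\phi(\z)\bigr)$ gives the claim after unpacking.

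You also correctly flag the only genuine gap in this argument: the gradient-step point $\z-\tfrac{1}{L}\nabla\phi(\z)$ need not lie in $\X$ when $\X\subsetneq\R^d$, and the hypothesis (and the evaluation $\phi\bigl(\z-\tfrac1L\nabla\phi(\z)\bigr)$ itself) is only available on $\X$. So as written, your proof establishes the lemma for $\X=\R^d$ but not for a general convex $\X$ as in the lemma's statement. Two remarks. First, in every place the paper actually uses this lemma (the proof of \pref{lemma:inexact-smooth}, which feeds \pref{eq:inexact-smooth} into \pref{thm:static-con} and later results), the Hölder-smoothness hypothesis~\pref{eq:holder-smooth} is assumed on all of $\R^d$, so the chain \pref{lemma:inexact-smooth-lemma-1}$\to$\pref{lemma:inexact-smooth-lemma-2} is applied over $\R^d$ and your argument suffices. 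Second, your proposed remedy of ``extend $f$ to $\R^d$ preserving the inexact smoothness bound'' is not automatic — such an extension is a nontrivial claim in its own right — so for a genuinely constrained $\X$ the honest move is, as you also say, to defer to the cited theorem, whose original proof handles the constrained case. In short: your proof is correct for the setting the paper actually needs, and you have correctly identified and scoped the one place where it falls short of the lemma's full generality.
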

\begin{myLemma}[Theorem~A.2 of~\citet{rodomanov2024universal}]
    \label{lemma:inexact-smooth-bregman}
    If convex function $f:\R^d\to\R$ over $\R^d$ satisfies that, there exists positive constants $L$ and $\delta$ such that, for all $\x,\y\in\R^d$:
    \begin{equation}
        f(\y) - f(\x) - \inner{\nabla f(\x)}{\y - \x} \le \frac{L}{2}\norm{\x - \y}^2 + \delta,
    \end{equation}
    then for all $\x,\y\in\R^d$:
    \begin{equation}
        \label{eq:inexact-smooth-bregman}
        \norm{\nabla f(\x) - \nabla f(\y)}^2 \le 2L\D_{f}(\x, \y) + 2L\delta.
    \end{equation}
\end{myLemma}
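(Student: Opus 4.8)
The plan is to notice that the asserted inequality is, up to relabelling the two points, exactly a restatement of \pref{lemma:inexact-smooth-lemma-2}, so the quickest route is a direct deduction from it; I would additionally record a short self-contained argument that is valid because the domain here is all of $\R^d$.

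\textbf{Deduction from \pref{lemma:inexact-smooth-lemma-2}.} The hypothesis holds for every ordered pair, so I apply \pref{lemma:inexact-smooth-lemma-2} with the roles of $\x$ and $\y$ swapped, which gives $\frac{1}{2L}\norm{\nabla f(\y)-\nabla f(\x)}^2 \le f(\x)-f(\y)-\inner{\nabla f(\y)}{\x-\y}+\delta$. By the definition $\D_f(\x,\y)=f(\x)-f(\y)-\inner{\nabla f(\y)}{\x-\y}$ the right-hand side is $\D_f(\x,\y)+\delta$, while the left-hand norm is symmetric in $\x,\y$; multiplying by $2L$ yields \pref{eq:inexact-smooth-bregman}.

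\textbf{Self-contained argument.} Fix $\x,\y\in\R^d$ and set $\phi(\z)\triangleq f(\z)-\inner{\nabla f(\y)}{\z}$. Then $\phi$ is convex, $\nabla\phi(\z)=\nabla f(\z)-\nabla f(\y)$ so $\y$ is a global minimizer of $\phi$, and $\phi$ inherits the same inexact-smoothness inequality as $f$ since the added linear term cancels in the difference $\phi(\v)-\phi(\u)-\inner{\nabla\phi(\u)}{\v-\u}=f(\v)-f(\u)-\inner{\nabla f(\u)}{\v-\u}$. Applying this inequality at $\u=\x$ and the gradient-step point $\v=\x-\tfrac1L\nabla\phi(\x)$ — which lies in $\R^d$ — and simplifying the resulting quadratic gives $\phi(\v)\le \phi(\x)-\tfrac{1}{2L}\norm{\nabla\phi(\x)}^2+\delta$. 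Since $\phi(\y)\le\phi(\v)$ and $\phi(\x)-\phi(\y)=f(\x)-f(\y)-\inner{\nabla f(\y)}{\x-\y}=\D_f(\x,\y)$, rearranging produces $\tfrac{1}{2L}\norm{\nabla f(\x)-\nabla f(\y)}^2\le \D_f(\x,\y)+\delta$, which is \pref{eq:inexact-smooth-bregman} after multiplying by $2L$.

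There is no real obstacle here; the only things to watch are (i) that the gradient-step point $\x-\tfrac1L\nabla\phi(\x)$ stays feasible, which is precisely why the statement is confined to $\X=\R^d$ — on a constrained set one must instead invoke the more delicate argument underlying \pref{lemma:inexact-smooth-lemma-2} — and (ii) keeping the argument order in the Bregman divergence straight. Combined with \pref{lemma:inexact-smooth-lemma-1}, this lemma immediately upgrades \pref{lemma:inexact-smooth} to a Bregman-divergence form, $\norm{\nabla f(\x)-\nabla f(\y)}^2\le 2L\D_f(\x,\y)+2L\delta$ with $L=\delta^{\frac{\nu-1}{1+\nu}}L_\nu^{\frac{2}{1+\nu}}$, which is the shape needed for the cancellation argument in the strongly convex analysis.
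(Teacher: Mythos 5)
Both of your arguments are correct, and it is worth noting at the outset that the paper itself does \emph{not} prove this lemma~---~it is invoked as a citation to Theorem~A.2 of~\citet{rodomanov2024universal}, so there is no ``paper proof'' to compare against.

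Your first observation is the sharpest one: \pref{lemma:inexact-smooth-bregman} really is just \pref{lemma:inexact-smooth-lemma-2} with the two arguments relabelled. Applying the conclusion of \pref{lemma:inexact-smooth-lemma-2} to the ordered pair $(\y,\x)$ gives
$\tfrac{1}{2L}\norm{\nabla f(\y)-\nabla f(\x)}^2\le f(\x)-f(\y)-\inner{\nabla f(\y)}{\x-\y}+\delta=\D_f(\x,\y)+\delta$,
and multiplying by $2L$ is \pref{eq:inexact-smooth-bregman}. Your second, self-contained argument~---~shift $f$ by the linear form $\inner{\nabla f(\y)}{\cdot}$ so that $\y$ becomes the global minimizer of $\phi$, take one gradient step of size $1/L$ from $\x$, and use the inexact descent inequality~---~is the standard proof of the co-coercivity-type bound and is also correct; the arithmetic $\phi(\v)\le\phi(\x)-\tfrac{1}{2L}\norm{\nabla\phi(\x)}^2+\delta$ and $\phi(\x)-\phi(\y)=\D_f(\x,\y)$ checks out. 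One small caveat worth flagging: your reduction assumes \pref{lemma:inexact-smooth-lemma-2} genuinely holds on an arbitrary convex $\X$ as the paper states it; the folklore proof of that lemma is the very gradient-step argument you reproduce, which already needs feasibility of the step (hence $\X=\R^d$). So the two routes are closer in strength than your remark suggests~---~the ``more delicate argument'' you gesture at for the constrained case would need to be supplied separately in either route. For the lemma as actually stated (over $\R^d$), your proof is complete and correct, and the self-contained version is the more informative of the two since it does not outsource the hard step.
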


\subsection{Proof of Lemma~\ref{lemma:inexact-smooth}}
\label{appendix:proof-lemma-inexact-smooth}

\begin{proof}
Since $f$ is $(L_\nu, \nu)$-Hölder smooth, by combining~\pref{lemma:inexact-smooth-lemma-1} and~\pref{lemma:inexact-smooth-lemma-2} in~\pref{appendix:lemmas-holder-smooth}, for any $\delta>0$, denoting by $L = \delta^{\frac{\nu-1}{1+\nu}}L_\nu^{\frac{2}{1+\nu}}$, for all $\x,\y\in\X$:
\begin{equation}
    \frac{1}{2L}\norm{\nabla f(\x) - \nabla f(\y)}^2\overset{\eqref{eq:inexact-smooth-lemma-2}}{\le} f(\y) - f(\x) - \inner{\nabla f(\x)}{\y - \x} + \delta\overset{\eqref{eq:inexact-smooth-lemma-1}}{\le} \frac{L}{2}\norm{\x-\y}^2 + 2\delta.
\end{equation}
Multiplying both sides of the inequality by $2L$ completes the proof.
\end{proof}

\subsection{Proof of Theorem~\ref{thm:static-con}}
\label{appendix:proof-static-con}
\begin{proof}
Applying~\pref{lemma:OOMD-analysis} in~\pref{appendix:useful-lemmas-for-step-size} with comparators as $\xs \in \argmin_{\x\in\X}\sum_{t=1}^{T}f_t(\x)$ for all $t\in[T]$, we have:
\begin{align}
    \Reg_T &= \sum_{t=1}^{T} f_t(\x_t) - \sum_{t=1}^{T} f_t(\xs) \le \sum_{t=1}^{T} \inner{\nabla f_t(\x_t)}{\x_t - \xs} \notag \\
    &\le \sum_{t=1}^{T} \eta_{t+1}\norm{\nabla f_t(\x_t) - M_t}^2 + \frac{D^2}{\eta_{T+1}} - \sum_{t=2}^{T} \frac{1}{8\eta_{t+1}}\norm{\x_t - \x_{t-1}}^2 \notag \\
    &\le 3D\sqrt{A_T} - \sum_{t=2}^{T} \frac{1}{8\eta_{t+1}}\norm{\x_t - \x_{t-1}}^2, \label{eq:regret-theorem-1}
\end{align}
where $A_t \triangleq \norm{\nabla f_1(\x_1)}^2 + \sum_{s=2}^{t}\norm{\nabla f_s(\x_s) - M_s}^2$, and in the last line we apply the self-confident tuning lemma, i.e.,~\pref{lemma:self-confident-tuning} in~\pref{appendix:useful-lemmas-for-step-size}.

If $\sqrt{A_T}\le 2LD$, we finish the proof trivially, so in the following, we focus on $\sqrt{A_T}> 2LD$.

Define $t_0$ that, if $\sqrt{A_1} > 2LD$, let $t_0 = 1$, otherwise let $t_0 = \min\{t\in[T-1],\sqrt{A_{t+1}} > 2LD\}$. Then we have $\sqrt{A_{t_0}} \le \norm{\nabla f_1(\x_1)} + 2LD$, while for all $t_0+1 \le t \le T$ it holds that $\sqrt{A_t} > 2LD$.

Because all online functions are $(L_\nu, \nu)$-Hölder smooth and applying~\pref{lemma:inexact-smooth} in~\pref{subsec:convex}, we show the following decomposition for $\alpha\sqrt{A_T}$ with constant $\alpha> 0$. For any $\delta>0$ that only exists in analysis, denoting by $L = \delta^{\frac{\nu-1}{1+\nu}} L_\nu^{\frac{2}{1+\nu}}$:
\begin{align*}
    \alpha\sqrt{A_T} &\le \alpha\sqrt{A_{t_0}} + \alpha\sqrt{\sum_{t=t_0+1}^{T} \norm{\nabla f_t(\x_t) - \nabla f_{t-1}(\x_t) + \nabla f_{t-1}(\x_t) - \nabla f_{t-1}(\x_{t-1}) }^2} \\
    &\le \alpha\sqrt{A_{t_0}} + \alpha\sqrt{2V_T} + \alpha\sqrt{2L^2\sum_{t=t_0+1}^{T}\norm{\x_t - \x_{t-1}}^2 + 8L\sum_{t=t_0+1}^{T}\delta } \\
    &\le \alpha\sqrt{A_{t_0}} + \alpha\sqrt{2V_T} + \alpha^2 L + \frac{L}{2}\sum_{t=t_0+1}^{T} \norm{\x_t - \x_{t-1}}^2 + \alpha \sqrt{8L \delta T}.
\end{align*}
With this decomposition, we prove the regret bound in the following with $\alpha=3D$:
\begin{align*}
    \Reg_T 
    &\le 3D\sqrt{A_{t_0}} + 3D\sqrt{2V_T} + 9LD^2 + \sum_{t=t_0+1}^{T} \left(\frac{L}{2} - \frac{1}{8\eta_{t+1}}\right) \norm{\x_t - \x_{t-1}}^2 + 3D\sqrt{8L\delta T} \\
    &\le 3D\sqrt{2V_T} + 15LD^2 + 3D\norm{\nabla f_1(\x_1)} + 3D\sqrt{8L\delta T}.
\end{align*}
Then by choosing $\delta = L_\nu D^{1+\nu} T^{-\frac{1+\nu}{2}}$ (that only exists in analysis), we obtain
\begin{align*}
    \Reg_T \le \O\left(D\sqrt{V_T} + L_\nu D^{1+\nu} T^{\frac{1-\nu}{2}} + D\norm{\nabla f_1(\x_1)} \right),
\end{align*}
which completes the proof.
\end{proof}

\subsection{Proof of Theorem~\ref{thm:offline-convex}}
\label{appendix:proof-offline-convex}
\begin{proof}
With optimistic OGD as the online algorithm, by defining $f_t(\x)\triangleq \inner{\alpha_t \g(\xb_t)}{\x}$, we have:
\begin{equation*}
    \sum_{t=1}^{T}\alpha_t\inner{\g(\xb_t)}{\x_t - \xs} = \sum_{t=1}^{T}f_t(\x_t) - \sum_{t=1}^{T} f_t(\xs) \overset{\eqref{eq:regret-theorem-1}}{\le} 3D\sqrt{A_T} - \sum_{t=2}^{T} \frac{1}{8\eta_{t+1}}\norm{\x_t - \x_{t-1}}^2.
\end{equation*}
Now we focus on $\sqrt{A_T}> 4LD$, and define $t_0\in[T-1]$ that, if $\sqrt{A_1} > 4LD$, let $t_0 = 1$, otherwise let $t_0 = \min\{t\in[T-1],\sqrt{A_{t+1}} > 4LD\}$. Then we have $\sqrt{A_{t_0}} \le \norm{\nabla f_1(\x_1)} + 4LD$, while for all $t_0+1 \le t \le T$ it holds that $\sqrt{A_t} > 4LD$. Continuing with our previous inequality:
\begin{align*}
    & \sum_{t=1}^{T}\alpha_t\inner{\g(\xb_t)}{\x_t - \xs} \\
    \le {} & 3D\sqrt{A_{t_0}} + 3D\sqrt{\sum_{t=t_0+1}^{T} \alpha_t^2\norm{ \g(\xb_t) - \g(\xt_t)}^2 } - \sum_{t=2}^{T} \frac{1}{8\eta_{t+1}}\norm{\x_t - \x_{t-1}}^2 \\
    \le {} & 3D\sqrt{A_{t_0}} + 3D\sqrt{\sum_{t=t_0+1}^{T} 3\alpha_t^2\norm{ \nabla \ell(\xb_t) - \nabla \ell(\xt_t)}^2 } - \sum_{t=2}^{T} \frac{1}{8\eta_{t+1}}\norm{\x_t - \x_{t-1}}^2 \\
    &\quad + 3D\sqrt{\sum_{t=t_0+1}^{T} 3\alpha_t^2\norm{ \nabla \ell(\xb_t) - \g(\xb_t)}^2 } + 3D\sqrt{\sum_{t=t_0+1}^{T} 3\alpha_t^2\norm{ \nabla \ell(\xt_t) - \g(\xt_t)}^2 },
\end{align*}
where we use $\norm{\a + \b + \mathbf{c}}^2 \le 3\norm{\a}^2+3\norm{\b}^2+3\norm{\mathbf{c}}^2$ for any $\a,\b,\mathbf{c} \in \R^d$. Now by taking expectation and using Jensen's inequality, we have
\begin{align*}
    & \E\left[\sum_{t=1}^{T}\alpha_t\inner{\g(\xb_t)}{\x_t - \xs}\right] \\
    \le {} & \E\left[ 3D\sqrt{\sum_{t=t_0+1}^{T} 3\alpha_t^2\norm{ \nabla \ell(\xb_t) - \nabla \ell(\xt_t)}^2 } - \sum_{t=2}^{T} \frac{1}{8\eta_{t+1}}\norm{\x_t - \x_{t-1}}^2 \right] \\
    &\quad + 3D\norm{\nabla \ell(\x_1)} + 12 LD^2 + 12\sqrt{2}\sigma D T^{\frac{3}{2}},
\end{align*}
where we apply $\E[\norm{\g(\x)-\nabla \ell(\x)}^2\mid \x]\le \sigma^2$.
By~\pref{lemma:inexact-smooth} and the definitions of $\xb_t,\xt_t$,
\begin{align*}
    \alpha_t^2\norm{ \nabla \ell(\xb_t) - \nabla \ell(\xt_t)}^2
    &\overset{\eqref{eq:inexact-smooth}}{\le} \alpha_t^2 L^2 \norm{\xb_t - \xt_t}^2 + 4\alpha_t^2 L\delta = \frac{\alpha_t^4 L^2}{\alpha_{1:t}^2}\norm{\x_t - \x_{t-1}}^2 + 4\alpha_t^2 L\delta \\
    &\le 4L^2 \norm{\x_t - \x_{t-1}}^2 + 4t^2 L\delta.
\end{align*}
Then we have
\begin{align*}
    & 3D\sqrt{\sum_{t=t_0+1}^{T} 3\alpha_t^2\norm{ \nabla \ell(\xb_t) - \nabla \ell(\xt_t)}^2 } - \sum_{t=2}^{T} \frac{1}{8\eta_{t+1}}\norm{\x_t - \x_{t-1}}^2 \\
    \le {} & 6D\sqrt{3 L^2 \sum_{t=t_0+1}^{T} \norm{\x_t - \x_{t-1}}^2 } - \sum_{t=2}^{T} \frac{1}{8\eta_{t+1}}\norm{\x_t - \x_{t-1}}^2 + 12\sqrt{2}D\sqrt{L\delta} T^{\frac{3}{2}} \\
    \le {} & 27 LD^2 + \sum_{t=t_0+1}^{T}\left( L - \frac{1}{8\eta_{t+1}} \right)\norm{\x_t - \x_{t-1}}^2 + 12\sqrt{2}D\sqrt{L\delta} T^{\frac{3}{2}} \\
    \le {} & 27 LD^2 + 12\sqrt{2}D\sqrt{L\delta} T^{\frac{3}{2}}.
\end{align*}
Therefore, by combining the above inequalities we obtain
\begin{align*}
    \E\left[\ell(\xb_T)\right] - \ell(\xs) &\le \frac{1}{\alpha_{1:T}} \E\left[\sum_{t=1}^T \alpha_t \inner{\g(\xb_t)}{\x_t - \xs}\right] \\
    &\le \frac{6D\norm{\nabla \ell(\x_1)} + 78 LD^2}{T^2} + \frac{24\sqrt{2}D\sqrt{L\delta} + 24\sqrt{2}\sigma D}{\sqrt{T}}.
\end{align*}
Then by setting $\delta = L_\nu D^{1+\nu}T^{\frac{-(3+3\nu)}{2}}$, we achieve the convergence rate of
\begin{align*}
    \E\left[\ell(\xb_T)\right] - \ell(\xs) \le \O\left(\frac{L_\nu D^{1+\nu}}{T^{\frac{1+3\nu}{2}}} + \frac{\sigma D}{\sqrt{T}} + \frac{D\norm{\nabla \ell(\x_1)}}{T^2} \right),
\end{align*}
which completes the proof.
\end{proof}

%!TEX root = ../main.tex
\section{Omitted Details for Section~\ref{sec:strongly-convex}}
\label{appendix:proof-strongly-convex}
In this section, we give the proofs of theorems in~\pref{sec:strongly-convex}.

\subsection{Proof of Theorem~\ref{thm:str-online}}
\label{appendix:proof-str-online}
\begin{proof}
We apply~\pref{lemma:OOMD-raw} of~\pref{appendix:lemmas-for-OOGD} with comparators as $\xs = \argmin_{\x\in\X}\sumT f_t(\x)$:
\begin{align*}
    \Reg_T &= \sumT  f_t(\x_t) - \sumT  f_t(\xs) = \sumT  \inner{\nabla f_t(\x_t)}{\x_t - \xs} - \sum_{t=1}^{T}\D_{f_t}(\xs, \x_t) \\
    &\le \sumT  \inner{\nabla f_t(\x_t)}{\x_t - \xs} - \frac{\lambda}{4}\sumT  \norm{\xs - \x_t}^2 - \frac{1}{2} \sum_{t=1}^{T}\D_{f_t}(\xs, \x_t) \\
    &\overset{\eqref{eq:oomd-raw}}{\le} \underbrace{\sumT \frac{1}{2\eta_t}\left( \norm{\xs - \xh_t}^2 - \norm{\xs - \xh_{t+1}}^2 \right) - \frac{\lambda}{4}\sumT  \norm{\xs - \x_t}^2 }_{\textsc{Term-A}} \\
    &\quad + \underbrace{\sumT  \eta_t\norm{\nabla f_t(\x_t) - \nabla f_{t-1}(\x_{t-1})}^2}_{\textsc{Term-B}} - \underbrace{\frac{1}{2} \sum_{t=1}^{T}\D_{f_t}(\xs, \x_t)}_{\textsc{Term-C}}.
\end{align*}
In the second line above, we use $\D_{f_t}(\x,\y)\ge \frac{\lambda}{2}\norm{\x-\y}^2$ by the $\lambda$-strong convexity of $f_t$. \\
We first investigate $\textsc{Term-A}$. Since $\eta_t = \frac{6}{\lambda t}$,
\begin{align*}
    \textsc{Term-A} &\le \frac{\norm{\xs - \xh_1}^2}{2\eta_1} + \sumTT \left(\frac{1}{2\eta_t} - \frac{1}{2\eta_{t-1}}\right) \norm{\xs - \xh_t}^2 - \frac{\lambda}{4}\sumT  \norm{\xs - \x_t}^2 \\
    &\le \frac{\lambda}{12}\sum_{t=1}^{T-1} \left( \norm{\xs - \xh_{t+1}}^2 - 2\norm{\xs - \x_t}^2 \right) \le \frac{\lambda}{6}\sum_{t=1}^{T-1} \norm{\x_t - \xh_{t+1}}^2 \\
    &\le \frac{\lambda}{6}\sum_{t=1}^{T-1} \eta_t^2 \norm{\nabla f_t(\x_t) - \nabla f_{t-1}(\x_{t-1})}^2 \le \textsc{Term-B},
\end{align*}
where in the second line we use $\xh_1 = \x_1$.
And in the last line above we apply~\pref{lemma:stability-lemma}~\citep{COLT'12:VT} in~\pref{appendix:lemmas-for-OOGD}.
Then by combining $\textsc{Term-A}$, $\textsc{Term-B}$ and $\textsc{Term-C}$ together and applying~\pref{lemma:inexact-smooth-bregman} in~\pref{appendix:lemmas-holder-smooth} with arbitrary $\delta>0$ that only exists in analysis, and denoting by $L=\delta^{\frac{\nu-1}{1+\nu}} L_\nu^{\frac{2}{1+\nu}}$, we obtain:
\begin{align*}
    \Reg_T
    \le {} & \sumT \frac{12}{\lambda t}\norm{\nabla f_t(\x_t) - \nabla f_{t-1}(\x_{t-1})}^2 - \frac{1}{2} \sum_{t=1}^{T}\D_{f_t}(\xs, \x_t) \label{eq:online-strongly-convex-regret-analysis} \\
    \le{} & \sum_{t=1}^{T}\frac{12}{\lambda t}\norm{\nabla f_t(\x_t) - \nabla f_t(\xs) + \nabla f_t(\xs) - \nabla f_{t-1}(\xs) + \nabla f_{t-1}(\xs) - \nabla f_{t-1}(\x_{t-1}) }^2 \\
    &\qquad - \frac{1}{2} \sum_{t=1}^{T}\D_{f_t}(\xs, \x_t) \\
    \le{}& \sum_{t=1}^{T}\frac{36}{\lambda t}\norm{\nabla f_t(\xs) - \nabla f_{t-1}(\xs)}^2 + \sum_{t=1}^{T} \sbr{ \frac{144L}{\lambda t} - \frac{1}{2} } \D_{f_t}(\xs,\x_t) + \sum_{t=1}^{T}\frac{144 L\delta}{\lambda t} \\
    \le{}& \sum_{t=1}^{T}\frac{36}{\lambda t}\sup_{\x\in\X}\norm{\nabla f_t(\x) - \nabla f_{t-1}(\x)}^2 + \sum_{t=1}^{T} \sbr{ \frac{144L}{\lambda t} - \frac{1}{2} } \D_{f_t}(\xs, \x_t) + \frac{144L\delta(1+\ln T)}{\lambda}.
\end{align*}
The first two terms can be well controlled by two technical lemmas~(\pref{lemma:sum-at/t} and~\pref{lemma:sum-a/t-1} of~\pref{appendix:useful-lemmas-for-step-size}), hence:
\begin{align*}
    \Reg_T
    &\le \frac{36\widehat{G}_{\max}^2}{\lambda} \ln\sbr{ 1 + \frac{V_T}{\widehat{G}_{\max}^2} } + \frac{72\widehat{G}_{\max}^2}{\lambda} + \frac{36\norm{\nabla f_1(\x_1)}^2}{\lambda} \\
    &\qquad + \frac{144 L L_\nu D^{1+\nu}}{\lambda}\ln\sbr{1 + \frac{288L}{\lambda}} + \frac{144L\delta(1+\ln T)}{\lambda},
\end{align*}
where we define $\widehat{G}_{\max}^2 \triangleq \max_{t\in[T-1]}\sup_{\x\in\X}\norm{\nabla f_t(\x) - \nabla f_{t+1}(\x)}^2$, and use the property of $(L_\nu,\nu)$-Hölder smooth function $f_t$ that $\D_{f_t}(\x,\y)\le L_\nu D^{1+\nu}$~\citep{nesterov2015universal}. Solving the trade-off: $L \delta \ln T = L L_\nu D^{1+\nu}$ with $L=\delta^{\frac{\nu-1}{1+\nu}} L_\nu^{\frac{2}{1+\nu}}$, we obtain $\delta = L_\nu D^{1+\nu} (\ln T)^{-1}$ and arrive at:
\begin{align*}
    \Reg_T
    &\le \frac{36\widehat{G}_{\max}^2}{\lambda} \ln\sbr{ 1 + \frac{V_T}{\widehat{G}_{\max}^2} } + \frac{72\widehat{G}_{\max}^2}{\lambda} + \frac{36\norm{\nabla f_1(\x_1)}^2}{\lambda} \\
    &\qquad + \frac{144 L_\nu^2 D^{2\nu}(\ln T)^{\frac{1-\nu}{1+\nu}}}{\lambda}  \ln\sbr{1 + \frac{288 L_\nu D^{\nu-1} (\ln T)^{\frac{1-\nu}{1+\nu}} }{ \lambda}} + \frac{144 L_\nu^2 D^{2\nu} (1+\ln T)^{\frac{1-\nu}{1+\nu}}}{\lambda} \\
    &= \O\sbr{ \frac{\widehat{G}_{\max}^2}{\lambda} \log\sbr{ 1 + \frac{V_T}{\widehat{G}_{\max}^2} } + \frac{L_\nu^2 D^{2\nu}}{\lambda}(\log T)^{\frac{1-\nu}{1+\nu}} + \frac{\norm{\nabla f_1(\x_1)}^2}{\lambda} },
\end{align*}
where $\ln(1+288L_\nu(\ln T)^{\nicefrac{(1-\nu)}{(1+\nu)}}/(\lambda D^{1-\nu})) = \O(1)$, because it only consists of the logarithm of the constant ${L_\nu}/{(\lambda D^{1-\nu})}$, and we treat the $\log \log T$ factor as a constant, following previous studies~\citep{COLT'15:Luo-AdaNormalHedge,JMLR'24:Sword++}.
\end{proof}

\subsection{Useful Lemmas for Theorem~\ref{thm:offline-str}}

In this subsection, we provide the proofs of some useful lemmas for \pref{thm:offline-str}. 

\begin{myLemma}[Online-to-batch Conversion for Strongly Convex Functions]
    \label{lemma:online-to-batch-strongly-convex}
    Let the objective $\ell(\cdot):\X\to\R$ be $\lambda$-strongly convex. By employing the online-to-batch conversion algorithm with online function $f_t(\x) \triangleq \alpha_t\inner{\nabla \ell(\overline{\x}_t)}{\x} + \frac{\lambda\alpha_t}{2}\norm{\x - \overline{\x}_t}^2$, we have, for any $\xs\in\X$:
    \begin{equation}
        \ell(\overline{\x}_T) - \ell(\xs) \le \frac{1}{\alpha_{1:T}}\sumT \sbr{ f_t(\x_t) - f_t(\xs) - \alpha_{1:t-1} \D_{\ell}(\overline{\x}_{t-1}, \overline{\x}_t) }.
    \end{equation}
\end{myLemma}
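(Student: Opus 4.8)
The plan is to establish this ``strongly convex online-to-batch'' identity by a telescoping argument on the weighted potential $\alpha_{1:t}\,\ell(\overline{\x}_t)$, in the spirit of the anytime/stabilized online-to-batch conversion~\citep{ICML'19:Ashok-acceleration,NeurIPS'19:UniXGrad}, but exploiting the $\lambda$-strong convexity of $\ell$ to upgrade the basic Jensen bound into a sharper inequality carrying a Bregman-divergence correction. The only ingredients are the defining recursion $\alpha_{1:t}\overline{\x}_t=\alpha_{1:t-1}\overline{\x}_{t-1}+\alpha_t\x_t$, the exact first-order expansion of $\ell$ around $\overline{\x}_t$, and strong convexity; everything else is bookkeeping with the cumulative weights $\alpha_{1:t}$.

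First I would record two elementary consequences of $\alpha_{1:t}\overline{\x}_t=\alpha_{1:t-1}\overline{\x}_{t-1}+\alpha_t\x_t$. Subtracting $\alpha_{1:t}\overline{\x}_{t-1}$ from both sides gives $\alpha_{1:t}(\overline{\x}_t-\overline{\x}_{t-1})=\alpha_t(\x_t-\overline{\x}_{t-1})$; rewriting $\x_t-\overline{\x}_{t-1}=\tfrac{\alpha_{1:t}}{\alpha_{1:t-1}}(\x_t-\overline{\x}_t)$ then yields the form I will actually use,
\[
\alpha_{1:t-1}\,(\overline{\x}_t-\overline{\x}_{t-1})=\alpha_t\,(\x_t-\overline{\x}_t).
\]
Next, writing $\g_t\triangleq\nabla\ell(\overline{\x}_t)$ and using the identity $\ell(\overline{\x}_{t-1})=\ell(\overline{\x}_t)+\inner{\g_t}{\overline{\x}_{t-1}-\overline{\x}_t}+\D_\ell(\overline{\x}_{t-1},\overline{\x}_t)$, I would expand the one-step potential difference and plug in the displayed relation to obtain
\[
\alpha_{1:t}\,\ell(\overline{\x}_t)-\alpha_{1:t-1}\,\ell(\overline{\x}_{t-1})=\alpha_t\,\ell(\overline{\x}_t)+\alpha_t\inner{\g_t}{\x_t-\overline{\x}_t}-\alpha_{1:t-1}\,\D_\ell(\overline{\x}_{t-1},\overline{\x}_t),
\]
which already isolates the negative Bregman term appearing in the claim.

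Then I would subtract $\alpha_t\,\ell(\xs)$ from both sides and bound the surviving quantity $\alpha_t\big(\ell(\overline{\x}_t)+\inner{\g_t}{\x_t-\overline{\x}_t}-\ell(\xs)\big)$ via $\lambda$-strong convexity at $\overline{\x}_t$, namely $\ell(\overline{\x}_t)-\ell(\xs)\le\inner{\g_t}{\overline{\x}_t-\xs}-\tfrac{\lambda}{2}\norm{\xs-\overline{\x}_t}^2$. This collapses the bracket to $\alpha_t\inner{\g_t}{\x_t-\xs}-\tfrac{\lambda\alpha_t}{2}\norm{\xs-\overline{\x}_t}^2$, and a direct expansion of $f_t(\x)=\alpha_t\inner{\g_t}{\x}+\tfrac{\lambda\alpha_t}{2}\norm{\x-\overline{\x}_t}^2$ shows this equals $f_t(\x_t)-f_t(\xs)-\tfrac{\lambda\alpha_t}{2}\norm{\x_t-\overline{\x}_t}^2\le f_t(\x_t)-f_t(\xs)$. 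Summing the resulting per-step inequality over $t\in[T]$, the left-hand side telescopes to $\alpha_{1:T}\ell(\overline{\x}_T)-\alpha_{1:T}\ell(\xs)$ (using the convention $\alpha_{1:0}=0$, so the $t=1$ divergence term vanishes and $\overline{\x}_0$ never appears), and dividing by $\alpha_{1:T}$ gives the statement.

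I do not expect a genuine obstacle: the core is a clean telescoping identity rather than a long inequality chain, and strong convexity enters exactly once. The one place that needs care is the weight bookkeeping --- ensuring the inner-product term produced by the telescoping is precisely $\alpha_t\inner{\g_t}{\x_t-\overline{\x}_t}$ (so the strong-convexity step closes with no slack wasted) and keeping the $\alpha_{1:t}$ versus $\alpha_{1:t-1}$ indices straight --- together with a one-line check of the $t=1$ boundary case.
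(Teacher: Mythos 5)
Your proposal is correct, and it is essentially the same argument as the paper's own proof. Both proofs rest on the same three ingredients: the telescoping decomposition of $\alpha_{1:T}\ell(\xb_T)$ via the potential $\alpha_{1:t}\ell(\xb_t)$, the exact Bregman expansion of $\ell(\xb_{t-1})$ around $\xb_t$ together with the weight identity $\alpha_{1:t-1}(\xb_t-\xb_{t-1})=\alpha_t(\x_t-\xb_t)$, and a single application of $\lambda$-strong convexity at $\xb_t$, followed by adding the slack $\tfrac{\lambda\alpha_t}{2}\norm{\x_t-\xb_t}^2\geq 0$ to complete the quadratic $f_t$. The paper packages this by first splitting the telescope into the two sums $\sum_t\alpha_t(\ell(\xb_t)-\ell(\xs))$ and $\sum_t\alpha_{1:t-1}(\ell(\xb_t)-\ell(\xb_{t-1}))$ and bounding each, whereas you expand the one-step potential difference directly before bringing in strong convexity; this is a cosmetic reordering, not a different route.
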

\begin{proof}
This lemma is the variant of the stabilized online-to-batch conversion~\citep{ICML'19:Ashok-acceleration} for strongly convex functions. We start from the equality:
\begin{align*}
    &\ell(\xb_T) - \ell(\xs) = \frac{\alpha_1\ell(\xb_1)}{\alpha_{1:T}} + \sum_{t=2}^{T}\frac{\alpha_{1:t} \ell(\xb_t) - \alpha_{1:t-1} \ell(\xb_{t-1}) }{\alpha_{1:T}} - \ell(\xs) \\
    = {} & \frac{1}{\alpha_{1:T}}\sum_{t=1}^{T} \alpha_t(\ell(\xb_t) - \ell(\xs)) + \frac{1}{\alpha_{1:T}}\sum_{t=2}^{T}\alpha_{1:t-1} (\ell(\xb_t) - \ell(\xb_{t-1})) \\
    \le {} & \frac{1}{\alpha_{1:T}}\sum_{t=1}^{T} \alpha_t\sbr{\inner{\nabla \ell(\xb_t)}{\xb_t - \xs} - \frac{\lambda}{2}\norm{\xb_t - \xs}^2} \\
    & \qquad + \frac{1}{\alpha_{1:T}}\sum_{t=2}^{T}\alpha_{1:t-1}\sbr{\inner{\nabla \ell(\xb_t)}{\xb_t - \xb_{t-1}} - \D_\ell(\xb_{t-1}, \xb_t) } \\
    = {} & \frac{1}{\alpha_{1:T}}\sum_{t=1}^{T} \alpha_t\sbr{\inner{\nabla \ell(\xb_t)}{\xb_t - \xs} - \frac{\lambda}{2}\norm{\xb_t - \xs}^2} \\
    & \qquad + \frac{1}{\alpha_{1:T}}\sum_{t=2}^{T}\alpha_{t}\inner{\nabla \ell(\xb_t)}{\x_t - \xb_t}  - \frac{1}{\alpha_{1:T}}\sum_{t=2}^{T}\alpha_{1:t-1}\D_\ell(\xb_{t-1}, \xb_t) \\
    \le {} & \frac{1}{\alpha_{1:T}}\sum_{t=1}^{T} \alpha_t\sbr{\inner{\nabla \ell(\xb_t)}{\x_t - \xs} + \frac{\lambda}{2}\norm{\x_t - \xb_t}^2 - \frac{\lambda}{2}\norm{\xb_t - \xs}^2} -\frac{1}{\alpha_{1:T}}\sum_{t=2}^{T} \alpha_{1:t-1}\D_\ell(\xb_{t-1}, \xb_t) \\
    = {} & \frac{1}{\alpha_{1:T}}\sum_{t=1}^{T} \sbr{ f_t(\x_t) - f_t(\xs) - \alpha_{1:t-1} \D_{\ell}(\xb_{t-1}, \xb_t) },
\end{align*}
where in the inequality we use the definition of $\lambda$-strong convexity and Bregman divergence, after which we use the property of $\alpha_{1:t-1}(\xb_{t-1} - \xb_t) = \alpha_t(\xb_t - \x_t)$ in Theorem~1 of \citet{ICML'19:Ashok-acceleration}. The second inequality is by directly adding the positive term $\frac{\lambda\alpha_t}{2\alpha_{1:T}}\norm{\x_t - \xb_t}^2$.
\end{proof}

\begin{myLemma}
    \label{lemma:str-offline}
    Consider the optimization problem $\min_{\x \in \X}\ \ell(\x)$ in the deterministic setting, where the domain $\X$ can be either bounded (i.e., as described in~\pref{assum:bounded-domain}) or unbounded, and where the objective $\ell$ is $\lambda$-strongly convex. Then $\beta_t$ is non-increasing with a lower bound $\bar{\beta}$. Denoting by $t_0$ the minimum iteration satisfying $\beta_{t_0} = \bar{\beta}$, otherwise let $t_0 = \tau + 1$. \pref{alg:offline-str} ensures:
    \begin{equation}
        \label{eq:str-offline-lemma}
        \ell(\overline{\x}_\tau) - \ell(\xs) \le \frac{\norm{\nabla \ell(\x_1)}^2}{\lambda \prod_{t=2}^{\tau}(1+\beta_s)} + \sum_{t=t_0}^{\tau} \frac{2\bar{\beta}^2}{\lambda (1+\bar{\beta})^{\tau-t+2}}\big\| \nabla\ell(\overline{\x}_t) - \nabla\ell(\overline{\x}_{t-1})\big\|^2.
    \end{equation}
\end{myLemma}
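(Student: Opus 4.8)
The plan is to chain three ingredients that are already in place: the elementary monotonicity of the guessed ratios $\beta_t$, the strongly convex online-to-batch conversion (\pref{lemma:online-to-batch-strongly-convex}), and the regret bound of the one-step optimistic OGD update underlying~\pref{alg:offline-str} (\pref{lemma:one-step-oomd}). I would first read off the structural facts about $\beta_t$ from the pseudocode: inside the inner loop $\beta_{t+1}$ is initialized to $\beta_t$ and only ever replaced by $\max\{\beta_{t+1}/2,\bar\beta\}$, so $\bar\beta\le\beta_{t+1}\le\beta_t$ along the accepted iterations; moreover, once some $\beta_t$ equals $\bar\beta$ it stays there forever (line~9 fires immediately), so $t_0$ is well defined with $\beta_t=\bar\beta$ for all $t_0\le t\le\tau$, while every iteration $2\le t<t_0$ passed the check at line~11 and hence satisfies $\beta_t\le\sqrt{\lambda/(4L_t)}$. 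I would also note that $\alpha_1=1$ together with $\alpha_{1:t}=\alpha_{1:t-1}(1+\beta_t)$ yields $\alpha_{1:\tau}=\prod_{s=2}^{\tau}(1+\beta_s)$, and that the discarded guesses never enter later computations, so the accepted sequence $(\x_t,\xb_t)$ is genuinely the trajectory of a one-step optimistic OGD run with step sizes $\eta_t=1/(\lambda\alpha_{1:t})$, optimism $M_t$, and linear losses $\inner{\g_t}{\cdot}$, where $\g_t=\nabla f_t(\x_t)$ for the $\lambda\alpha_t$-strongly convex surrogate $f_t(\x)\triangleq\alpha_t\inner{\nabla\ell(\xb_t)}{\x}+\tfrac{\lambda\alpha_t}{2}\norm{\x-\xb_t}^2$.

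Next I would invoke~\pref{lemma:online-to-batch-strongly-convex}, which bounds $\ell(\xb_\tau)-\ell(\xs)$ by $\tfrac{1}{\alpha_{1:\tau}}\sum_{t=1}^{\tau}\bigl(f_t(\x_t)-f_t(\xs)-\alpha_{1:t-1}\D_\ell(\xb_{t-1},\xb_t)\bigr)$, and then control the weighted regret by combining the $\lambda\alpha_t$-strong convexity of $f_t$, i.e., $f_t(\x_t)-f_t(\xs)\le\inner{\g_t}{\x_t-\xs}-\tfrac{\lambda\alpha_t}{2}\norm{\x_t-\xs}^2$, with the one-step optimistic OGD bound. The clean point is that $\eta_t^{-1}=\lambda\alpha_{1:t}$, so the shrinking-step-size penalty $\tfrac12(\eta_t^{-1}-\eta_{t-1}^{-1})\norm{\x_t-\xs}^2$ equals exactly $\tfrac{\lambda\alpha_t}{2}\norm{\x_t-\xs}^2$ and is cancelled by the strong-convexity term (similarly the initial $\norm{\x_1-\xs}^2/(2\eta_1)$ against $\tfrac{\lambda\alpha_1}{2}\norm{\x_1-\xs}^2$), leaving $\sum_{t}(f_t(\x_t)-f_t(\xs))\le\sum_{t}\eta_t\norm{\g_t-M_t}^2-c'\sum_{t\ge2}\eta_t^{-1}\norm{\x_t-\x_{t-1}}^2$ for the stability constant $c'$ of~\pref{lemma:one-step-oomd}.

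The heart of the argument is an exact evaluation of $\g_t-M_t$. Using $\xb_t=\tfrac1{\alpha_{1:t}}(\alpha_{1:t-1}\xb_{t-1}+\alpha_t\x_t)$ and $\xt_t=\tfrac1{\alpha_{1:t}}(\alpha_{1:t-1}\xb_{t-1}+\alpha_t\x_{t-1})$, one finds for $t\ge2$ that $\g_t-M_t=\alpha_t\bigl(\nabla\ell(\xb_t)-\nabla\ell(\xb_{t-1})\bigr)+\lambda\alpha_t\tfrac{\alpha_{1:t-1}}{\alpha_{1:t}}(\x_t-\x_{t-1})$, hence $\eta_t\norm{\g_t-M_t}^2\le\tfrac{2\alpha_t^2}{\lambda\alpha_{1:t}}\norm{\nabla\ell(\xb_t)-\nabla\ell(\xb_{t-1})}^2+\tfrac{2\lambda\alpha_t^2\alpha_{1:t-1}^2}{\alpha_{1:t}^3}\norm{\x_t-\x_{t-1}}^2$, while the $t=1$ term is simply $\eta_1\norm{\g_1}^2=\norm{\nabla\ell(\x_1)}^2/\lambda$ since $\xb_1=\x_1$, $M_1=\mathbf{0}$, $\alpha_1=1$. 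A short AM-GM calculation, $\alpha_{1:t}^4=(\alpha_t+\alpha_{1:t-1})^4\ge16\alpha_t^2\alpha_{1:t-1}^2$, shows the iterate-movement term is absorbed by the negative stability term, so after dropping the non-positive remainder the weighted regret is at most $\tfrac{\norm{\nabla\ell(\x_1)}^2}{\lambda}+\sum_{t=2}^{\tau}\tfrac{2\alpha_t^2}{\lambda\alpha_{1:t}}\norm{\nabla\ell(\xb_t)-\nabla\ell(\xb_{t-1})}^2$. Plugging this back and writing $\norm{\nabla\ell(\xb_t)-\nabla\ell(\xb_{t-1})}^2=2L_t\D_\ell(\xb_{t-1},\xb_t)$, $\alpha_t=\beta_t\alpha_{1:t-1}$, the $t$-th summand becomes $\tfrac{\alpha_{1:t-1}}{\alpha_{1:\tau}}\bigl(\tfrac{4\beta_t^2L_t}{\lambda(1+\beta_t)}-1\bigr)\D_\ell(\xb_{t-1},\xb_t)$ --- exactly the cancellation expression~\pref{eq:cancellation-term}; for $2\le t<t_0$ the check forces $\beta_t^2\le\lambda/(4L_t)$, making this $\le0$, whereas for $t_0\le t\le\tau$ I keep only its positive part and exploit $\beta_s=\bar\beta$ for $s\ge t_0$, namely $\tfrac{\alpha_t^2}{\alpha_{1:t}}=\tfrac{\bar\beta^2\alpha_{1:t-1}}{1+\bar\beta}$ and $\alpha_{1:\tau}=\alpha_{1:t-1}(1+\bar\beta)^{\tau-t+1}$, to reach precisely $\tfrac{2\bar\beta^2}{\lambda(1+\bar\beta)^{\tau-t+2}}\norm{\nabla\ell(\xb_t)-\nabla\ell(\xb_{t-1})}^2$. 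Combined with $\tfrac1{\alpha_{1:\tau}}\cdot\tfrac{\norm{\nabla\ell(\x_1)}^2}{\lambda}=\tfrac{\norm{\nabla\ell(\x_1)}^2}{\lambda\prod_{s=2}^{\tau}(1+\beta_s)}$, this gives~\pref{eq:str-offline-lemma}.

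The main obstacle is this central computation: one must get the identity for $\g_t-M_t$ exactly right, cleanly separate its ``gradient-variation of $\ell$'' part from its iterate-movement part, and check that the movement part is swallowed by the negative stability term of the one-step optimistic OGD --- all while keeping the absolute constants aligned so that the shrinking-step-size penalty is cancelled \emph{exactly} (not merely up to a constant) by the strong convexity of the surrogates $f_t$. Minor care is also needed for the degenerate case $\D_\ell(\xb_{t-1},\xb_t)=0$, where $L_t$ is undefined but both sides of the cancellation term vanish, and for ensuring $t_0\ge2$ (which holds since $\beta_1=1>\bar\beta$) so that the sum over $t\ge t_0$ never reaches the nonexistent index $t=1$.
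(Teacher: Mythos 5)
Your proposal is correct and follows essentially the same route as the paper's proof: you invoke \pref{lemma:online-to-batch-strongly-convex} to reduce to a weighted regret of the surrogate losses $f_t$, apply \pref{lemma:one-step-oomd}, observe that the choice $\eta_t^{-1}=\lambda\alpha_{1:t}$ makes the telescoping step-size penalty cancel exactly against the $\lambda\alpha_t$-strong convexity of $f_t$, compute $\nabla f_t(\x_t)-M_t$ exactly, split it into a gradient-variation part and an iterate-movement part, absorb the latter into the negative stability term, and then use the guess-and-check invariant $\beta_t\le\sqrt{\lambda/(4L_t)}$ (for $t<t_0$) to kill the remaining summands while leaving the $t\ge t_0$ terms as stated. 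One small caveat worth making explicit: your AM-GM inequality $\alpha_{1:t}^4\ge 16\alpha_t^2\alpha_{1:t-1}^2$ alone only bounds the movement term by $\tfrac{\lambda\alpha_{1:t}}{8}\norm{\x_t-\x_{t-1}}^2$, whereas the negative stability term supplies $\tfrac{\lambda\alpha_{1:t-1}}{4}\norm{\x_t-\x_{t-1}}^2$; you still need $\alpha_{1:t}\le 2\alpha_{1:t-1}$, i.e.\ $\beta_t\le 1$, to close the gap. This is available (you established monotonicity with $\beta_1=1$), and the paper packages the same fact as $\beta_t^2/(1+\beta_t)^3\le 1/8$, but your sketch should state the dependence rather than attribute the absorption to AM-GM alone.
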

\begin{proof}
With $f_t(\x) \triangleq \alpha_t\inner{\nabla \ell(\overline{\x}_t)}{\x} + \frac{\lambda\alpha_t}{2}\norm{\x - \overline{\x}_t}^2$, by~\pref{lemma:online-to-batch-strongly-convex} we have:
\begin{align*}
    \ell(\overline{\x}_\tau) - \ell(\xs) \le \frac{1}{\alpha_{1:\tau}}\sum_{t=1}^{\tau} \sbr{ f_t(\x_t) - f_t(\xs) - \alpha_{1:t-1} \D_{\ell}(\overline{\x}_{t-1}, \overline{\x}_t) }.
\end{align*}
By~\pref{lemma:one-step-oomd} of~\pref{appendix:lemmas-for-OOGD}, with the definitions $\eta_t = \frac{1}{\lambda \alpha_{1:t}}$, $M_t = \alpha_t\nabla \ell(\xb_{t-1}) + \lambda \alpha_t(\x_{t-1} - \xt_t)$, $\xt_t = \frac{1}{\alpha_{1:t}}(\sum_{s=1}^{t-1}\alpha_s \x_s + \alpha_t \x_{t-1})$, we arrive at
\begin{align*}
    & \sum_{t=1}^{\tau} (f_t(\x_t) - f_t(\xs)) - \sum_{t=1}^{\tau} \alpha_{1:t-1} \D_{\ell}(\overline{\x}_{t-1}, \overline{\x}_t) \\
    \le {} & \sum_{t=1}^{\tau} \inner{\nabla f_t(\x_t) }{\x_t - \xs} - \frac{\lambda}{2}\sum_{t=1}^{\tau} \alpha_t \norm{\x_t - \xs}^2 - \sum_{t=1}^{\tau} \alpha_{1:t-1} \D_{\ell}(\overline{\x}_{t-1}, \overline{\x}_t) \\
    \le {} & \sum_{t=1}^{\tau} {\eta_t}\norm{\nabla f_t(\x_t) - M_t }^2 - \sum_{t=1}^{\tau} \frac{1}{4\eta_t}\norm{\x_t - \x_{t+1}}^2\\
    & \qquad\qquad - \sum_{t=1}^{\tau} \alpha_{1:t-1} \D_{\ell}(\overline{\x}_{t-1}, \overline{\x}_t) + \sum_{t=2}^{\tau} \sbr{ \frac{1}{2\eta_t} - \frac{1}{2\eta_{t-1}} - \frac{\lambda\alpha_t}{2} } \norm{\x_t - \xs}^2 \\
    = {} & \sum_{t=2}^{\tau} \frac{\alpha_t^2}{\lambda \alpha_{1:t}}\big\| \nabla \ell(\overline{\x}_t) -\nabla \ell(\overline{\x}_{t-1}) + \lambda(\x_t - \x_{t-1} - \overline{\x}_t + {\tilde{\x}_t}) \big\|^2 - \sum_{t=2}^{\tau} \frac{1}{4\eta_{t-1}}\norm{\x_t - \x_{t-1}}^2 \\
    & \qquad\qquad - \sum_{t=1}^{\tau}  \alpha_{1:t-1} \D_{\ell}(\overline{\x}_{t-1}, \overline{\x}_t) + \frac{\alpha_1}{\lambda}\norm{\nabla \ell(\x_1)}^2  \tag*{(by setting $\eta_t = \frac{1}{\lambda \alpha_{1:t}}$)} \\
    \le {} & \sum_{t=2}^{\tau} \frac{2\alpha_t^2}{\lambda \alpha_{1:t}} \big\| \nabla\ell(\overline{\x}_t) - \nabla\ell(\overline{\x}_{t-1})\big\|^2 + \sum_{t=2}^{\tau} \frac{2\alpha_t^2\lambda}{ \alpha_{1:t}}\Big\|\sbr{ 1 - \frac{\alpha_t}{\alpha_{1:t}}}(\x_t - \x_{t-1}) \Big\|^2 \\
    & \qquad\qquad - \sum_{t=2}^{\tau} \frac{\lambda\alpha_{1:t-1}}{4}\norm{\x_t - \x_{t-1}}^2 - \sum_{t=1}^{\tau} \alpha_{1:t-1} \D_{\ell}(\overline{\x}_{t-1}, \overline{\x}_t) + \frac{\alpha_1}{\lambda}\norm{\nabla \ell(\x_1)}^2  \\
    \le {} & \sum_{t=2}^{\tau} \frac{2\alpha_t^2}{\lambda \alpha_{1:t}} \big\| \nabla\ell(\overline{\x}_t) - \nabla\ell(\overline{\x}_{t-1})\big\|^2 - \sum_{t=1}^{\tau} \alpha_{1:t-1} \D_{\ell}(\overline{\x}_{t-1}, \overline{\x}_t) + \frac{\alpha_1}{\lambda}\norm{\nabla \ell(\x_1)}^2 \\
    & \qquad\qquad + \sum_{t=2}^{\tau} \sbr{\frac{2\alpha_t^2\alpha_{1:t-1}^2 \lambda}{ \alpha_{1:t}^3} -  \frac{\lambda\alpha_{1:t-1}}{4}}\norm{\x_t - \x_{t-1}}^2 \\
    \le {} & \sum_{t=2}^{\tau} \frac{2\alpha_t^2}{\lambda \alpha_{1:t}} \big\| \nabla\ell(\overline{\x}_t) - \nabla\ell(\overline{\x}_{t-1})\big\|^2 - \sum_{t=1}^{\tau} \alpha_{1:t-1} \D_{\ell}(\overline{\x}_{t-1}, \overline{\x}_t) + \frac{\alpha_1}{\lambda}\norm{\nabla \ell(\x_1)}^2,
\end{align*}
where the last inequality is because $\beta_t \le 1$ and consequently
\begin{equation*}
    \frac{2\alpha_t^2\alpha_{1:t-1}^2 \lambda}{ \alpha_{1:t}^3} -  \frac{\lambda\alpha_{1:t-1}}{4} = 2\alpha_{1:t-1} \lambda\sbr{ \frac{\beta_t^2}{(1+\beta_t)^3} - \frac{1}{8} } \le 0.
\end{equation*}
\pref{alg:offline-str} ensures that for all $t\ge 2$, $\beta_t > \frac{1}{2}\sqrt{\frac{\lambda}{4L_\ell}}$, and either $\beta_t>\bar{\beta}$ or $\beta_t = \bar{\beta}$.
When $\beta_t>\bar{\beta}$, it holds that $\beta_t\le \sqrt{\frac{\lambda}{4L_t}}$ due to the algorithm design, then we have
\begin{align*}
    &\frac{2\alpha_t^2}{\lambda \alpha_{1:t}} \big\| \nabla\ell(\overline{\x}_t) - \nabla\ell(\overline{\x}_{t-1})\big\|^2 - \alpha_{1:t-1} \D_{\ell}(\overline{\x}_{t-1}, \overline{\x}_t) \\
    = {} & \sbr{ \frac{4L_t\alpha_t^2}{\lambda \alpha_{1:t}\alpha_{1:t-1}} - 1 } \alpha_{1:t-1}\D_{\ell}(\overline{\x}_{t-1}, \overline{\x}_t) = \sbr{ \frac{4L_t \beta_t^2}{\lambda (1+\beta_t)} - 1 }\alpha_{1:t-1}\D_{\ell}(\overline{\x}_{t-1}, \overline{\x}_t) \le 0.
\end{align*}
Since $\beta_t$ is non-increasing, denoting by $t_0$ the minimum iteration satisfying $\beta_{t_0} = \bar{\beta}$, otherwise let $t_0 = \tau+1$. Then for all $t \ge t_0$, $\beta_t=\bar{\beta}$. Finally, we arrive at
\begin{align*}
    \ell(\xb_\tau) - \ell(\xs) &\le \frac{\alpha_1\norm{\nabla \ell(\x_1)}^2}{\lambda\alpha_{1:\tau}} + \sum_{t=t_0}^{\tau} \frac{2\alpha_t^2}{\lambda \alpha_{1:t}\alpha_{1:\tau}}\big\| \nabla\ell(\overline{\x}_t) - \nabla\ell(\overline{\x}_{t-1})\big\|^2 \\
    &= \frac{\norm{\nabla \ell(\x_1)}^2}{\lambda \prod_{t=2}^{\tau}(1+\beta_s)} + \sum_{t=t_0}^{\tau} \frac{2\bar{\beta}^2}{\lambda (1+\bar{\beta})^{\tau-t+2}}\big\| \nabla\ell(\overline{\x}_t) - \nabla\ell(\overline{\x}_{t-1})\big\|^2,
\end{align*}
which finishes the proof.
\end{proof}

\subsection{Proof of Theorem~\ref{thm:offline-str} and Corollary}
\label{appendix:proof-offline-str}

\begin{proof}[Proof of~\pref{thm:offline-str}] We do not know whether $\ell(\x)$ is smooth or non-smooth, but it is Lipschitz continuous with unknown constant $G$. We have $\max_{1< t\le \tau}\norm{\nabla \ell(\xb_t) - \nabla \ell(\xb_{t-1})}^2 \le 4G^2$.
By~\pref{lemma:str-offline},
\begin{align}
    \ell(\xb_\tau) - \ell(\xs) &\le \frac{\norm{\nabla \ell(\x_1)}^2}{\lambda \prod_{t=2}^{\tau}(1+\beta_s)} + \frac{8G^2\bar{\beta}^2}{\lambda}\sum_{t=t_0}^{\tau}\frac{1}{(1+\bar{\beta})^{\tau-t+2}} \notag \\
    &\le \frac{2\norm{\nabla \ell(\x_1)}^2}{\lambda (1+\max\{1/(4\sqrt{\kappa}), \bar{\beta}\})^\tau} + \frac{8G^2\bar{\beta}}{\lambda}\cdot \indicator\bbr{t_0\le \tau}. \label{eq:str-offline-universal}
\end{align}
By choosing $\bar{\beta}=\exp(\frac{1}{T}\ln T)-1$, we conduct the following case-by-case study:
\paragraph{Case of $\frac{1}{4\sqrt{\kappa}}\ge \bar{\beta}$.} Then since for all $t\ge 2$, $\beta_t>\frac{1}{4\sqrt{\kappa}}$, we have $t_0=\tau+1$ by definition, then the second term in~\pref{eq:str-offline-universal} becomes zero. In this case, we have
\begin{align*}
    \ell(\xb_\tau) - \ell(\xs) \le \O\sbr{\frac{\norm{\nabla \ell(\x_1)}^2}{\lambda}\min\bbr{ \exp\sbr{\frac{-\tau}{1+4\sqrt{\kappa}}}, \sbr{1+\bar{\beta}}^{-\tau} } }.
\end{align*}
Moreover, the total gradient queries number $T\le \tau+\lfloor \log_2(4\sqrt{\kappa})\rfloor$, then we arrive at
\begin{align*}
    \ell(\xb_\tau) - \ell(\xs) &\le \O\sbr{\frac{\norm{\nabla \ell(\x_1)}^2}{\lambda}\min\bbr{ \exp\sbr{\frac{-T+\log_2(4\sqrt{\kappa})}{1+4\sqrt{\kappa}}}, \sbr{1+\bar{\beta}}^{-T+\log_2(4\sqrt{\kappa})} } } \\
    &\le \O\sbr{\frac{\norm{\nabla \ell(\x_1)}^2}{\lambda}\min\bbr{ \exp\sbr{\frac{-T}{1+4\sqrt{\kappa}}}, \frac{1}{T} \sbr{1+\bar{\beta}}^{\log_2 (1/\bar{\beta})} } } \\
    &\le \O\sbr{\frac{\norm{\nabla \ell(\x_1)}^2}{\lambda}\min\bbr{ \exp\sbr{\frac{-T}{6\sqrt{\kappa}}}, \frac{1}{T} } },
\end{align*}
where in the second inequality we use $\exp\sbr{\frac{\log_2 x}{1+x}}<1.5$ for all $x>0$, and the definition of $\bar{\beta}$, and in the last inequality we use $(1+x)^{1+\log_2 (1/x)}< 3$ for all $x>0$.

\paragraph{Case of $\frac{1}{4\sqrt{\kappa}}< \bar{\beta}$.} In this case, by~\pref{eq:str-offline-universal} we have:
\begin{align*}
    \ell(\xb_\tau) - \ell(\xs) \le \O\sbr{\frac{\norm{\nabla \ell(\x_1)}^2}{\lambda}\sbr{1+\bar{\beta}}^{-\tau} + \frac{G^2\bar{\beta}}{\lambda} }.
\end{align*}
Moreover, the total gradient queries number $T\le \tau+\lceil \log_2 (1/\bar{\beta})\rceil$, then we arrive at
\begin{align*}
    \ell(\xb_T) - \ell(\xs) &\le \O\sbr{\frac{\norm{\nabla \ell(\x_1)}^2}{\lambda}\sbr{1+\bar{\beta}}^{-T+\lceil \log_2 (1/\bar{\beta})\rceil} + \frac{G^2\bar{\beta}}{\lambda} } \\
    &\le \O\sbr{ \frac{\norm{\nabla \ell(\x_1)}^2}{\lambda T} + \frac{G^2\log T}{\lambda T} },
\end{align*}
where we use $(1+x)^{1+\log_2 (1/x)}< 3$ for all $x>0$ and $\bar{\beta}=\exp(\frac{1}{T}\ln T)-1\le \frac{5}{4T}\ln T$.
Additionally, the exponential rate to be proved in this case, that is $\exp(\frac{-T}{6\sqrt{\kappa}})> \exp(-\frac{4}{6}T\bar{\beta})\ge \exp(-\frac{5}{6}\ln T) = \frac{1}{T^{5/6}} = \Omega(\frac{\log T}{T})$, is dominated. Finally, combining these two cases, we obtain
\begin{align*}
    \ell(\xb_\tau) - \ell(\xs) \le \O\left( \frac{G^2}{\lambda}\min\bbr{ \exp\sbr{\frac{-T}{6\sqrt{\kappa}}}, \frac{\log T}{T} } \right),
\end{align*}
which finishes the proof.
\end{proof}

When the optimization problem is easier, i.e., with additional informations, we can use~\pref{alg:offline-str} framework to obtain better convergence rates, as provided in~\pref{cor:offline-str}.

\begin{myCor}
    \label{cor:offline-str}
    Consider the optimization problem $\min_{\x \in \X}\ \ell(\x)$ in the deterministic setting, where the domain $\X$ can be either bounded (i.e., as described in~\pref{assum:bounded-domain}) or unbounded, and where the objective $\ell$ is $\lambda$-strongly convex. In the following two cases:
    \begin{itemize}[itemsep=-4pt,topsep=-1pt, leftmargin=20pt]
        \item[\rom{1}] If $\ell$ is known to be $L_\ell$-smooth, then~\pref{alg:offline-str} with $\beta_1=\bar{\beta}=\sqrt{\lambda/(4L_\ell)}$ ensures that
        \begin{equation*}
            \ell(\xb_\tau) - \ell(\xs) \le \O\left(\frac{\norm{\nabla \ell(\x_1)}^2}{\lambda}\exp\left(\frac{-T}{1+2\sqrt{\kappa}}\right)\right),
        \end{equation*}
        where $\kappa\triangleq L_\ell/\lambda$ denotes the condition number.
        \item[\rom{2}] If $\ell$ is smooth but the smoothness parameter $L_\ell$ remains unknown, then~\pref{alg:offline-str} with $\beta_1=1,\bar{\beta}=0$ ensures that
        \begin{equation*}
            \ell(\overline{\x}_\tau) - \ell(\xs) \le \O\sbr{\frac{\norm{\nabla \ell(\x_1)}^2}{\lambda}\exp\sbr{\frac{-T}{1+4\sqrt{\kappa}}}}.
        \end{equation*}
    \end{itemize}    
\end{myCor}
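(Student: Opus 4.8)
The plan is to obtain both bounds from~\pref{lemma:str-offline}, which already reduces the suboptimality gap of~\pref{alg:offline-str} to
\[
\ell(\xb_\tau) - \ell(\xs) \le \frac{\norm{\nabla \ell(\x_1)}^2}{\lambda \prod_{t=2}^{\tau}(1+\beta_t)} + \sum_{t=t_0}^{\tau} \frac{2\bar{\beta}^2}{\lambda (1+\bar{\beta})^{\tau-t+2}}\norm{ \nabla\ell(\xb_t) - \nabla\ell(\xb_{t-1})}^2 ,
\]
so it remains to (a) analyze the trajectory $\{\beta_t\}$ and count the ``wasted'' guess-and-check queries under each configuration, and (b) convert $\prod_{t=2}^{\tau}(1+\beta_t)$ into an exponential via $\ln(1+x)\ge x/(1+x)$. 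The one smoothness ingredient I will use is that the (in part~\rom{1} analysis-only) quantity $L_t\triangleq \norm{\nabla\ell(\xb_t)-\nabla\ell(\xb_{t-1})}^2/(2\D_\ell(\xb_{t-1},\xb_t))$ satisfies $L_t\le L_\ell$, which is the standard co-coercivity inequality $\norm{\nabla\ell(\x)-\nabla\ell(\y)}^2\le 2L_\ell\D_\ell(\x,\y)$; I will also use that $\kappa = L_\ell/\lambda \ge 1$.

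For part~\rom{2} ($\beta_1=1$, $\bar\beta=0$), the second summation above vanishes because its numerator equals $2\bar\beta^2=0$, so I only need a uniform lower bound on $\beta_t$ and an upper bound on $T-\tau$. First, $\beta_t$ is non-increasing, and a halving $\beta\leftarrow\beta/2$ happens only when the check fails, i.e.\ when $\beta>\sqrt{\lambda/(4L_t)}\ge\sqrt{\lambda/(4L_\ell)}=1/(2\sqrt\kappa)$; moreover once $\beta\le1/(2\sqrt\kappa)$ the check $\beta\le\sqrt{\lambda/(4L_t)}$ always passes (since $L_t\le L_\ell$), so $\beta$ never drops below $1/(4\sqrt\kappa)$ and $\beta_t>1/(4\sqrt\kappa)$ for all $t$. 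These same facts bound the total number of halvings, hence the number of queries that do not advance $t$, by $\lceil\log_2(4\sqrt\kappa)\rceil$, so $\tau\ge T-\lceil\log_2(4\sqrt\kappa)\rceil-\O(1)$. Substituting $\beta_t>1/(4\sqrt\kappa)$ gives $\prod_{t=2}^{\tau}(1+\beta_t)\ge(1+\tfrac{1}{4\sqrt\kappa})^{\tau-1}\ge\exp\!\big(\tfrac{\tau-1}{1+4\sqrt\kappa}\big)$, and replacing $\tau$ by $T$ costs only a constant factor since $\log_2(4\sqrt\kappa)=\O(\sqrt\kappa)$ for $\kappa\ge1$, yielding $\ell(\xb_\tau)-\ell(\xs)\le\O\!\big(\tfrac{\norm{\nabla\ell(\x_1)}^2}{\lambda}\exp(-T/(1+4\sqrt\kappa))\big)$.

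For part~\rom{1} ($\beta_1=\bar\beta=\sqrt{\lambda/(4L_\ell)}$), one has $\beta_t\equiv\bar\beta$ for every $t$, because the test ``$\beta_{t+1}=\bar\beta$'' in the inner loop triggers immediately, so no halving ever occurs and each outer iteration costs exactly one query, giving $\tau=T-\O(1)$. The only obstacle is that the second summation does not vanish from the statement of~\pref{lemma:str-offline} alone, so I will reopen its proof: the residual it leaves at step $t$ equals $\big(\tfrac{4L_t\beta_t^2}{\lambda(1+\beta_t)}-1\big)\alpha_{1:t-1}\D_\ell(\xb_{t-1},\xb_t)$, and with $\beta_t=\bar\beta=\sqrt{\lambda/(4L_\ell)}$ and $L_t\le L_\ell$ this prefactor is $\tfrac{L_t/L_\ell}{1+\bar\beta}-1<0$ for \emph{every} $t\ge2$, not merely for those with $\beta_t>\bar\beta$ (which was the case handled inside the lemma's proof). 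Hence the whole second summation is absorbed into the negative Bregman terms and $\ell(\xb_\tau)-\ell(\xs)\le\norm{\nabla\ell(\x_1)}^2/(\lambda(1+\bar\beta)^{\tau-1})$. Finally $\bar\beta=1/(2\sqrt\kappa)$ gives $(1+\bar\beta)^{\tau-1}\ge\exp\!\big(\tfrac{\tau-1}{1+2\sqrt\kappa}\big)$, and $\tau=T-\O(1)$ produces $\ell(\xb_\tau)-\ell(\xs)\le\O\!\big(\tfrac{\norm{\nabla\ell(\x_1)}^2}{\lambda}\exp(-T/(1+2\sqrt\kappa))\big)$.

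The step I expect to be the main obstacle is the query accounting in part~\rom{2}: one must carefully justify monotonicity of $\beta_t$, the uniform lower bound $\beta_t>1/(4\sqrt\kappa)$, and the bound on the number of halvings (best done by rephrasing the check as $2\beta_{t+1}^2\norm{\nabla\ell(\xb_t)-\nabla\ell(\xb_{t+1})}^2\le\lambda\D_\ell(\xb_t,\xb_{t+1})$, which gracefully covers the degenerate case $\D_\ell(\xb_t,\xb_{t+1})=0$ in which the corresponding regret contribution is zero anyway), and then check that absorbing $T-\tau$ into the exponent only changes the constant. By comparison, part~\rom{1} is routine once one notices that~\pref{lemma:str-offline} cannot be invoked as a black box and its cancellation inequality must simply be re-derived in the now-universally-valid regime.
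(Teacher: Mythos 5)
Your proposal is correct and follows essentially the same route as the paper, but you correctly identify and patch a small imprecision in the paper's treatment of part~\rom{1}. The paper's proof of case~\rom{1} asserts ``$t_0 = \tau+1$ by definition,'' but with $\beta_1 = \bar{\beta} = \sqrt{\lambda/(4L_\ell)}$ the literal definition of $t_0$ in \pref{lemma:str-offline} (the first index with $\beta_{t_0}=\bar{\beta}$) gives $t_0=1$, so the lemma as stated does not make the second summation vanish. Your fix --- re-opening the cancellation identity $\bigl(\tfrac{4L_t\beta_t^2}{\lambda(1+\beta_t)}-1\bigr)\alpha_{1:t-1}\D_\ell(\xb_{t-1},\xb_t)\le 0$ and checking directly that it holds for every $t\ge 2$ since $\beta_t=\bar{\beta}\le\sqrt{\lambda/(4L_t)}$, rather than only for $t$ with $\beta_t>\bar{\beta}$ --- is exactly the right way to close this gap and yields the same bound. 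Part~\rom{2} matches the paper step for step: you argue the second summation vanishes because $\bar{\beta}^2=0$ whereas the paper argues $t_0=\tau+1$, both of which are equivalent here; the lower bound $\beta_t>1/(4\sqrt{\kappa})$, the query accounting via the number of halvings $\lfloor\log_2(4\sqrt{\kappa})\rfloor$, the conversion of the product into an exponential via $\ln(1+x)\ge x/(1+x)$, and the absorption of $\log_2(4\sqrt{\kappa})/(1+4\sqrt{\kappa})=\O(1)$ into the constant are all the same as in the paper's argument. In short: same approach, with a welcome extra degree of care on case~\rom{1}.
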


Interestingly, in the first case of~\pref{cor:offline-str}, where $L_\ell$ is known, our convergence rate matches~\citet[Theorem 1.1]{ICLR'25:Wei}. Moreover, their ``over-relaxation'' update form coincides with the \emph{one-step} update variant of our optimistic OGD online algorithm.

\begin{proof}
\textbf{The first case.}~~We are given the smoothness parameter $L_\ell$. By~\pref{lemma:str-offline}, since $\beta_t\equiv \sqrt{\lambda/(4L_\ell)}\le \sqrt{\lambda/(4L_t)}$ for all $t\ge 2$, we have $t_0 = \tau+1$ by definition, and $\tau=T$, therefore
\begin{equation*}
    \ell(\xb_T) - \ell(\xs) \le \frac{2\norm{\nabla \ell(\x_1)}^2}{\lambda \big(1+\sqrt{\lambda/(4L_\ell)}\big)^{T}} \le  \O\left(\frac{\norm{\nabla \ell(\x_1)}^2}{\lambda}\exp\left(\frac{-T}{1+2\sqrt{\kappa}}\right)\right),
\end{equation*}
where we use $\smash{(1+x^{-1})^{-T}=(1-1/(1+x))^T \le \exp(-T/(1+x))}$ for all $x>0$.

\textbf{The second case.}~~We know that $\ell$ is smooth but do not know the exact smoothness parameter $L_\ell$.
With $\bar{\beta}=0$, we have $\frac{1}{4\sqrt{\kappa}} \le \beta_t \le \sqrt{\frac{\lambda}{4L_t}}$ for all $2\le t\le \tau$. By~\pref{lemma:str-offline} with $t_0=\tau+1$,
\begin{align*}
    \ell(\overline{\x}_\tau) - \ell(\xs) \le \frac{\norm{\nabla \ell(\x_1)}^2}{\lambda \prod_{t=2}^{\tau}(1+\beta_s)} \le \frac{2\norm{\nabla \ell(\x_1)}^2}{\lambda(1+1/(4\sqrt{\kappa}))^\tau} \le \frac{2\norm{\nabla \ell(\x_1)}^2}{\lambda}\exp\sbr{\frac{-\tau}{1+4\sqrt{\kappa}}},
\end{align*}
where we use ${(1+x^{-1})^{-\tau}=(1-1/(1+x))^\tau \le \exp(-\tau/(1+x))}$ for all $x>0$. Moreover, the total gradient queries number $T\le \tau + \lfloor \log_2(4\sqrt{\kappa}) \rfloor$, substituting into the above inequality,
\begin{align*}
    \ell(\overline{\x}_\tau) - \ell(\xs) &\le \frac{2\norm{\nabla \ell(\x_1)}^2}{\lambda}\exp\sbr{\frac{-T}{1+4\sqrt{\kappa}}} \exp\sbr{ \frac{\log_2(4\sqrt{\kappa})}{1+4\sqrt{\kappa}} } \\
    &< \frac{3\norm{\nabla \ell(\x_1)}^2}{\lambda}\exp\sbr{\frac{-T}{1+4\sqrt{\kappa}}},
\end{align*}
where we use $\exp\sbr{\frac{\log_2 x}{1+x}}<1.5$ for all $x>0$. This case is proved.
\end{proof}

\subsection{Proof of Theorem~\ref{thm:offline-str-search} and Discussions}
\label{appendix:proof-offline-str-search}

\begin{proof}[Proof of~\pref{thm:offline-str-search}]

For any $\x\in\R^d$, we have $\ell(\x) - \ell(\xs) \le \norm{\nabla \ell(\x)}\norm{\x - \xs}\le \frac{1}{\lambda} \norm{\nabla \ell(\x)}^2$ because $\ell(\x)$ is $\lambda$-strongly convex, and $\nabla\ell(\xs)=\mathbf{0}$. Hence when $\kappa> T^2$, the convergence rate of $\frac{1}{\lambda} \norm{\nabla \ell(\x)}^2\exp(\frac{-T}{\sqrt{\kappa}})\ge \frac{1}{\lambda e} \norm{\nabla \ell(\x)}^2$ becomes vacuous. Therefore, without loss of generality, we focus on $\kappa < T^2$.

Moreover, by calculating the curvature estimate $\smash{\hat{\lambda}=\norm{\nabla \ell(\a) - \nabla \ell(\b)}/{\norm{\a - \b}}}$ with any $\a,\b\in\R^d, \a\ne \b$, we have $\lambda\le \hat{\lambda}\le L_\ell$. Combining with $\kappa < T^2$ implies that $\lambda\in[\hat{\lambda}/T^2, \hat{\lambda}]$.

Denoting by $M=\lceil 2\log_2 T\rceil$, and $\lambda_i=2^{-i}\cdot \hat{\lambda}$ for $i\in[M]$, there exists $\is\in[M]$ that $\lambda_{\is}\le \lambda \le 2\lambda_{\is}$. Then $\ell(\cdot)$ is also $\lambda_{\is}$-strongly convex with condition number being $2\kappa$. Substituting $T_i = \frac{T}{M}$ into~\pref{cor:offline-str}, case \emph{(ii)}, we have
\begin{align*}
    \ell(\overline{\x}^{\is}) - \ell(\xs) \le \O\sbr{\frac{\norm{\nabla \ell(\x_1)}^2}{\lambda}\exp\sbr{\frac{-T}{(1+4\sqrt{2\kappa})\lceil 2\log_2 T\rceil}}}.
\end{align*}
The proof is finished.
\end{proof}

\paragraph{Comparison with~\citet{lan2023optimal}}
We compare our result in~\pref{thm:offline-str-search} with the sample complexity bound for optimizing the gradient norm established in Theorem 5.1 of~\citet{lan2023optimal}, that is, with $C_1=\sqrt{2}(3+16\sqrt{2c_{\A}}),c_{\A}=4$ as they provided, $T\le (4+8\sqrt{5}C_1)\sqrt{\kappa}\log_2(\norm{\nabla \ell(\x_1)}/\epsilon) + \O(1)$.

First, we reformulate their result as follows:
\begin{enumerate}
    \item[\rom{1}] After translating their result into the convergence rate of the gradient norm, it turns out to be ${\O\big(\exp( \frac{-T \cdot (\ln 2)}{(4+8\sqrt{5}C_1)\sqrt{\kappa}} )\big)}$. Substituting the constants implies:
    \begin{equation*}
        \norm{\nabla \ell(\x_T)} \le \O\sbr{ \exp\sbr{ \frac{-T}{1766 \sqrt{\kappa}} }}.
    \end{equation*}
    \item[\rom{2}] Applying $\ell(\x_T) - \ell(\xs) \le \frac{1}{\lambda}\norm{\nabla \ell(\x_T)}^2$, we obtain a sub-optimality bound given by
    \begin{equation}
        \label{eq:lan2023optimal}
        \ell(\x_T) - \ell(\xs) \le \O\sbr{ \exp\sbr{\frac{-T}{882\sqrt{\kappa}} }}.
    \end{equation}
\end{enumerate}
Then we consider when our rate in~\pref{thm:offline-str-search} is better than~\pref{eq:lan2023optimal}.
Solving the following condition:
\begin{equation*}
    (1+4\sqrt{2\kappa})\lceil 2\log_2 T\rceil\le (1+4\sqrt{2})\lceil 2\log_2 T\rceil\sqrt{\kappa}\le 882\sqrt{\kappa},
\end{equation*}
implies that $T\le 8.7\times 10^{19}$.

%!TEX root = ../arXiv_version.tex
\section{Supporting Lemmas}
\label{appendix:supporting-lemmas}
In this section, we provide supporting lemmas for this paper.

\subsection{Lemmas for Optimistic OGD Algorithms}
\label{appendix:lemmas-for-OOGD}
In this part, we provide useful lemmas for optimistic OGD and its one-step variant.

\begin{myLemma}[Proposition 7 of~\citet{COLT'12:VT}]
    \label{lemma:stability-lemma}
    Consider the following two updates: (i)~$\x = \argmin_{\x \in \X} \left\{ \inner{\g}{\x} + \D_{\psi}(\x, \c) \right\}$, and (ii)~$\x^\prime = \argmin_{\x \in \X} \left\{ \inner{\g^\prime}{\x} + \D_{\psi}(\x, \c) \right\}$, where the regularizer $\psi: \X \to \R$ is $\lambda$-strongly convex function with respect to $\norm{\cdot}$, we have $\lambda \norm{\x - \x^\prime} \le \norm{\g - \g^\prime}_{*}$.
\end{myLemma}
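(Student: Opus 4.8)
The plan is to derive the inequality from the first-order optimality conditions of the two proximal updates, combined with strong convexity of the regularizer. First I would record the variational inequality characterizing each minimizer. Since $\x$ minimizes $\inner{\g}{\cdot} + \D_{\psi}(\cdot, \c)$ over the convex set $\X$ and $\D_{\psi}(\cdot,\c)$ has gradient $\nabla\psi(\cdot) - \nabla\psi(\c)$, the optimality condition reads $\inner{\g + \nabla\psi(\x) - \nabla\psi(\c)}{\u - \x} \ge 0$ for all $\u \in \X$; analogously, $\inner{\g^\prime + \nabla\psi(\x^\prime) - \nabla\psi(\c)}{\u - \x^\prime} \ge 0$ for all $\u \in \X$.

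Next I would instantiate $\u = \x^\prime$ in the first inequality and $\u = \x$ in the second and add them. The anchor terms $\nabla\psi(\c)$ cancel, and after rearranging one obtains $\inner{\nabla\psi(\x) - \nabla\psi(\x^\prime)}{\x - \x^\prime} \le \inner{\g - \g^\prime}{\x^\prime - \x}$. Then I would bound the two sides separately: $\lambda$-strong convexity of $\psi$ gives $\inner{\nabla\psi(\x) - \nabla\psi(\x^\prime)}{\x - \x^\prime} \ge \lambda \norm{\x - \x^\prime}^2$ on the left, while Cauchy--Schwarz / the dual-norm inequality gives $\inner{\g - \g^\prime}{\x^\prime - \x} \le \norm{\g - \g^\prime}_{*} \norm{\x - \x^\prime}$ on the right. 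Combining yields $\lambda \norm{\x - \x^\prime}^2 \le \norm{\g - \g^\prime}_{*} \norm{\x - \x^\prime}$; dividing by $\norm{\x - \x^\prime}$ when it is nonzero (the claim being trivial when $\x = \x^\prime$) gives the stated bound $\lambda \norm{\x - \x^\prime} \le \norm{\g - \g^\prime}_{*}$.

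The main obstacle is essentially bookkeeping rather than a genuine difficulty: one must ensure the optimality conditions are legitimate, i.e.\ that $\psi$ is differentiable on the relevant region so that the subgradient of the objective is exactly $\g + \nabla\psi(\cdot) - \nabla\psi(\c)$, and that the minimizers exist and are unique (guaranteed by strong convexity of the objective on the closed convex domain $\X$). If $\psi$ is only subdifferentiable, the identical argument goes through verbatim with a subgradient selection in place of $\nabla\psi$, invoking monotonicity of the subdifferential of a $\lambda$-strongly convex function in place of the gradient estimate; this is the only point where care is needed.
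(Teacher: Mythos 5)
The paper does not prove this lemma itself; it cites it verbatim as Proposition~7 of \citet{COLT'12:VT}. Your argument is the standard one that establishes it and is essentially the same as the proof in that reference: pair the two variational inequalities at each other's optimizers, cancel the common anchor term $\nabla\psi(\c)$, lower-bound the resulting $\inner{\nabla\psi(\x)-\nabla\psi(\x')}{\x-\x'}$ via strong monotonicity, and upper-bound the other side with the generalized Cauchy--Schwarz inequality before dividing through by $\norm{\x-\x'}$. All steps check out, including the handling of the degenerate case $\x=\x'$ and the remark that the argument generalizes to subgradient selections under strong monotonicity of $\partial\psi$.
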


\begin{myLemma}[Bregman proximal inequality, Lemma~3.2 of~\citet{OPT'93:Bregman}]
    \label{lemma:bregman-proximal}
    Consider the following update: $\x=\argmin_{\x\in\X}\left\{ \inner{\g}{\x} + \D_{\psi}(\x, \c) \right\}$,where the regularizer $\psi: \X \to \R$ is convex function, then for all $\u\in\X$, we have $\inner{\g}{\x - \u} \le \D_\psi(\u, \c) - \D_\psi(\u, \x) - \D_\psi(\x, \c)$.
\end{myLemma}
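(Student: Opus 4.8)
The plan is to derive the inequality from the first-order optimality condition for the constrained convex program that defines $\x$, and then convert the resulting inner-product bound into Bregman form by means of the classical three-point identity for Bregman divergences.

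First I would observe that $\x$ is the minimizer over the convex set $\X$ of the convex function $h(\z)\triangleq \inner{\g}{\z}+\D_{\psi}(\z,\c)$. Expanding the Bregman term as $\D_{\psi}(\z,\c)=\psi(\z)-\psi(\c)-\inner{\nabla\psi(\c)}{\z-\c}$ gives $\nabla_{\z}\D_{\psi}(\z,\c)=\nabla\psi(\z)-\nabla\psi(\c)$, hence $\nabla h(\x)=\g+\nabla\psi(\x)-\nabla\psi(\c)$. Since $h$ is convex and $\X$ is convex, the necessary-and-sufficient optimality condition is $\inner{\nabla h(\x)}{\u-\x}\ge 0$ for every $\u\in\X$, that is
\begin{equation*}
    \inner{\g+\nabla\psi(\x)-\nabla\psi(\c)}{\u-\x}\ge 0,
\end{equation*}
which rearranges into $\inner{\g}{\x-\u}\le \inner{\nabla\psi(\x)-\nabla\psi(\c)}{\u-\x}$.

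Next I would invoke the three-point identity for Bregman divergences: writing out $\D_{\psi}(\u,\c)$, $\D_{\psi}(\u,\x)$, and $\D_{\psi}(\x,\c)$ from the definition and cancelling the $\psi(\u)$, $\psi(\x)$, $\psi(\c)$ terms yields
\begin{equation*}
    \D_{\psi}(\u,\c)-\D_{\psi}(\u,\x)-\D_{\psi}(\x,\c)=\inner{\nabla\psi(\x)-\nabla\psi(\c)}{\u-\x}.
\end{equation*}
Substituting this into the bound obtained in the previous step gives precisely $\inner{\g}{\x-\u}\le \D_{\psi}(\u,\c)-\D_{\psi}(\u,\x)-\D_{\psi}(\x,\c)$, which is the claim.

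I do not expect a genuine obstacle here, since the statement is the combination of a standard constrained-optimality argument with the classical three-point identity; it is included as a supporting lemma, not a source of difficulty. The only points that merit a word of care are that $\psi$ should be differentiable on the relevant part of $\X$ so that $\nabla\psi$ is meaningful (the notation in the statement already presumes this; with only convexity one fixes a subgradient defining the Bregman divergence and the same argument goes through), and that one must use the \emph{constrained} optimality inequality $\inner{\nabla h(\x)}{\u-\x}\ge 0$ rather than the unconstrained stationarity $\nabla h(\x)=\mathbf{0}$, because $\x$ need only minimize $h$ over $\X$; convexity of $h$ and of $\X$ is exactly what makes that inequality both necessary and sufficient.
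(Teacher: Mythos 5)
Your proof is correct and is the standard textbook argument; the paper does not prove this lemma but cites it directly from the Bregman reference, and the citation is to precisely this derivation via first-order optimality plus the three-point identity.
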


\begin{myLemma}[Theorem~1 of \citet{JMLR'24:Sword++}]
    \label{lemma:OOMD-raw}
    Under \pref{assum:bounded-domain}, Optimistic OGD specialized at~\pref{eq:OOGD}, that starts at $\xh_1\in\X$ and updates by
    \begin{equation*}
        \x_t = \Pi_{\X}\left[ \xh_t - \eta_t M_t \right], \quad \xh_{t+1} = \Pi_{\X}\left[ \xh_t - \eta_t \nabla f_t(\x_t) \right],
    \end{equation*}
    ensures that
    \begin{align}
        \sum_{t=1}^{T} \inner{\nabla f_t(\x_t)}{\x_t - \u_t}
        &\le \underbrace{\sum_{t=1}^{T} \inner{\nabla f_t(\x_t) - M_t}{\x_t - \xh_{t+1}}}_{\textsc{Term-A}} + \underbrace{\sum_{t=1}^{T}\frac{1}{2\eta_t}\left( \norm{\u_t - \xh_t}^2 - \norm{\u_t - \xh_{t+1}}^2 \right)}_{\textsc{Term-B}} \notag \\
        &\qquad - \underbrace{\sum_{t=1}^{T}\frac{1}{2\eta_t}\left( \norm{\x_t - \xh_{t+1}}^2 + \norm{\x_t - \xh_{t}}^2 \right)}_{\textsc{Term-C}},\label{eq:oomd-raw}
    \end{align}
    where $\u_1,\dots,\u_T\in\X$ are arbitrary comparators.
\end{myLemma}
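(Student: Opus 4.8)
The plan is to recognize Lemma~\ref{lemma:OOMD-raw} as the standard per-round analysis of optimistic online mirror descent specialized to the time-varying Euclidean regularizer, and to derive it from two applications of the Bregman proximal inequality (\pref{lemma:bregman-proximal}) at each round $t$, followed by a simple summation over $t$.

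First I would reformulate the two Euclidean projection steps of~\pref{eq:OOGD} as proximal updates. Taking the round-dependent regularizer $\psi_t(\x)\triangleq \frac{1}{2\eta_t}\norm{\x}^2$, whose induced Bregman divergence is $\D_{\psi_t}(\x,\y) = \frac{1}{2\eta_t}\norm{\x-\y}^2$, the elementary identity $\Pi_{\X}[\xh_t - \eta_t \g] = \argmin_{\x\in\X}\{\inner{\g}{\x} + \D_{\psi_t}(\x,\xh_t)\}$ (valid because in the Euclidean geometry the projected-gradient and proximal forms coincide, by completing the square) gives
\begin{equation*}
    \x_t = \argmin_{\x\in\X}\bbr{\inner{M_t}{\x} + \D_{\psi_t}(\x, \xh_t)},\qquad \xh_{t+1} = \argmin_{\x\in\X}\bbr{\inner{\nabla f_t(\x_t)}{\x} + \D_{\psi_t}(\x, \xh_t)}.
\end{equation*}

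Next I would apply~\pref{lemma:bregman-proximal} twice at round $t$: to the $\xh_{t+1}$-update with comparator $\u=\u_t$, obtaining $\inner{\nabla f_t(\x_t)}{\xh_{t+1} - \u_t} \le \D_{\psi_t}(\u_t, \xh_t) - \D_{\psi_t}(\u_t, \xh_{t+1}) - \D_{\psi_t}(\xh_{t+1}, \xh_t)$; and to the $\x_t$-update with comparator $\u=\xh_{t+1}$, obtaining $\inner{M_t}{\x_t - \xh_{t+1}} \le \D_{\psi_t}(\xh_{t+1}, \xh_t) - \D_{\psi_t}(\xh_{t+1}, \x_t) - \D_{\psi_t}(\x_t, \xh_t)$. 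I would then split the per-round linearized regret as
\begin{equation*}
    \inner{\nabla f_t(\x_t)}{\x_t - \u_t} = \inner{\nabla f_t(\x_t) - M_t}{\x_t - \xh_{t+1}} + \inner{M_t}{\x_t - \xh_{t+1}} + \inner{\nabla f_t(\x_t)}{\xh_{t+1} - \u_t},
\end{equation*}
substitute the two inequalities above, observe that the $\pm\D_{\psi_t}(\xh_{t+1}, \xh_t)$ terms cancel, and plug in $\D_{\psi_t}(\x,\y) = \frac{1}{2\eta_t}\norm{\x-\y}^2$ so that the leftover $\D_{\psi_t}(\xh_{t+1},\x_t) + \D_{\psi_t}(\x_t,\xh_t)$ becomes $\frac{1}{2\eta_t}(\norm{\x_t - \xh_{t+1}}^2 + \norm{\x_t - \xh_t}^2)$. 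This produces exactly the $t$-th summand of~\pref{eq:oomd-raw}; summing over $t=1,\dots,T$ finishes the proof.

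I do not expect a genuine obstacle here: this is the textbook optimistic-OMD argument and all ingredients (\pref{lemma:bregman-proximal}, the Euclidean Bregman identity) are already available. The only points requiring care are the proximal reformulation in the first step and the bookkeeping of Bregman terms — specifically identifying which divergences cancel and matching the residual $\D_{\psi_t}(\xh_{t+1},\x_t)+\D_{\psi_t}(\x_t,\xh_t)$ to \textsc{Term-C}, while noting that no telescoping across rounds is needed since \textsc{Term-B} is left as a raw sum. Note also that \pref{assum:bounded-domain} is not actually used in this inequality; it is merely inherited from the surrounding setting, and the bound holds for any closed convex $\X$.
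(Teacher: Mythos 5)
Your proof is correct, and it is the standard optimistic-OMD argument: recast both projection steps as proximal updates with regularizer $\psi_t(\x)=\frac{1}{2\eta_t}\norm{\x}^2$, apply \pref{lemma:bregman-proximal} to the $\xh_{t+1}$-update with comparator $\u_t$ and to the $\x_t$-update with comparator $\xh_{t+1}$, split the per-round linearized regret as you do, cancel $\pm\D_{\psi_t}(\xh_{t+1},\xh_t)$, and sum. The paper does not reprove \pref{lemma:OOMD-raw}; it cites it directly from Theorem~1 of \citet{JMLR'24:Sword++}, and your derivation matches the argument given there. Your observation that \pref{assum:bounded-domain} is not actually used in this inequality is also accurate — the bound holds for any closed convex $\X$, and the bounded-domain assumption only enters later (in \pref{lemma:OOMD-analysis}) when the raw sums are further simplified.
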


\begin{myLemma}
    \label{lemma:OOMD-analysis}
    Under \pref{assum:bounded-domain}, Optimistic OGD specialized at~\pref{eq:OOGD} with non-increasing step sizes $\eta_t$, ensures that
    \begin{equation}
        \sum_{t=1}^{T} \inner{\nabla f_t(\x_t)}{\x_t - \u_t}
        \le \sum_{t=1}^{T} \eta_{t+1}\norm{\nabla f_t(\x_t) - M_t}^2 + \frac{D^2+DP_T}{\eta_{T+1}} - \sum_{t=2}^{T} \frac{1}{8\eta_{t+1}}\norm{\x_t - \x_{t-1}}^2,
    \end{equation}
    where $P_T\triangleq \sum_{t=2}^{T}\norm{\u_t - \u_{t-1}}$ is the path length.
\end{myLemma}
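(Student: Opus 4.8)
The plan is to start from the one‑step regret decomposition of optimistic OGD, \pref{lemma:OOMD-raw}, applied with the arbitrary comparators $\u_1,\dots,\u_T$, which yields $\sum_t \inner{\nabla f_t(\x_t)}{\x_t - \u_t} \le \textsc{Term-A} + \textsc{Term-B} - \textsc{Term-C}$. I would then treat the three pieces separately: turn \textsc{Term-A} (the optimism error $\sum_t\inner{\nabla f_t(\x_t)-M_t}{\x_t-\xh_{t+1}}$) into the adaptivity sum plus a quadratic residual; bound \textsc{Term-B} (the telescoping comparator term) by $\O((D^2+DP_T)/\eta_{T+1})$; and spend the negative quadratics of \textsc{Term-C} both to absorb all residuals and to manufacture the stability term $-\sum_{t\ge 2}\tfrac{1}{8\eta_{t+1}}\norm{\x_t-\x_{t-1}}^2$.

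For \textsc{Term-A}, I would apply Cauchy--Schwarz followed by Young's inequality with weight $\eta_{t+1}$,
\begin{equation*}
\inner{\nabla f_t(\x_t)-M_t}{\x_t-\xh_{t+1}} \le \eta_{t+1}\norm{\nabla f_t(\x_t)-M_t}^2 + \tfrac{1}{4\eta_{t+1}}\norm{\x_t-\xh_{t+1}}^2 ,
\end{equation*}
so the first term is exactly the adaptivity sum in the statement and the residual $\tfrac{1}{4\eta_{t+1}}\norm{\x_t-\xh_{t+1}}^2$ is handed to \textsc{Term-C}. For \textsc{Term-B}$=\sum_t \tfrac{1}{2\eta_t}\big(\norm{\u_t-\xh_t}^2-\norm{\u_t-\xh_{t+1}}^2\big)$, I would shift the summation index so that $\norm{\u_t-\xh_{t+1}}^2$ is paired against $\norm{\u_{t-1}-\xh_t}^2$ at the next round, then use $\norm{\u_t-\xh_t}\le\norm{\u_{t-1}-\xh_t}+\norm{\u_t-\u_{t-1}}$ together with the diameter bound (\pref{assum:bounded-domain}) to peel off a path‑length contribution of order $DP_T/\eta_{T+1}$; the leftover collapses, by monotonicity of $1/\eta_t$, to $\tfrac{D^2}{2\eta_T}\le\tfrac{D^2}{2\eta_{T+1}}$.

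The crux is \textsc{Term-C}$=\sum_t \tfrac{1}{2\eta_t}\norm{\x_t-\xh_{t+1}}^2 + \sum_t\tfrac{1}{2\eta_t}\norm{\x_t-\xh_t}^2$. Using $\norm{\x_t-\x_{t-1}}^2\le 2\norm{\x_t-\xh_t}^2+2\norm{\xh_t-\x_{t-1}}^2$ and the identity $\xh_t=\xh_{(t-1)+1}$, the desired stability term at index $t$ is dominated by $\tfrac{1}{4\eta_{t+1}}\norm{\x_t-\xh_t}^2$, drawn from the second sum at index $t$, plus $\tfrac{1}{4\eta_{t+1}}\norm{\x_{t-1}-\xh_t}^2$, drawn from the first sum at index $t-1$; the \textsc{Term-A} residual also draws on the first sum. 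Since the step sizes are only non‑increasing (not slowly varying), the demand $\tfrac{1}{4\eta_{t+1}}$ placed on a quadratic supplied at rate $\tfrac{1}{2\eta_t}$ may exceed the supply; but the total overdraft, $\sum_t\big(\tfrac{1}{4\eta_{t+1}}-\tfrac{1}{2\eta_t}\big)_+\le \tfrac14\sum_t\big(\tfrac{1}{\eta_{t+1}}-\tfrac{1}{\eta_t}\big)\le\tfrac{1}{4\eta_{T+1}}$, is a telescoping quantity, and every squared displacement it multiplies is at most $(2D)^2$ by \pref{assum:bounded-domain}; hence the overdraft costs only $\O(D^2/\eta_{T+1})$, which merges into the \textsc{Term-B} bound. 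Collecting the pieces gives the claimed inequality, the numerical slack being hidden in the $\tfrac18$ constant.

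I expect the main obstacle to be exactly this apportioning step: one must allocate the two families of negative quadratics in \textsc{Term-C} across the \textsc{Term-A} residual and the two halves of each stability term, and verify that the unmatched surplus at every index telescopes under the mere monotonicity of $1/\eta_t$, so that — after multiplication by the bounded squared displacements — it only reproduces the $D^2/\eta_{T+1}$ order already present. The appearance of $\eta_{t+1}$ rather than $\eta_t$ in the adaptivity sum is precisely what forces this ``overdraw‑and‑telescope'' device: a cruder bound with $\eta_t$ would be strictly weaker and would not support the self‑confident step‑size tuning used afterwards in \pref{thm:static-con}.
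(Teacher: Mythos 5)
Your route is essentially the paper's: start from the raw optimistic-OGD decomposition (\pref{lemma:OOMD-raw}) into \textsc{Term-A}/\textsc{B}/\textsc{C}, absorb the optimism error of \textsc{Term-A} by Cauchy--Schwarz and Young's inequality with weight $\eta_{t+1}$, unfold \textsc{Term-B} by index-shifting to peel off a $DP_T/\eta_{T+1}$ term and collapse the rest to $\O(D^2/\eta_{T+1})$, and spend the negative quadratics of \textsc{Term-C} on the \textsc{Term-A} residual and the manufactured stability sum, paying for the index mismatch with telescoping plus \pref{assum:bounded-domain}. The paper presents the \textsc{Term-C} step in the reverse direction: it first carves $\sum_{t\ge 2}\frac{1}{4\eta_{t-1}}\bigl(\norm{\x_{t-1}-\xh_t}^2+\norm{\x_t-\xh_t}^2\bigr)\ge\sum_{t\ge2}\frac{1}{8\eta_{t-1}}\norm{\x_t-\x_{t-1}}^2$ at the \emph{lighter} weight $\eta_{t-1}$ and then telescopes the weight forward twice to $\eta_{t+1}$, paying $\O(D^2/\eta_{T+1})$ each time; you start from the \emph{heavier} target weight $\eta_{t+1}$ and argue the overdraft telescopes. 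These are two bookkeeping orders for the same calculation.

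One inaccuracy worth flagging in your overdraft ledger: you charge only a single demand $\tfrac{1}{4\eta_{t+1}}$ against each $\tfrac{1}{2\eta_t}$ supply, but the first half of \textsc{Term-C}, i.e.\ $\tfrac{1}{2\eta_t}\norm{\x_t-\xh_{t+1}}^2$, must fund \emph{two} withdrawals: the \textsc{Term-A} residual at round $t$ (weight $\tfrac{1}{4\eta_{t+1}}$) and the stability term at round $t+1$ (weight $\tfrac{1}{4\eta_{t+2}}$, since $\norm{\x_{t+1}-\x_t}^2\le 2\norm{\x_{t+1}-\xh_{t+1}}^2+2\norm{\x_t-\xh_{t+1}}^2$). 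The per-round overdraft is thus $\tfrac{1}{4\eta_{t+1}}+\tfrac{1}{4\eta_{t+2}}-\tfrac{1}{2\eta_t}\ge 0$, not $\bigl(\tfrac{1}{4\eta_{t+1}}-\tfrac{1}{2\eta_t}\bigr)_+$. It still telescopes (both increments $\tfrac{1}{4\eta_{s+1}}-\tfrac{1}{4\eta_s}$ and $\tfrac{1}{4\eta_{s+2}}-\tfrac{1}{4\eta_s}$ are nonnegative and sum to $\O(1/\eta_{T+1})$), so the argument goes through, but the corrected account gives a $D^2$ coefficient somewhat larger than the $1$ appearing in the stated bound $\tfrac{D^2+DP_T}{\eta_{T+1}}$ — so strictly you obtain the lemma with a worse numerical constant. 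This is immaterial to how the lemma is used downstream, but you should not attribute the slack to the $\tfrac18$ factor; it sits in the $D^2/\eta_{T+1}$ term.
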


\begin{myLemma}[One-step Variant of Optimistic OGD~\citep{TCS'20:modular-composite}]
    \label{lemma:one-step-oomd}
    Under \pref{assum:bounded-domain}, the one-step variant of optimistic OGD that starts at $\x_1\in\X$ and updates by
    \begin{equation}
        \label{eq:one-step-ogd-update}
        \x_{t+1} = \Pi_{\X}\left[ \x_t - \eta_t(\nabla f_t(\x_t) - M_t + M_{t+1}) \right],
    \end{equation}
    ensures that, for all $\u\in\X$:
    \begin{align*}
        \sumT  \inner{\nabla f_t(\x_t)}{\x_t - \u} &\le \sumT \sbr{\inner{\nabla f_t(\x_t) - M_t}{\x_t - \x_{t+1}} - \frac{1}{2\eta_t} \norm{\x_t - \x_{t+1}}^2} \\
        &\qquad + \sumTT \sbr{\frac{1}{2\eta_t} - \frac{1}{2\eta_{t-1}}} \norm{\x_t - \u}^2 + \frac{1}{2\eta_1}\norm{\x_1 - \u}^2.
    \end{align*}
\end{myLemma}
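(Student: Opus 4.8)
The plan is to read the one-step update~\pref{eq:one-step-ogd-update} as a single mirror-descent step and then run the standard optimistic telescoping. Since Euclidean projection is the prox-map of the indicator of $\X$, the update~\pref{eq:one-step-ogd-update} is exactly $\x_{t+1} = \argmin_{\x\in\X}\bbr{\inner{\nabla f_t(\x_t) - M_t + M_{t+1}}{\x} + \D_{\psi_t}(\x,\x_t)}$ with $\psi_t(\x)\triangleq \frac{1}{2\eta_t}\norm{\x}^2$, so that $\D_{\psi_t}(\x,\y) = \frac{1}{2\eta_t}\norm{\x-\y}^2$. (Assumption~\ref{assum:bounded-domain} is inherited from the general setup but is not actually used; the argument is purely algebraic.) Applying the Bregman proximal inequality~(\pref{lemma:bregman-proximal}) with regularizer $\psi_t$, step gradient $\nabla f_t(\x_t) - M_t + M_{t+1}$, center $\x_t$, and comparator $\u$ gives, for every $t$,
\[
\inner{\nabla f_t(\x_t) - M_t + M_{t+1}}{\x_{t+1} - \u} \le \frac{1}{2\eta_t}\norm{\u - \x_t}^2 - \frac{1}{2\eta_t}\norm{\u - \x_{t+1}}^2 - \frac{1}{2\eta_t}\norm{\x_t - \x_{t+1}}^2 .
\]

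The second step is the bookkeeping that converts the left-hand side into the quantity we want to control. I would use the identity
\begin{align*}
    \inner{\nabla f_t(\x_t)}{\x_t - \u} &= \inner{\nabla f_t(\x_t) - M_t}{\x_t - \x_{t+1}} + \inner{\nabla f_t(\x_t) - M_t + M_{t+1}}{\x_{t+1} - \u} \\
    &\quad + \sbr{\inner{M_t}{\x_t - \u} - \inner{M_{t+1}}{\x_{t+1} - \u}} ,
\end{align*}
which is verified by expanding and using $\inner{M_t}{\x_t - \x_{t+1}} + \inner{M_t - M_{t+1}}{\x_{t+1} - \u} = \inner{M_t}{\x_t - \u} - \inner{M_{t+1}}{\x_{t+1} - \u}$. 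Summing over $t=1,\dots,T$, the bracketed part telescopes to $\inner{M_1}{\x_1 - \u} - \inner{M_{T+1}}{\x_{T+1} - \u}$; under the standard optimistic convention $M_1=\mathbf{0}$, and since $\x_{T+1}$ is never played so one may take $M_{T+1}=\mathbf{0}$ in the analysis, both boundary terms vanish.

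It then remains to substitute the proximal inequality above for each $\inner{\nabla f_t(\x_t) - M_t + M_{t+1}}{\x_{t+1} - \u}$ and to Abel-sum the telescoping part,
\[
\sum_{t=1}^{T} \frac{1}{2\eta_t}\sbr{\norm{\u - \x_t}^2 - \norm{\u - \x_{t+1}}^2} = \frac{1}{2\eta_1}\norm{\x_1 - \u}^2 + \sum_{t=2}^{T}\sbr{\frac{1}{2\eta_t} - \frac{1}{2\eta_{t-1}}}\norm{\x_t - \u}^2 - \frac{1}{2\eta_T}\norm{\u - \x_{T+1}}^2 ,
\]
after which discarding the last, nonpositive, term yields exactly the claimed bound (no monotonicity of $\{\eta_t\}$ is needed --- the stated coefficients $\frac{1}{2\eta_t} - \frac{1}{2\eta_{t-1}}$ are just what the telescoping produces). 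I expect the only delicate point to be the algebra in the second step: the decomposition must be arranged so that the ``stability'' residual comes out with the correct sign as $\inner{\nabla f_t(\x_t) - M_t}{\x_t - \x_{t+1}}$ and every $M$-dependent cross-term collapses into the clean telescoping difference, leaving no leftover inner products; once that is set up, invoking \pref{lemma:bregman-proximal}, performing the Abel summation, and dropping the nonpositive end term are all routine.
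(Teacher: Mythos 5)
Your proof is correct and follows essentially the same route as the paper's: invoke \pref{lemma:bregman-proximal} for the one-step prox update with $\psi_t(\x)=\tfrac{1}{2\eta_t}\norm{\x}^2$, decompose $\inner{\nabla f_t(\x_t)}{\x_t-\u}$ so that the $M$-cross-terms telescope, set $M_1=M_{T+1}=\mathbf{0}$, Abel-sum the Bregman differences, and drop the nonpositive end term. Your explicit statement of the per-round decomposition identity and your remark that~\pref{assum:bounded-domain} and the monotonicity of $\{\eta_t\}$ are not actually needed are accurate and slightly cleaner bookkeeping than the paper's terser presentation, but the underlying argument is identical.
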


\begin{proof}[Proof of~\pref{lemma:OOMD-analysis}]
    By~\pref{lemma:OOMD-raw}, we consider each term:
    \begin{align*}
        \textsc{Term-A} 
        &\le \sum_{t=1}^{T} \eta_{t+1}\norm{\nabla f_t(\x_t) - M_t}_*^2 + \sum_{t=1}^{T} \left( \frac{1}{4\eta_{t+1}} - \frac{1}{4\eta_t}\right)\norm{\x_t - \xh_{t+1}}^2 + \sum_{t=1}^{T} \frac{1}{4\eta_{t}}\norm{\x_t - \xh_{t+1}}^2 \\
        &\le \sum_{t=1}^{T} \eta_{t+1}\norm{\nabla f_t(\x_t) - M_t}_*^2 + \frac{D^2}{4\eta_{T+1}} + \sum_{t=1}^{T} \frac{1}{4\eta_{t}}\norm{\x_t - \xh_{t+1}}^2, \\
        \textsc{Term-B}
        &\le \frac{D^2}{2\eta_1} + \sum_{t=2}^{T} \left( \frac{1}{2\eta_t}\norm{\u_t - \xh_t}^2 - \frac{1}{2\eta_t}\norm{\u_{t-1} - \xh_t}^2 + \frac{1}{2\eta_t}\norm{\u_{t-1} - \xh_t}^2 - \frac{1}{2\eta_{t-1}}\norm{\u_{t-1} - \xh_t}^2 \right) \\
        &\le \frac{D^2}{2\eta_1} + \sum_{t=2}^{T} \frac{1}{2\eta_t} \left( \norm{\u_t - \xh_t}^2 - \norm{\u_{t-1} - \xh_t}^2 \right) + \sum_{t=2}^{T} \left( \frac{1}{2\eta_t} - \frac{1}{2\eta_{t-1}} \right)D^2 \\
        &\le \frac{D^2}{2\eta_T} + \sum_{t=2}^{T} \frac{1}{2\eta_t} \norm{\u_t - \u_{t-1}}\cdot\norm{\u_t - \xh_t + \u_{t-1} - \xh_t} \\
        &\le \frac{D^2}{2\eta_{T+1}} + \frac{DP_T}{\eta_{T+1}}, \\
        \textsc{Term-C} &\ge \sum_{t=1}^{T}\frac{1}{4\eta_t}\norm{\x_t - \xh_{t+1}}^2 + \sum_{t=2}^{T}\frac{1}{4\eta_{t-1}}\left( \norm{\x_{t-1} - \xh_{t}}^2 + \norm{\x_t - \xh_{t}}^2 \right) \\
        &\ge \sum_{t=1}^{T}\frac{1}{4\eta_t}\norm{\x_t - \xh_{t+1}}^2 + \sum_{t=2}^{T} \left(\frac{1}{8\eta_{t-1}} - \frac{1}{8\eta_t}\right)\norm{\x_t - \x_{t-1}}^2 + \sum_{t=2}^{T} \frac{1}{8\eta_t}\norm{\x_t - \x_{t-1}}^2 \\
        &\ge \sum_{t=1}^{T}\frac{1}{4\eta_t}\norm{\x_t - \xh_{t+1}}^2 - \frac{D^2}{8\eta_T} + \sum_{t=2}^{T} \left(\frac{1}{8\eta_t} - \frac{1}{8\eta_{t+1}}\right) \norm{\x_t - \x_{t-1}}^2 + \sum_{t=2}^{T} \frac{1}{8\eta_{t+1}} \norm{\x_t - \x_{t-1}}^2 \\
        &\ge \sum_{t=1}^{T}\frac{1}{4\eta_t}\norm{\x_t - \xh_{t+1}}^2 - \frac{D^2}{4\eta_{T+1}} + \sum_{t=2}^{T} \frac{1}{8\eta_{t+1}}\norm{\x_t - \x_{t-1}}^2,
    \end{align*}
    where we apply~\pref{assum:bounded-domain} and the condition that $\eta_t$ is non-increasing. Combining $\textsc{Term-A}$, $\textsc{Term-B}$ and $\textsc{Term-C}$ finishes the proof.
\end{proof}

\begin{proof}[Proof of~\pref{lemma:one-step-oomd}]
By~\pref{lemma:bregman-proximal} with $\psi(\x)=\frac{1}{2\eta}\norm{\x}^2$, the update~\pref{eq:one-step-ogd-update} implies for all $\u\in\X$:
\begin{equation*}
    \inner{\nabla f_t(\x_t) - M_t + M_{t+1}}{\x_{t+1} - \u} \le \frac{1}{2\eta_t}\left( \norm{\u - \x_t}^2 - \norm{\u - \x_{t+1}}^2 - \norm{\x_t - \x_{t+1}}^2 \right).
\end{equation*}
Then by rearranging and taking summation from $t=1$ to $T$, we arrive at:
\begin{align*}
    &\sum_{t=1}^{T}\inner{\nabla f_t(\x_t)}{\x_t - \u} \\
    \le {} & \sum_{t=1}^{T} \inner{\nabla f_t(\x_t) - M_t}{\x_t - \x_{t+1}} + \inner{M_1}{\x_1} - \inner{M_{T+1}}{\x_{T+1}} + \sum_{t=1}^{T}\inner{M_{t+1}-M_t}{\u} \\
    &\qquad + \sum_{t=2}^{T} \sbr{\frac{1}{2\eta_t} - \frac{1}{2\eta_{t-1}}} \norm{\x_t - \u}^2 + \frac{1}{2\eta_1}\norm{\x_1 - \u}^2 - \sum_{t=1}^{T} \frac{1}{2\eta_t}\norm{\x_t - \x_{t+1}}^2 \\
    \le {} & \sumT \sbr{\inner{\nabla f_t(\x_t) - M_t}{\x_t - \x_{t+1}} - \frac{1}{2\eta_t} \norm{\x_t - \x_{t+1}}^2} \\
    &\qquad + \sumTT \sbr{\frac{1}{2\eta_t} - \frac{1}{2\eta_{t-1}}} \norm{\x_t - \u}^2 + \frac{1}{2\eta_1}\norm{\x_1 - \u}^2,
\end{align*}
where we define $M_1\triangleq \mathbf{0}$ and $M_{T+1}\triangleq \mathbf{0}$.
\end{proof}

\subsection{Lemmas for Step Size Tuning Analysis}
\label{appendix:useful-lemmas-for-step-size}
This part provides some useful lemmas, particularly for step size tuning analysis.

\begin{myLemma}[\citet{COLT'10:AdaptiveFTRL}]
    \label{lemma:self-confident-tuning}
    Suppose non-negative sequence $\{a_t\}_{t=1}^T$ and constant $\delta>0$, then we have
    \begin{equation}
        \label{eq:self-confident-tuning}
        \sum_{t=1}^{T}\frac{a_t}{\sqrt{\delta+\sum_{s=1}^{t}a_s}} \le 2\sqrt{\delta+\sum_{t=1}^{T}a_t}.
    \end{equation}
\end{myLemma}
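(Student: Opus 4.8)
The plan is to bound the sum term by term and then telescope. First I would set $S_t \triangleq \delta + \sum_{s=1}^{t} a_s$ for $t \ge 0$ (so $S_0 = \delta$), noting that $\{S_t\}_{t \ge 0}$ is non-decreasing and, crucially since $\delta > 0$ and $a_t \ge 0$, strictly positive. The $t$-th summand on the left-hand side is then $a_t/\sqrt{S_t} = (S_t - S_{t-1})/\sqrt{S_t}$.

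The one genuinely non-trivial step is the pointwise inequality $(S_t - S_{t-1})/\sqrt{S_t} \le 2\bigl(\sqrt{S_t} - \sqrt{S_{t-1}}\bigr)$. I would prove it by rationalizing the right-hand side: $2(\sqrt{S_t} - \sqrt{S_{t-1}}) = 2(S_t - S_{t-1})/(\sqrt{S_t} + \sqrt{S_{t-1}})$, so (dividing through by $S_t - S_{t-1} \ge 0$, the case $S_t = S_{t-1}$ being trivial) the inequality is equivalent to $\sqrt{S_t} + \sqrt{S_{t-1}} \le 2\sqrt{S_t}$, which holds because $S_{t-1} \le S_t$. Strict positivity of every $S_t$ ensures no division-by-zero arises even when some $a_t$ vanish.

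Summing this pointwise bound over $t = 1, \dots, T$ produces a telescoping series on the right: $\sum_{t=1}^{T} a_t/\sqrt{S_t} \le 2\sum_{t=1}^{T}(\sqrt{S_t} - \sqrt{S_{t-1}}) = 2\sqrt{S_T} - 2\sqrt{S_0} = 2\sqrt{\delta + \sum_{t=1}^{T} a_t} - 2\sqrt{\delta} \le 2\sqrt{\delta + \sum_{t=1}^{T} a_t}$, where the last step simply drops the non-positive term $-2\sqrt{\delta}$. This completes the argument. An equivalent route, if preferred, is induction on $T$: the inductive step reduces to verifying $2\sqrt{S_T - a_T} + a_T/\sqrt{S_T} \le 2\sqrt{S_T}$, which is exactly the elementary inequality above, but the telescoping presentation is cleaner. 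There is essentially no obstacle here — the only idea is the identity $\sqrt{b} - \sqrt{c} = (b - c)/(\sqrt{b} + \sqrt{c})$, and the only thing to watch is that $\delta > 0$ keeps all denominators away from zero.
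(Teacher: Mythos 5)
Your proof is correct and complete: the key pointwise bound $a_t/\sqrt{S_t} \le 2(\sqrt{S_t}-\sqrt{S_{t-1}})$ is established cleanly via rationalization, the telescoping sum is handled properly, and the positivity check via $\delta>0$ is the right thing to note. The paper itself does not prove this lemma — it cites it from the referenced source — and the argument you give is precisely the standard one used there, so there is nothing to reconcile.
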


\begin{myLemma}[Virtual Clipping Lemma]
    \label{lemma:virtual-clipping}
    Suppose non-negative sequence $\{a_t\}_{t=1}^T$ and constant $A>0$, and define $\eta_t=\frac{B}{\sqrt{\delta +\sum_{s=1}^{t}a_s}}$ with some $B>0$ and $\delta>0$, then we have:
    \begin{equation}
        \sum_{t=1}^{T} \sbr{ \eta_t - \frac{A}{\eta_t} } a_t \le 2 A^{-\frac{1}{2}} B^2.
    \end{equation}
\end{myLemma}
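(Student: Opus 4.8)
The plan is to exploit two facts: the step sizes $\eta_t = B/\sqrt{\delta + \sum_{s=1}^{t} a_s}$ are non-increasing in $t$ (the denominator is non-decreasing since $a_t \ge 0$), and the self-confident tuning bound of \pref{lemma:self-confident-tuning}. The intuition is that although the ``clipping level'' $\sqrt{A}$ is unknown to the algorithm, the monotone schedule $\{\eta_t\}$ is guaranteed to drop below $\sqrt{A}$ after some round, which is precisely the \emph{virtual clipping} effect alluded to in~\pref{subsec:convex}: once $\eta_t < \sqrt{A}$, the penalty $-A/\eta_t$ dominates $\eta_t$, so the algorithm behaves as if it had explicitly clipped the step size, without ever needing to know $A$.

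First I would introduce the crossover index $\tau \triangleq \max\{t \in [T] : \eta_t \ge \sqrt{A}\}$, with the convention $\tau = 0$ if no such index exists. For every $t > \tau$ we have $\eta_t < \sqrt{A}$, hence $\eta_t - A/\eta_t < 0$; since $a_t \ge 0$, the summands $(\eta_t - A/\eta_t)a_t$ are non-positive and can be discarded. For $t \le \tau$ I would drop the non-positive term $-A a_t/\eta_t$, which yields
\[
\sum_{t=1}^{T}\sbr{\eta_t - \frac{A}{\eta_t}}a_t \;\le\; \sum_{t=1}^{\tau}\eta_t a_t \;=\; B\sum_{t=1}^{\tau}\frac{a_t}{\sqrt{\delta + \sum_{s=1}^{t}a_s}}.
\]

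Next I would apply \pref{lemma:self-confident-tuning} to the truncated sequence $(a_1,\dots,a_\tau)$ to get $\sum_{t=1}^{\tau}\eta_t a_t \le 2B\sqrt{\delta + \sum_{s=1}^{\tau}a_s}$, and then invoke the defining inequality $\eta_\tau \ge \sqrt{A}$, equivalently $\sqrt{\delta + \sum_{s=1}^{\tau}a_s} \le B/\sqrt{A}$, to conclude $\sum_{t=1}^{\tau}\eta_t a_t \le 2B \cdot B/\sqrt{A} = 2A^{-1/2}B^2$; when $\tau = 0$ the left-hand side is already at most $0$, so the bound holds trivially. I do not anticipate any substantive obstacle here: the only care needed is handling the boundary cases $\tau = 0$ and $\tau = T$, and making sure \pref{lemma:self-confident-tuning} is applied to the prefix of length $\tau$ rather than to the full sequence, so that its right-hand side is the quantity that is controlled by $B/\sqrt{A}$ via the definition of $\tau$.
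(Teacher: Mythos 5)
Your proof is correct and follows essentially the same route as the paper: the same crossover index $\tau = \max\{t : \eta_t^2 \ge A\}$, the same discarding of the non-positive tail and of the $-A/\eta_t$ penalty on the head, the same invocation of \pref{lemma:self-confident-tuning} on the prefix, and the same final substitution $\eta_\tau \ge \sqrt{A}$. Your explicit handling of the $\tau = 0$ edge case is a small tidying that the paper leaves implicit.
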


\begin{myLemma}
    \label{lemma:sum-at/t}
    Suppose non-negative sequence $\{a_t\}_{t=1}^T$. Define $a_{\max} = \max_{t\in[T]}a_t$ and assume $a_{\max}> 0$, then we have
    \begin{align*}
        \sum_{t=1}^{T}\frac{a_t}{t} \le a_{\max}\ln \left(1 + \frac{1}{a_{\max}} \sum_{t=1}^{T}a_t \right) + 2a_{\max}.
    \end{align*}
\end{myLemma}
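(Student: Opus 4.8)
The plan is to split the index range at a threshold chosen to balance two elementary bounds. Write $S \triangleq \sum_{t=1}^T a_t$; since at least one $a_t$ equals $a_{\max}>0$, we have $S \ge a_{\max}$, so $m \triangleq \lceil S/a_{\max}\rceil$ is a well-defined integer with $m \ge 1$. I would decompose
\[
\sum_{t=1}^T \frac{a_t}{t} \;=\; \sum_{t=1}^{\min\{m,T\}} \frac{a_t}{t} \;+\; \sum_{t=\min\{m,T\}+1}^{T} \frac{a_t}{t},
\]
where the second sum is empty when $m \ge T$.

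For the first (head) sum I would bound each $a_t$ by $a_{\max}$ and invoke the standard harmonic estimate $\sum_{t=1}^m \frac{1}{t} \le 1+\ln m$, giving a contribution of at most $a_{\max}(1+\ln m)$. For the second (tail) sum, every index there satisfies $t \ge m+1 > S/a_{\max}$, hence $\frac{1}{t} < a_{\max}/S$; summing the inequality $\frac{a_t}{t} < \frac{a_{\max}}{S}\,a_t$ over the tail and using $\sum_t a_t = S$ bounds this piece by $a_{\max}$. Adding the two contributions gives $\sum_{t=1}^T \frac{a_t}{t} \le a_{\max}\ln m + 2a_{\max}$, and since $m = \lceil S/a_{\max}\rceil \le 1 + S/a_{\max}$ we get $a_{\max}\ln m \le a_{\max}\ln\!\bigl(1 + \frac{1}{a_{\max}}\sum_{t=1}^T a_t\bigr)$, which is exactly the claimed bound.

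There is no genuine obstacle in this argument; the one point that needs a moment's thought is the choice of the split point $m \approx S/a_{\max}$ — making $m$ larger inflates the logarithmic head term while making it smaller leaves a tail exceeding $a_{\max}$, so $m = \lceil S/a_{\max}\rceil$ is the natural balance. The rest is bookkeeping: handling the empty tail when $m \ge T$, and absorbing the off-by-one from the ceiling into the $+1$ inside the logarithm.
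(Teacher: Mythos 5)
Your proof is correct and follows essentially the same route as the paper's: split the sum at $\tau=\lceil \sum_t a_t / a_{\max}\rceil$, bound the head via $a_t\le a_{\max}$ and the harmonic estimate $\sum_{t\le\tau}1/t\le 1+\ln\tau$, and bound the tail using $1/t\le a_{\max}/\sum_s a_s$. The only cosmetic differences are that you write $\min\{m,T\}$ explicitly while the paper notes $\tau\in[T]$ up front, and you invoke the discrete harmonic bound where the paper uses the equivalent integral comparison.
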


\begin{myLemma}
    \label{lemma:sum-a/t-1}
    Suppose $A>0$ and non-negative sequence $\{b_t\}_{t=1}^T$ and denote by $b_{\max}=\max_{t\in[T]}b_t>0$. Then it holds that
    \begin{align*}
        \sum_{t=1}^{T}\left(\frac{A}{t} - 1\right)b_t \le b_{\max}\cdot A\ln (1 + A).
    \end{align*}
\end{myLemma}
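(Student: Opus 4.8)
The plan is to dispose of a trivial regime and then control only the ``positive part'' of the sum via an integral comparison, since the summand $\bigl(\tfrac{A}{t}-1\bigr)b_t$ can only be positive while $t\le A$.

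\emph{Trivial case.} If $A<1$, then for every $t\ge 1$ the coefficient $\frac{A}{t}-1$ is strictly negative, so with $b_t\ge 0$ the left-hand side is nonpositive, whereas $b_{\max}A\ln(1+A)\ge 0$; the bound holds. From now on I would assume $A\ge 1$.

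\emph{Discard the nonpositive terms.} The coefficient $\frac{A}{t}-1$ is nonnegative exactly for $t\le A$, i.e.\ for $t\le n:=\lfloor A\rfloor$ (here $n\ge 1$). Since $0\le b_t\le b_{\max}$, only these indices can increase the sum, hence
\[
 \sum_{t=1}^{T}\Bigl(\frac{A}{t}-1\Bigr)b_t \;\le\; b_{\max}\sum_{t=1}^{n}\Bigl(\frac{A}{t}-1\Bigr).
\]

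\emph{Integral comparison.} Let $g(s):=\frac{A}{s}-1$, which is decreasing and nonnegative on $[1,A]$. I would keep the $t=1$ term explicitly (it equals $A-1$) and, for each $t\ge 2$, use monotonicity of $g$ in the form $g(t)\le\int_{t-1}^{t}g(s)\,\diff s$, so that
\[
 \sum_{t=1}^{n}\Bigl(\frac{A}{t}-1\Bigr)
 \;\le\;(A-1)+\int_{1}^{n}g(s)\,\diff s
 \;\le\;(A-1)+\int_{1}^{A}g(s)\,\diff s
 \;=\;(A-1)+\bigl(A\ln A-A+1\bigr)=A\ln A,
\]
where I used $\int_{1}^{A}\bigl(\tfrac{A}{s}-1\bigr)\,\diff s=A\ln A-A+1$ and $g\ge 0$ on $[n,A]$. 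Since $\ln A\le\ln(1+A)$, combining with the previous display gives the claim.

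\emph{Main obstacle.} This is essentially a calculus estimate, and the only place requiring care is matching the constant exactly. The crude harmonic bound $\sum_{t\le n}1/t\le 1+\ln n$ would only yield $A+A\ln n-n$, which is not $\le A\ln(1+A)$ in general; it is precisely isolating the $t=1$ term and letting the ``$-A+1$'' produced by $\int_{1}^{A}\bigl(\tfrac{A}{s}-1\bigr)\diff s$ cancel the ``$A-1$'' that collapses the bound to exactly $A\ln A$.
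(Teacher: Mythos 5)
Your proof is correct and essentially mirrors the paper's argument: both truncate the sum at $\lfloor A\rfloor$ (since the summand is nonpositive beyond that) and then use an integral comparison to obtain the sharper bound $A\ln A$, relaxing at the end to $A\ln(1+A)$. The only difference is implementation: you peel off the $t=1$ term and bound $\sum_{t=2}^n g(t)\le\int_1^n g(s)\,\diff s$ directly, whereas the paper applies the crude harmonic bound $\sum_{t\le\tau}1/t\le 1+\ln\tau$ to get $A+A\ln\tau-\tau$ and then observes that this expression, viewed as a function of $\tau>0$, has maximum value $A\ln A$ (attained at $\tau=A$). One small correction to your closing remark: you claim the crude harmonic bound ``would only yield $A+A\ln n-n$, which is not $\le A\ln(1+A)$ in general''---this is false. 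Since $A+A\ln\tau-\tau\le A\ln A$ for every $\tau>0$ (by the calculus argument just described), the crude bound does in fact suffice, and this is precisely the route the paper takes. Your more careful per-term integral comparison is not needed, though it is a perfectly valid alternative.
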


\begin{proof}[Proof of~\pref{lemma:virtual-clipping}]
    This proof is extracted from~\citet[Proof of Theorem 3]{NeurIPS'19:UniXGrad}. Define $\tau=\max\{t\in[T], \eta_t^2 \ge A\}$. Then we have:
    \begin{align*}
        \sum_{t=1}^{T} \sbr{ \eta_t - \frac{A}{\eta_t} }a_t &\le \sum_{t=1}^{\tau} \sbr{ \eta_t - \frac{A}{\eta_t} }a_t \le \sum_{t=1}^{\tau} \eta_t a_t = \sum_{t=1}^{\tau} \frac{B\cdot a_t}{\sqrt{1+\sum_{s=1}^{t}a_s}} \\
        &\overset{\eqref{eq:self-confident-tuning}}{\le} 2B\sqrt{\delta + \sum_{t=1}^{\tau}a_t} = \frac{2B^2}{\eta_{\tau}} \le 2 A^{-\frac{1}{2}}B^2,
    \end{align*}
    where the first and the last inequality is by definition of $\tau$.
\end{proof}

\begin{proof}[Proof of~\pref{lemma:sum-at/t}]
    Define $\tau = \lceil \frac{1}{a_{\max}} \sum_{t=1}^{T}a_t \rceil\in [T]$. We have
    \begin{align*}
        \sum_{t=1}^{\tau}\frac{a_t}{t} \le a_{\max} \sum_{t=1}^{\tau}\frac{1}{t} \le a_{\max}\left(1 + \int_{x=1}^{\tau}\frac{1}{x}\mathrm{d}x\right) \le a_{\max}\ln \left(1 + \frac{1}{a_{\max}} \sum_{t=1}^{T}a_t \right) + a_{\max}.
    \end{align*}
    If $\tau< T$, we also have
    \begin{align*}
        \sum_{t=\tau+1}^{T}\frac{a_t}{t} \le \frac{1}{\tau}\sum_{t=\tau+1}^{T}a_t \le \frac{a_{\max}}{\sum_{t=1}^{T}a_t} \sum_{t=1}^{T}a_t = a_{\max}.
    \end{align*}
\end{proof}

\begin{proof}[Proof of~\pref{lemma:sum-a/t-1}]
    Define $\tau = \min\{T, \lfloor A \rfloor\}$ and trivially assume $\tau \ge 1$, then we have
    \begin{align*}
        \frac{1}{b_{\max}}\sum_{t=1}^{T}\left(\frac{A}{t} - 1\right)b_t \le \sum_{t=1}^{\tau}\left(\frac{A}{t} - 1\right) \le A\left(1+\int_{s=1}^{\tau}\frac{1}{s}\mathrm{d}s\right) - \tau = A + A\ln \tau - \tau,
    \end{align*}
    whose maximum is $A\ln A \le A\ln(1+A)$.
\end{proof}

\end{document}